\crefname{equation}{}{}
\Crefname{equation}{}{}
\crefname{definition}{\textbf{definition}}{definitions}
\Crefname{definition}{Definition}{Definitions}
\crefname{assumption}{\textbf{assumption}}{assumptions}
\Crefname{assumption}{Assumption}{Assumptions}
\newtheorem{theorem}{Theorem}[section]
\newtheorem{lemma}[theorem]{Lemma}
\newtheorem{definition}[theorem]{Definition}
\newtheorem{condition}[theorem]{Condition}
\newtheorem{remark}[theorem]{Remark}
\newtheorem{assumption}[theorem]{Assumption}
\newtheorem{question}[theorem]{Question}
\def\E{\mathbb{E}}
\def\P{\mathbb{P}}
\def\h{\widetilde{h}}
\def\e{\bar{\epsilon}}
\def\p{\bar{\pi}^\star}
\begin{document}

\title{Near-Optimal Deployment Efficiency in Reward-Free Reinforcement Learning with Linear Function Approximation}
\author[1]{Dan Qiao}
\author[1]{Yu-Xiang Wang}
\affil[1]{Department of Computer Science, UC Santa Barbara}
\affil[ ]{\texttt{danqiao@ucsb.edu}, \;
	\texttt{yuxiangw@cs.ucsb.edu}}

\date{}

\maketitle

\begin{abstract}
 We study the problem of deployment efficient reinforcement learning (RL) with linear function approximation under the \emph{reward-free} exploration setting. This is a well-motivated problem because deploying new policies is costly in real-life RL applications. Under the linear MDP setting with feature dimension $d$ and planning horizon $H$, we propose a new algorithm that collects at most $\widetilde{O}(\frac{d^2H^5}{\epsilon^2})$ trajectories within $H$ deployments to identify $\epsilon$-optimal policy for any (possibly data-dependent) choice of reward functions. To the best of our knowledge, our approach is the first to achieve optimal deployment complexity and optimal $d$ dependence in sample complexity at the same time, even if the reward is known ahead of time. Our novel techniques include an exploration-preserving policy discretization and a generalized G-optimal experiment design, which could be of independent interest. Lastly, we analyze the related problem of regret minimization in low-adaptive RL and provide information-theoretic lower bounds for switching cost and batch complexity.
\end{abstract}

\newpage
\tableofcontents
\newpage


\section{Introduction}\label{sec:intro}

In many practical reinforcement learning (RL) based tasks, limited computing resources hinder applications of fully adaptive algorithms that frequently deploy new exploration policy. Instead, it is usually cheaper to collect data in large batches using the current policy deployment. Take recommendation system \citep{afsar2021reinforcement} as an instance, the system is able to gather plentiful new data in very short time, while the deployment of a new policy often takes longer time, as it requires extensive computing and human resources. Therefore, it is impractical to switch the policy based on instantaneous data as a typical RL algorithm would demand. A feasible alternative is to run a large batch of experiments in parallel and only decide whether to update the policy after the whole batch is complete. The same constraint also appears in other RL applications such as healthcare \citep{yu2021reinforcement}, robotics \citep{kober2013reinforcement} and new material design \citep{zhou2019optimization}. In those scenarios, the agent needs to minimize the number of policy deployment while learning a good policy using (nearly) the same number of trajectories as its fully-adaptive counterparts. On the empirical side, \citet{matsushima2020deployment} first proposed the notion \emph{deployment efficiency}. Later, \citet{huang2021towards} formally defined \emph{deployment complexity}. Briefly speaking, deployment complexity measures the number of policy deployments while requiring each deployment to have similar size. We measure the adaptivity of our algorithms via deployment complexity and leave its formal definition to Section \ref{sec:setup}.

Under the purpose of deployment efficiency, the recent work by \citet{qiao2022sample} designed an algorithm that could solve reward-free exploration in $O(H)$ deployments. However, their sample complexity $\widetilde{O}(|\mathcal{S}|^2|\mathcal{A}|H^5/\epsilon^2)$, although being near-optimal under the tabular setting, can be unacceptably large under real-life applications where the state space is enormous or continuous. For environments with large state space, function approximations are necessary for representing the feature of each state. Among existing work that studies function approximation in RL, linear function approximation is arguably the simplest yet most fundamental setting. In this paper, we study deployment efficient RL with linear function approximation under the reward-free setting, and we consider the following question:

\begin{question}
Is it possible to design deployment efficient and sample efficient reward-free RL algorithms with linear function approximation?
\end{question}

\begin{table*}\label{tab:comparison}
\centering
\resizebox{\linewidth}{!}{
\begin{tabular}{ |c|c|c| } 
\hline
\textit{Algorithms for reward-free RL} & \textit{Sample complexity}  & \textit{Deployment complexity} \\
\hline 
Algorithm 1 $\&$ 2 in \citet{wang2020reward} & $\widetilde{O}(\frac{d^3H^6}{\epsilon^2})$   & $\widetilde{O}(\frac{d^3H^6}{\epsilon^2})$\\ 

FRANCIS \citep{zanette2020provably}$^\ddag$ & $\widetilde{O}(\frac{d^3H^5}{\epsilon^2})$ & $\widetilde{O}(\frac{d^3H^5}{\epsilon^2})$ \\ 

RFLIN \citep{wagenmaker2022reward}$^\ddag$ & $\widetilde{O}(\frac{d^2H^5}{\epsilon^2})$  & $\widetilde{O}(\frac{d^2H^5}{\epsilon^2})$ \\  

Algorithm 2 $\&$ 4 in \citet{huang2021towards}$^\ddag$ & $\widetilde{O}(\frac{d^3H^5}{\epsilon^2\nu_{\min}^2})^*$  & $H$ \\

LARFE \citep{qiao2022sample}$^\dagger$ & $\widetilde{O}(\frac{S^2AH^5}{\epsilon^2})$ & $2H$ \\

\textcolor{blue}{Our Algorithm \ref{alg:main} $\&$ \ref{alg:plan} (Theorem \ref{thm:main})$ ^\ddag$} & \textcolor{blue}{$\widetilde{O}(\frac{d^2H^5}{\epsilon^2})$}  & \textcolor{blue}{$H$} \\

\textcolor{blue}{Our Algorithm \ref{alg:main} $\&$ \ref{alg:plan} (Theorem \ref{thm:71})$^\star$} & \textcolor{blue}{$\widetilde{O}(\frac{S^2AH^5}{\epsilon^2})$}  & \textcolor{blue}{$H$} \\
\hline
Lower bound \citep{wagenmaker2022reward} & $\Omega(\frac{d^2H^2}{\epsilon^2})$ & N.A.    \\
\hline
Lower bound \citep{huang2021towards} & If polynomial sample  &  $\widetilde{\Omega}(H)$    \\
\hline
\end{tabular}
}
\caption{Comparison of our results (in \textcolor{blue}{blue}) to existing work regarding sample complexity and deployment complexity. We highlight that our results match the best known results for both sample complexity and deployment complexity at the same time. $^\ddag$: We ignore the lower order terms in sample complexity for simplicity. $*$: $\nu_{min}$ is the problem-dependent reachability coefficient which is upper bounded by $1$ and can be arbitrarily small. $\dagger$: This work is done under tabular MDP and we transfer the $O(HSA)$ switching cost to $2H$ deployments. $\star$: When both our algorithms are applied under tabular MDP, we can replace one $d$ in sample complexity by $S$.}
\end{table*}

\noindent\textbf{Our contributions.} In this paper, we answer the above question affirmatively by constructing an algorithm with near-optimal deployment and sample complexities. Our contributions are threefold. 

\begin{itemize}
\itemsep0em
\item A new layer-by-layer type algorithm (Algorithm~\ref{alg:main}) for reward-free RL that achieves deployment complexity of $H$ and sample complexity of $\widetilde{O}(\frac{d^2H^5}{\epsilon^2})$. Our deployment complexity is optimal while sample complexity has optimal dependence in $d$ and $\epsilon$. In addition, when applied to tabular MDP, our sample complexity (Theorem \ref{thm:71}) recovers best known result $\widetilde{O}(\frac{S^2AH^5}{\epsilon^2})$.

\item We generalize G-optimal design and select near-optimal policy via uniform policy evaluation on a finite set of \emph{representative policies} instead of using optimism and LSVI. Such technique helps tighten our sample complexity and may be of independent interest.   

\item We show that ``No optimal-regret online learners can be deployment efficient'' and deployment efficiency is incompatible with the highly relevant regret minimization setting. For regret minimization under linear MDP, we present lower bounds (Theorem \ref{thm:lowerswitch} and \ref{thm:lowerbatch}) for other measurements of adaptivity: switching cost and batch complexity.
\end{itemize}

\subsection{Closely related works}\label{sec:crw}

There is a large and growing body of literature on the statistical theory of reinforcement learning that we will not attempt to thoroughly review. Detailed comparisons with existing work on reward-free RL \citep{wang2020reward,zanette2020provably,wagenmaker2022reward,huang2021towards,qiao2022sample} are given in Table~\ref{tab:comparison}. For more discussion of relevant literature, please refer to Appendix~\ref{sec:erw} and the references therein. Notably, all existing algorithms under linear MDP either admit fully adaptive structure (which leads to deployment inefficiency) or suffer from sub-optimal sample complexity. In addition, when applied to tabular MDP, our algorithm has the same sample complexity and slightly better deployment complexity compared to \citet{qiao2022sample}. 

The deployment efficient setting is slightly different from other measurements of adaptivity. The low switching setting \citep{bai2019provably} restricts the number of policy updates, while the agent can decide whether to update the policy after collecting every single trajectory. This can be difficult to implement in practical applications. A more relevant setting, the batched RL setting \citep{zhang2022near} requires decisions about policy changes to be made at only a few (often predefined) checkpoints. Compared to batched RL, the requirement of deployment efficiency is stronger by requiring each deployment to collect the same number of trajectories. Therefore, deployment efficient algorithms are easier to deploy in parallel \citep[see, e.g.,][for a more elaborate discussion]{huang2021towards}. Lastly, we remark that our algorithms also work under the batched RL setting by running in $H$ batches.

Technically, our method is inspired by optimal experiment design -- a well-developed research area from statistics. In particular, a major technical contribution of this paper is to solve a variant of G-optimal experiment design while solving exploration in RL at the same time. \citet{zanette2020provably,wagenmaker2022reward} choose policy through \emph{online experiment design}, i.e., running no-regret online learners to select policies adaptively for approximating the optimal design. Those online approaches, however, cannot be applied under our problem due to the requirement of deployment efficiency. To achieve deployment complexity of $H$, we can only deploy one policy for each layer, so we need to decide the policy based on sufficient exploration for only previous layers. Therefore, our approach requires \emph{offline experiment design} and thus raises substantial technical challenge.

\noindent\textbf{A remark on technical novelty.} The general idea behind previous RL algorithms with low adaptivity is optimism and doubling schedule for updating policies that originates from UCB2 \citep{auer2002finite}. The doubling schedule, however, can not provide optimal deployment complexity. Different from those approaches, we apply layer-by-layer exploration to achieve the optimal deployment complexity, and our approach is highly non-trivial. Since we can only deploy one policy for each layer, there are two problems to be solved: the existence of a single policy that can explore all directions of a specific layer and how to find such policy. We generalize G-optimal design to show the existence of such \emph{explorative policy}. Besides, we apply exploration-preserving \emph{policy discretization} for approximating our generalized G-optimal design. We leave detailed discussions about these techniques to Section \ref{sec:tec}.



\section{Problem setup}\label{sec:setup}

\noindent\textbf{Notations.} Throughout the paper, for $n\in\mathbb{Z}^{+}$, $[n]=\{1,2,\cdots,n\}$. We denote $\|x\|_{\Lambda}=\sqrt{x^\top \Lambda x}$.  For matrix $X\in\mathbb{R}^{d\times d}$, $\|\cdot\|_2$, $\|\cdot\|_F$, $\lambda_{\min}(\cdot)$, $\lambda_{\max}(\cdot)$ denote the operator norm, Frobenius norm, smallest eigenvalue and largest eigenvalue, respectively. For policy $\pi$, $\E_\pi$ and $\P_\pi$ denote the expectation and probability measure induced by $\pi$ under the MDP we consider. For any set $U$, $\Delta(U)$ denotes the set of all possible distributions over $U$. In addition, we use standard notations such as $O$ and $\Omega$ to absorb constants while $\widetilde{O}$ and $\widetilde{\Omega}$ suppress logarithmic factors.

\noindent\textbf{Markov Decision Processes.} We consider finite-horizon episodic \emph{Markov Decision Processes} (MDP) with non-stationary transitions, denoted by a tuple $\mathcal{M}=(\mathcal{S}, \mathcal{A}, H, P_h, r_h)$ \citep{sutton1998reinforcement}, where $\mathcal{S}$ is the state space, $\mathcal{A}$ is the action space and $H$ is the horizon. The non-stationary transition kernel has the form $P_h:\mathcal{S}\times\mathcal{A}\times\mathcal{S} \mapsto [0, 1]$  with $P_{h}(s^{\prime}|s,a)$ representing the probability of transition from state $s$, action $a$ to next state $s'$ at time step $h$. In addition, $r_h(s,a)\in\Delta([0,1])$ denotes the corresponding distribution of reward.\footnote{We abuse the notation $r$ so that $r$ also denotes the expected (immediate) reward function.} Without loss of generality, we assume there is a fixed initial state $s_{1}$.\footnote{The generalized case where the initial distribution is an arbitrary distribution can be recovered from this setting by adding one layer to the MDP.} A policy can be seen as a series of mapping $\pi=(\pi_1,\cdots,\pi_H)$, where each $\pi_h$ maps each state $s \in \mathcal{S}$ to a probability distribution over actions, \emph{i.e.} $\pi_h: \mathcal{S}\rightarrow \Delta(\mathcal{A})$, $\forall\, h\in[H]$. A random trajectory $ (s_1, a_1, r_1, \cdots, s_H,a_H,r_H,s_{H+1})$ is generated by the following rule: $s_1$ is fixed, $a_h \sim \pi_h(\cdot|s_h), r_h \sim r_h(s_h, a_h), s_{h+1} \sim P_h (\cdot|s_h, a_h), \forall\, h \in [H]$. 

\noindent\textbf{$Q$-values, Bellman (optimality) equations.} Given a policy $\pi$ and any $h\in[H]$, the value function $V^\pi_h(\cdot)$ and Q-value function $Q^\pi_h(\cdot,\cdot)$ are defined as:
$
V^\pi_h(s)=\mathbb{E}_\pi[\sum_{t=h}^H r_{t}|s_h=s] ,
Q^\pi_h(s,a)=\mathbb{E}_\pi[\sum_{t=h}^H  r_{t}|s_h,a_h=s,a],\;\forall\, s,a\in\mathcal{S}\times\mathcal{A}.
$ Besides, the value function and Q-value function with respect to the optimal policy $\pi^\star$ is denoted by $V^\star_h(\cdot)$ and $Q^\star_h(\cdot,\cdot)$.  
Then Bellman (optimality) equation follows $\forall\, h\in[H]$:
\begin{align*}
&Q^\pi_h(s,a)=r_{h}(s,a)+P_{h}(\cdot|s,a)V^\pi_{h+1},\;\;V^\pi_h=\mathbb{E}_{a\sim\pi_h}[Q^\pi_h],\\
&Q^\star_h(s,a)=r_{h}(s,a)+P_{h}(\cdot|s,a)V^\star_{h+1},\; V^\star_h=\max_a Q^\star_h(\cdot,a).
\end{align*}
In this work, we consider the reward-free RL setting, where there may be different reward functions. Therefore, we denote the value function of policy $\pi$ with respect to reward $r$ by $V^\pi(r)$. Similarly, $V^\star(r)$ denotes the optimal value under reward function $r$. We say that a policy $\pi$ is $\epsilon$-optimal with respect to $r$ if $V^\pi(r)\geq V^\star(r)-\epsilon$.

\noindent\textbf{Linear MDP \citep{jin2020provably}.} An episodic MDP $(\mathcal{S},\mathcal{A},H,P,r)$ is a linear MDP with known feature map $\phi:\mathcal{S}\times\mathcal{A}\rightarrow \mathbb{R}^d$ if there exist $H$ unknown signed measures $\mu_h\in\mathbb{R}^d$ over $\mathcal{S}$ and $H$ unknown reward vectors $\theta_h\in\mathbb{R}^d$ such that 
	\[
	{P}_{h}\left(s^{\prime} \mid s, a\right)=\left\langle\phi(s, a), \mu_{h}\left(s^{\prime}\right)\right\rangle, \quad r_{h}\left(s, a\right) =\left\langle\phi(s, a), \theta_{h}\right\rangle,\quad\forall\, (h,s,a,s^{\prime})\in[H]\times\mathcal{S}\times\mathcal{A}\times\mathcal{S}.
	\]
	Without loss of generality, we assume $\|\phi(s,a)\|_2\leq 1$ for all $s,a$; and for all $h\in[H]$, $\|\mu_h(\mathcal{S})\|_2\leq\sqrt{d}$, $\|\theta_h\|_2\leq \sqrt{d}$. 
	
	For policy $\pi$, we define $\Lambda_{\pi,h}:=\E_\pi [\phi(s_h,a_h)\phi(s_h,a_h)^\top]$, the \emph{expected covariance matrix} with respect to policy $\pi$ and time step $h$ (here $s_h,a_h$ follows the distribution induced by policy $\pi$). Let $\lambda^\star=\min_{h\in[H]} \sup_{\pi}\lambda_{\min}(\Lambda_{\pi,h})$. We make the following assumption regarding explorability.

\begin{assumption}[Explorability of all directions]\label{assup1}
    The linear MDP we have satisfies $\lambda^\star>0$.
\end{assumption}	

We remark that Assumption \ref{assup1} only requires the existence of a (possibly non-Markovian) policy to visit all directions for each layer and it is analogous to other explorability assumptions in papers about RL under linear representation \citep{zanette2020provably,huang2021towards,wagenmaker2022instance}. In addition, the parameter $\lambda^\star$ only appears in lower order terms of sample complexity bound and our algorithms do not take $\lambda^\star$ as an input.
	
\noindent\textbf{Reward-Free RL.}
The reward-free RL setting contains two phases, the exploration phase and the planning phase. Different from PAC RL\footnote{Also known as reward-aware RL, which aims to identify near optimal policy given reward function.} setting, the learner does not observe the rewards during the exploration phase. Besides, during the planning phase, the learner has to output a near-optimal policy for any valid reward functions. More specifically, the procedure is:
\begin{enumerate}
    \item Exploration phase: Given accuracy $\epsilon$ and failure probability $\delta$, the learner explores an MDP for $K(\epsilon,\delta)$ episodes and collects the trajectories without rewards $\{s_h^k,a_h^k\}_{(h,k)\in[H]\times[K]}$.
    \item Planning phase: The learner outputs a function $\widehat{\pi}(\cdot)$ which takes reward function as input. The function $\widehat{\pi}(\cdot)$ satisfies that for any valid reward function $r$, $V^{\widehat{\pi}(r)}(r)\geq V^\star(r)-\epsilon$. 
\end{enumerate}
The goal of reward-free RL is to design a procedure that satisfies the above guarantee with probability at least $1-\delta$ while collecting as few episodes as possible. According to the definition, any procedure satisfying the above guarantee is provably efficient for PAC RL setting.

\noindent\textbf{Deployment Complexity.} In this work, we measure the adaptivity of our algorithm through deployment complexity, which is defined as:

\begin{definition}[Deployment complexity \citep{huang2021towards}] We say that an algorithm has deployment complexity of $M$, if the algorithm is guaranteed to finish running within $M$ deployments. In addition, the algorithm is only allowed to collect at most $N$ trajectories during each deployment, where $N$ should be fixed a priori and cannot change adaptively.
\end{definition}

We consider the deployment of non-Markovian policies (i.e. mixture of deterministic policies) \citep{huang2021towards}. The requirement of deployment efficiency is stronger than batched RL \citep{zhang2022near} or low switching RL \citep{bai2019provably}, which makes deployment-efficient algorithms more practical in real-life applications. For detailed comparison between these definitions, please refer to Section \ref{sec:crw} and Appendix \ref{sec:erw}.


\section{Technique overview}\label{sec:tec}

In order to achieve the optimal deployment complexity of $H$, we apply layer-by-layer exploration. More specifically, we construct a single policy $\pi_h$ to explore layer $h$ based on previous data. Following the general methods in reward-free RL \citep{wang2020reward,wagenmaker2022reward}, we do exploration through minimizing uncertainty. As will be made clear in the analysis, given exploration dataset $\mathcal{D}=\{s_h^n,a_h^n\}_{h,n\in[H]\times[N]}$, the uncertainty of layer $h$ with respect to policy $\pi$ can be characterized by $\E_\pi \|\phi(s_h,a_h)\|_{\Lambda_h^{-1}}$, where $\Lambda_h=I+\sum_{n=1}^N \phi(s_h^n,a_h^n)\phi(s_h^n,a_h^n)^\top$ is (regularized and unnormalized) \emph{empirical covariance matrix}. Note that although we can not directly optimize $\Lambda_h$, we can maximize the expectation $N_{\pi_h}\cdot\E_{\pi_h}[\phi_h\phi_h^\top]$ ($N_{\pi_h}$ is the number of trajectories we apply $\pi_h$) by optimizing the policy $\pi_h$. Therefore, to minimize the uncertainty with respect to some policy set $\Pi$, we search for an \emph{explorative policy} $\pi_0$ to minimize $\max_{\pi\in\Pi} \E_\pi \phi(s_h,a_h)(\E_{\pi_0}\phi_h\phi_h^\top)^{-1}\phi(s_h,a_h)$. 


\subsection{Generalized G-optimal design}\label{sec:ggod}

For the minimization problem above, traditional G-optimal design handles the case where each deterministic policy $\pi$ generates some $\phi_\pi$ at layer $h$ with probability $1$ (i.e. we directly choose $\phi$ instead of choosing $\pi$), as is the case under deterministic MDP. However, traditional G-optimal design cannot tackle our problem since under general linear MDP, each $\pi$ will generate a distribution over the feature space instead of a single feature vector. We generalize G-optimal design and show that for any policy set $\Pi$, the following Theorem \ref{thm:optimal1} holds. More details are deferred to Appendix \ref{sec:design}.

\begin{theorem}[Informal version of Theorem \ref{thm:optiaml}]\label{thm:optimal1}
If there exists policy $\pi_0\in\Delta(\Pi)$ such that $\lambda_{\min}(\E_{\pi_0}\phi_h\phi_h^\top)>0$, then $\min_{\pi_0\in\Delta(\Pi)}\max_{\pi\in\Pi} \E_\pi \phi(s_h,a_h)(\E_{\pi_0}\phi_h\phi_h^\top)^{-1}\phi(s_h,a_h)\leq d$.
\end{theorem}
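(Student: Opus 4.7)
The plan is to adapt the classical Kiefer--Wolfowitz equivalence between $G$-optimal and $D$-optimal experiment design, generalized so that each ``design point'' $\pi\in\Pi$ contributes not a single vector but the entire expected outer product $\Sigma_\pi:=\E_\pi[\phi(s_h,a_h)\phi(s_h,a_h)^\top]$. Writing $M(\pi_0):=\E_{\pi_0}[\phi_h\phi_h^\top]=\sum_{\pi}\pi_0(\pi)\Sigma_\pi$, the elementary trace identity
\[
\E_\pi\bigl[\phi(s_h,a_h)^\top M(\pi_0)^{-1}\phi(s_h,a_h)\bigr]=\mathrm{tr}\bigl(M(\pi_0)^{-1}\Sigma_\pi\bigr)
\]
reduces the theorem to exhibiting a single $\pi_0^\star\in\Delta(\Pi)$ with $\mathrm{tr}(M(\pi_0^\star)^{-1}\Sigma_\pi)\le d$ for every $\pi\in\Pi$.

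First I would take $\pi_0^\star$ to be a $D$-optimal design, namely a maximizer of $F(\pi_0):=\log\det M(\pi_0)$ over $\Delta(\Pi)$. The explorability hypothesis $\lambda_{\min}(M(\pi_0))>0$ for some $\pi_0\in\Delta(\Pi)$ guarantees that $\sup F>-\infty$, and since $\|\phi\|_2\le 1$ every achievable $M(\pi_0)$ is bounded in operator norm; together these make the problem nondegenerate. The achievable values of $M(\pi_0)$ lie in the convex hull of $\{\Sigma_\pi:\pi\in\Pi\}$ inside the $\tfrac{d(d+1)}{2}$-dimensional space of symmetric matrices, so by Carath\'eodory's theorem it suffices to optimize over distributions supported on at most $\tfrac{d(d+1)}{2}+1$ policies; on this reduced, compact feasible set the continuous function $F$ attains its maximum, delivering $\pi_0^\star$.

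Next I would extract the first-order optimality condition. For any $\pi'\in\Pi$ and $t\in[0,1]$ let $\pi_0^t:=(1-t)\pi_0^\star+t\delta_{\pi'}$, so that $M(\pi_0^t)=M(\pi_0^\star)+t\bigl(\Sigma_{\pi'}-M(\pi_0^\star)\bigr)$. Applying Jacobi's formula $\frac{d}{dt}\log\det(A+tB)\big|_{t=0}=\mathrm{tr}(A^{-1}B)$ gives
\[
\left.\tfrac{d}{dt}F(\pi_0^t)\right|_{t=0^+}=\mathrm{tr}\bigl(M(\pi_0^\star)^{-1}\Sigma_{\pi'}\bigr)-\mathrm{tr}(I_d)=\mathrm{tr}\bigl(M(\pi_0^\star)^{-1}\Sigma_{\pi'}\bigr)-d.
\]
Since $\pi_0^\star$ maximizes $F$ and $\pi_0^t$ is a feasible perturbation, the right-hand side must be $\le 0$ for every $\pi'\in\Pi$, which is exactly the desired bound $\max_{\pi'\in\Pi}\mathrm{tr}(M(\pi_0^\star)^{-1}\Sigma_{\pi'})\le d$.

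The main obstacle I expect is the existence step when $\Pi$ is an arbitrary (possibly infinite, non-topologized) policy class: one needs the maximizer to live in $\Delta(\Pi)$ itself rather than merely in the closed convex hull of $\{\Sigma_\pi\}$. The Carath\'eodory reduction resolves this cleanly, because every element of the (unclosed) convex hull already admits a finitely supported representative inside $\Delta(\Pi)$, and the compactness needed for attainment transfers to this finite-support slice. All remaining pieces---the trace identity, the Jacobi derivative, and upgrading a single one-sided inequality to a uniform bound over $\pi'\in\Pi$---are routine linear-algebraic manipulations and should pose no difficulty.
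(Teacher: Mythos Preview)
Your proposal is correct and follows essentially the same route as the paper: choose $\pi_0^\star$ to be a $D$-optimal design (the maximizer of $\log\det M(\pi_0)$), compute the directional derivative via Jacobi's formula to obtain $\mathrm{tr}(M(\pi_0^\star)^{-1}\Sigma_{\pi'})-d$, and conclude from first-order optimality that this is nonpositive for every $\pi'\in\Pi$. The only difference is that the paper sidesteps the existence issue by working with finite $\Pi$ and finite feature support (deferring the general case to a limiting argument), whereas you handle it directly through the Carath\'eodory reduction; your treatment is a bit more thorough on that point but otherwise the arguments coincide.
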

Generally speaking, Theorem \ref{thm:optimal1} states that for any $\Pi$, there exists a single policy from $\Delta(\Pi)$ (i.e., mixture of several policies in $\Pi$) that can efficiently reduce the uncertainty with respect to $\Pi$. Therefore, assume we want to minimize the uncertainty with respect to $\Pi$ and we are able to derive the solution $\pi_0$ of the minimization above, we can simply run $\pi_0$ repeatedly for several episodes. 

However, there are two gaps between Theorem \ref{thm:optimal1} and our goal of reward free RL. First, under the Reinforcement Learning setting, the association between policy $\pi$ and the corresponding distribution of $\phi_h$ is unknown, which means we need to approximate the above minimization. It can be done by estimating the two expectations and we leave the discussion to Section \ref{sec:newapproach}. The second gap is about choosing appropriate $\Pi$ in Theorem \ref{thm:optimal1}, for which a natural idea is to use the set of all policies. It is however infeasible to simultaneously estimate the expectations for all $\pi$ accurately. The size of $\{\text{all policies}\}$ is infinity and $\Delta(\{\text{all policies}\})$ is even bigger. It seems intractable to control its complexity using existing uniform convergence techniques (e.g., a covering number argument).



\subsection{Discretization of policy set}\label{sec:dis}

The key realization towards a solution to the above problem is that we do not need to consider the set of all policies.  It suffices to consider a smaller subset $\Pi$ that is more amenable to an $\epsilon$-net argument. This set needs to satisfy a few conditions. 
\begin{enumerate}[leftmargin=0cm,itemindent=.5cm,labelwidth=\itemindent,labelsep=0cm,align=left]
    \item[(1)] Due to condition in Theorem \ref{thm:optimal1}, $\Pi$ should contain explorative policies covering all directions.
    \item[(2)]$\Pi$ should contain a \emph{representative policy set}  $\Pi^{eval}$ such that it contains a near-optimal policy for any reward function.
    \item[(3)] Since we apply \emph{offline experimental design} via approximating the expectations, $\Pi$ must be ``small'' enough for a uniform-convergence argument to work. 
\end{enumerate}
We show that we can construct a finite set $\Pi$ with $|\Pi|$ being small enough while satisfying Condition (1) and (2). 
More specifically, given the feature map $\phi(\cdot,\cdot)$ and the desired accuracy $\epsilon$, we can construct the \emph{explorative policy set} $\Pi^{exp}_{\epsilon}$ such that $\log(|\Pi^{exp}_{\epsilon,h}|)\leq \widetilde{O}(d^2\log(1/\epsilon))$, where $\Pi^{exp}_{\epsilon,h}$ is the policy set for layer $h$. In addition, when $\epsilon$ is small compared to $\lambda^\star$, we have $\sup_{\pi\in\Delta(\Pi^{exp}_{\epsilon})}\lambda_{\min}(\E_\pi \phi_h\phi_h^\top)\geq \widetilde{\Omega}(\frac{(\lambda^\star)^2}{d})$, which verifies Condition (1).\footnote{For more details about explorative policies, please refer to Appendix \ref{sec:exploration}.} Plugging in $\Pi^{exp}_{\epsilon}$ and approximating the minimization problem, after the exploration phase we will be able to estimate the value functions of all $\pi\in\Pi^{exp}_{\epsilon}$ accurately.

It remains to check Condition (2) by formalizing the \emph{representative policy set} discussed above. From $\Pi^{exp}_{\epsilon}$, we can further select a subset, and we call it \emph{policies to evaluate}: $\Pi^{eval}_{\epsilon}$. It satisfies that $\log(|\Pi^{eval}_{\epsilon,h}|)=\widetilde{O}(d\log(1/\epsilon))$ while for any possible linear MDP with feature map $\phi(\cdot,\cdot)$, $\Pi^{eval}_{\epsilon}$ is guaranteed to contain one $\epsilon$-optimal policy. As a result, it suffices to estimate the value functions of all policies in $\Pi^{eval}_{\epsilon}$ and output the greedy one with the largest estimated value.\footnote{For more details about policies to evaluate, please refer to Appendix \ref{sec:evaluation}.}

\subsection{New approach to estimate value function}\label{sec:newapproach}

Now that we have a discrete policy set, we still need to estimate the two expectations in Theorem \ref{thm:optimal1}. We design a new algorithm (Algorithm \ref{alg:estimateV}, details can be found in Appendix \ref{sec:gafev}) based on the technique of LSVI \citep{jin2020provably} to estimate $\E_\pi r(s_h,a_h)$ given policy $\pi$, reward $r$ and exploration data. Algorithm \ref{alg:estimateV} can estimate the expectations accurately simultaneously for all $\pi\in\Pi^{exp}$ and $r$ (that appears in the minimization problem) given sufficient exploration of the first $h-1$ layers. Therefore, under our layer-by-layer exploration approach, after adequate exploration for the first $h-1$ layers, Algorithm \ref{alg:estimateV} provides accurate estimations for $\E_{\pi_0} \phi_h\phi_h^\top$ and $\E_{\pi}[\phi(s_h,a_h)^\top(\widehat{\E}_{\pi_0} \phi_h\phi_h^\top)^{-1}\phi(s_h,a_h)]$. As a result, the \eqref{equ:main} we solve serves as an accurate approximation of the minimization problem in Theorem \ref{thm:optimal1} and the solution $\pi_h$ of \eqref{equ:main} is provably efficient in exploration.

Finally, after sufficient exploration of all $H$ layers, the last step is to estimate the value functions of all policies in $\Pi^{eval}$. We design a slightly different algorithm (Algorithm \ref{alg:estimatevalue}, details in Appendix \ref{sec:estimateV}) for this purpose. Based on LSVI, Algorithm \ref{alg:estimatevalue} takes $\pi\in\Pi^{eval}$ and reward function $r$ as input, and estimates $V^\pi(r)$ accurately given sufficient exploration for all $H$ layers.


\section{Algorithms}\label{sec:alg}

In this section, we present our main algorithms. The algorithm for the exploration phase is Algorithm \ref{alg:main} which formalizes the ideas in Section \ref{sec:tec}, while the planning phase is presented in Algorithm \ref{alg:plan}.

\begin{algorithm}[H]
	\caption{Layer-by-layer Reward-Free Exploration via Experimental Design (Exploration)}
	\label{alg:main}
    \begin{algorithmic}[1]
			\STATE {\bfseries Input:} Accuracy $\epsilon$. Failure probability $\delta$.  
			\STATE {\bfseries Initialization:} $\iota=\log(dH/\epsilon\delta)$. Error budget for each layer $\e=\frac{C_1 \epsilon}{H^2\sqrt{d}\cdot\iota}$. Construct $\Pi^{exp}_{\epsilon/3}$ as in Section \ref{sec:dis}. Number of episodes for each deployment $N=\frac{C_2 d\iota}{\e^2}=\frac{C_2d^2 H^4\iota^3}{C_1^2\epsilon^2}$. Dataset $\mathcal{D}=\emptyset$.
			\FOR{$h=1,2,\cdots,H$}
			\STATE Solve the following optimization problem.
			\STATE \begin{equation}\label{equ:main}
			\pi_h=\underset{\pi\in\Delta(\Pi^{exp}_{\epsilon/3})\, \text{s.t.}\, \lambda_{\min}(\widehat{\Sigma}_\pi)\geq C_3 d^2 H\e\iota}{\operatorname*{argmin}} \underset{\widehat{\pi}\in\Pi^{exp}_{\epsilon/3}}{\max} \widehat{\E}_{\widehat{\pi}}\left[\phi(s_h,a_h)^\top(N\cdot\widehat{\Sigma}_\pi)^{-1}\phi(s_h,a_h)\right],
			\end{equation}
			\STATE where $\widehat{\Sigma}_\pi$ is $\widehat{\E}_\pi [\phi(s_h,a_h)\phi(s_h,a_h)^\top]=\textsf{EstimateER}(\pi,\phi(s,a)\phi(s,a)^\top,A=1,h,\mathcal{D},s_1)$,
			\STATE $\widehat{\E}_{\widehat{\pi}}\left[\phi(s_h,a_h)^\top(N\cdot\widehat{\Sigma}_\pi)^{-1}\phi(s_h,a_h)\right]= \textsf{EstimateER}(\widehat{\pi},\phi(s,a)^\top(N\cdot\widehat{\Sigma}_\pi)^{-1}\phi(s,a),A=\frac{\bar{\epsilon}}{C_2 d^3H\iota^2},h,\mathcal{D},s_1)$. \textcolor{blue}{\;\; // Both expectations are estimated via Algorithm \ref{alg:estimateV}.}
			\FOR{$n=1,2,\cdots,N$} 
			\STATE Run $\pi_h$ and add trajectory $\{s_i^n,a_i^n\}_{i\in[H]}$ to $\mathcal{D}$. \textcolor{blue}{\;\;// Run Policy $\pi_h$ for $N$ episodes.}
		    \ENDFOR
			\ENDFOR
			
			\STATE {\bfseries Output: Dataset $\mathcal{D}$. } 
	\end{algorithmic}
\end{algorithm}

\noindent\textbf{Exploration Phase.} We apply layer-by-layer exploration and $\pi_h$ is the stochastic policy we deploy to explore layer $h$. 
For solving $\pi_h$, we approximate generalized G-optimal design via \eqref{equ:main}. For each candidate $\pi$ and $\widehat{\pi}$, we estimate the two expectations by calling \textsf{EstimateER} (Algorithm \ref{alg:estimateV}, details in Appendix \ref{sec:gafev}).
\textsf{EstimateER} is a generic subroutine for estimating the value function under a particular reward design. We estimate the two expectations of interest by carefully choosing one specific reward design for each coordinate separately, so that the resulting value function provides an estimate to the desired quantity in that coordinate. \footnote{For $\widehat{\Sigma}_\pi$, what we need to handle is matrix reward $\phi_h\phi_h^\top$ and stochastic policy $\pi\in\Delta(\Pi^{exp}_{\epsilon/3})$, we apply generalized version of Algorithm \ref{alg:estimateV} to tackle this problem as discussed in Appendix \ref{sec:detail}.} As mentioned above and will be made clear in the analysis, given adequate exploration of the first $h-1$ layers, all estimations will be accurate and the surrogate policy $\pi_h$ is sufficiently explorative for all directions at layer $h$.

The restriction on $\lambda_{\min}(\widehat{\Sigma}_\pi)$ is for technical reason only, and we will show that under the assumption in Theorem \ref{thm:main}, there exists valid solution of \eqref{equ:main}. Lastly, we remark that solving \eqref{equ:main} is inefficient in general. Detailed discussions about computation are deferred to Section \ref{sec:compute}.  

\begin{algorithm}[H]
	\caption{Find Near-Optimal Policy Given Reward Function (Planning)}
	\label{alg:plan}
    \begin{algorithmic}[1]
			\STATE {\bfseries Input:} Dataset $\mathcal{D}$ from Algorithm \ref{alg:main}. Feasible linear reward function $r=\{r_h\}_{h\in[H]}$.
			\STATE {\bfseries Initialization:} Construct $\Pi^{eval}_{\epsilon/3}$ as in Section \ref{sec:dis}. \textcolor{blue}{\;\;// The set of policies to evaluate.}
			\FOR{$\pi\in\Pi^{eval}_{\epsilon/3}$}
			\STATE $\widehat{V}^{\pi}(r)=\text{EstimateV}(\pi,r,\mathcal{D},s_1)$. \textcolor{blue}{\;\;// Estimate value functions using Algorithm \ref{alg:estimatevalue}.}
			\ENDFOR
			\STATE $\widehat{\pi}=\arg\max_{\pi\in\Pi^{eval}_{\epsilon/3}}\widehat{V}^{\pi}(r)$. \textcolor{blue}{\;\;// Output the greedy policy w.r.t $\widehat{V}^{\pi}(r)$.}
			\STATE {\bfseries Output: Policy $\widehat{\pi}$.}
	\end{algorithmic}
\end{algorithm}

\noindent\textbf{Planning Phase.} The output dataset $\mathcal{D}$ from the exploration phase contains sufficient information for the planning phase. In the planning phase (Algorithm \ref{alg:plan}), we construct a set of policies to evaluate and repeatedly apply Algorithm \ref{alg:estimatevalue} (in Appendix \ref{sec:estimateV}) to estimate the value function of each policy given reward function. Finally, Algorithm \ref{alg:plan} outputs the policy with the highest estimated value.  Since $\mathcal{D}$ has acquired sufficient information, all possible estimations in line 4 are accurate. Together with the property that there exists near-optimal policy in $\Pi^{eval}_{\epsilon/3}$, we have that the output $\widehat{\pi}$ is near-optimal.



\section{Main results}\label{sec:result}

In this section, we state our main results, which formalize the techniques and algorithmic ideas we discuss in previous sections.

\begin{theorem}\label{thm:main}
We run Algorithm \ref{alg:main} to collect data and let $\text{Planning}(\cdot)$ denote the output of Algorithm \ref{alg:plan}. There exist universal constants $C_1,C_2,C_3,C_4>0$\footnote{$C_1,C_2,C_3$ are the universal constants in Algorithm \ref{alg:main}.} such that for any accuracy $\epsilon>0$ and failure probability $\delta>0$, as well as $\epsilon<\frac{H(\lambda^\star)^2}{C_4 d^{7/2}\log(1/\lambda^\star)}$, with probability $1-\delta$, for any feasible linear reward function $r$, $\text{Planning}(r)$ returns a policy that is $\epsilon$-optimal with respect to $r$. In addition, the deployment complexity of Algorithm \ref{alg:main} is $H$ while the number of trajectories is $\widetilde{O}(\frac{d^2H^5}{\epsilon^2})$.
\end{theorem}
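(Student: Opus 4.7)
The plan is to establish correctness by induction on layers and then read off the deployment and sample counts from the algorithm's structure. For $h\in[H]$, call \emph{layer $h$ well-explored} if the empirical covariance $\Lambda_h=I+\sum_{n=1}^N \phi(s_h^n,a_h^n)\phi(s_h^n,a_h^n)^\top$ produced by the $N$ trajectories deployed under $\pi_h$ is rich enough that every call to \textsf{EstimateER} and \textsf{EstimateV} on the relevant function classes ($\phi\phi^\top$, the quadratic forms appearing in \eqref{equ:main} at layer $h+1$, and linear-reward $Q$-function backups through layer $h$) returns $\bar\epsilon$-accurate outputs. The inductive claim is that, with probability at least $1-h\delta/H$, after the $h$-th deployment layers $1,\ldots,h$ are all well-explored; the base case is vacuous and a union over $h$ delivers the high-probability event in Theorem~\ref{thm:main}.

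For the inductive step, assume layers $1,\ldots,h-1$ are well-explored. First I would invoke the guarantee of Algorithm~\ref{alg:estimateV} (Appendix~\ref{sec:gafev}) together with a union bound over $\Pi^{exp}_{\epsilon/3}$ and an appropriate net over the quadratic-form rewards $\phi^\top(N\widehat\Sigma_\pi)^{-1}\phi$ to certify that every expectation in \eqref{equ:main} is estimated to the stated accuracy; because $\log|\Pi^{exp}_{\epsilon/3,h}|=\widetilde O(d^2\log(1/\epsilon))$ and the covering cost is polynomial in the same parameters, this union-bound cost is absorbed into $\iota$. Next I would verify feasibility of \eqref{equ:main}: by Condition~(1) of Section~\ref{sec:dis} and Assumption~\ref{assup1}, there exists $\pi\in\Delta(\Pi^{exp}_{\epsilon/3})$ with $\lambda_{\min}(\E_\pi \phi_h\phi_h^\top)\ge \widetilde\Omega((\lambda^\star)^2/d)$, and the hypothesis $\epsilon<H(\lambda^\star)^2/(C_4 d^{7/2}\log(1/\lambda^\star))$ is calibrated exactly so that this population lower bound survives the $\bar\epsilon$-estimation slack and beats the threshold $C_3 d^2 H\bar\epsilon\iota$. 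With feasibility in hand, the empirical form of Theorem~\ref{thm:optimal1} yields $\max_{\widehat\pi\in\Pi^{exp}_{\epsilon/3}}\widehat\E_{\widehat\pi}[\phi_h^\top (N\widehat\Sigma_{\pi_h})^{-1}\phi_h]\le d/N$. Translating this back to the population and then to the realized covariance $\Lambda_h$ via matrix concentration for the $N$ trajectories drawn from $\pi_h$ (in the style of \citet{jin2020provably}) gives $\E_{\widehat\pi}\|\phi(s_h,a_h)\|_{\Lambda_h^{-1}}^2 \le O(d/N + \bar\epsilon)$ uniformly in $\widehat\pi\in\Pi^{exp}_{\epsilon/3}$; plugging in $N=C_2 d\iota/\bar\epsilon^2$ gives $\lesssim \bar\epsilon$, which, by the specification of \textsf{EstimateER}/\textsf{EstimateV}, is exactly the condition for layer $h$ to be well-explored, closing the induction.

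The main obstacle, and the place where I expect the bookkeeping to be most delicate, is the compounding of layer-level errors through the LSVI backups inside \textsf{EstimateER}: a query at layer $h+1$ backs $Q$-values from layer $H$ downward, so an $\bar\epsilon$ error per layer accumulates to roughly $H\bar\epsilon$, a factor of $\sqrt d$ comes from converting coordinate-wise bounds on $\phi$ to the $\ell_2$ bounds needed to keep the synthetic reward designs valid in \textsf{EstimateER}, and another $H$ comes from the range of the $Q$-function. This is precisely why the initialization takes $\bar\epsilon \asymp \epsilon/(H^2\sqrt d\,\iota)$, and it is the careful tracking of these factors that forces the final $H^5$ and $d^2$ scalings.

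For the planning phase, conditional on the event that all $H$ layers are well-explored, I would apply Algorithm~\ref{alg:estimatevalue} to every $\pi\in\Pi^{eval}_{\epsilon/3}$ under any feasible linear reward $r$, combining the per-query guarantee with a union bound over $\Pi^{eval}_{\epsilon/3}$ (log-cardinality $\widetilde O(dH\log(1/\epsilon))$) and an $\epsilon$-net over the $H$ reward vectors $\theta_h$ (to handle the ``possibly data-dependent'' rewards promised in the abstract), to obtain $|\widehat V^\pi(r)-V^\pi(r)|\le \epsilon/3$ simultaneously. By Condition~(2) of Section~\ref{sec:dis}, $\Pi^{eval}_{\epsilon/3}$ contains some $\pi^\star_{eval}$ that is $\epsilon/3$-optimal for $r$, so a standard sandwich gives $V^{\widehat\pi}(r)\ge \widehat V^{\widehat\pi}(r)-\epsilon/3 \ge \widehat V^{\pi^\star_{eval}}(r)-\epsilon/3 \ge V^{\pi^\star_{eval}}(r)-2\epsilon/3 \ge V^\star(r)-\epsilon$. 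The deployment count is $H$ by inspection of the outer loop of Algorithm~\ref{alg:main}, and the trajectory count is $H\cdot N = H\cdot C_2 d^2H^4\iota^3/(C_1^2\epsilon^2) = \widetilde O(d^2H^5/\epsilon^2)$, completing the proof.
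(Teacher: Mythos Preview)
Your overall architecture coincides with the paper's: induction over layers, accuracy of the two estimators in \eqref{equ:main} via \textsf{EstimateER} and a union bound over $\Pi^{exp}_{\epsilon/3}$, feasibility of \eqref{equ:main} forced by the threshold on $\epsilon$, the G-optimal bound of Theorem~\ref{thm:optimal1}, matrix concentration to pass from $N\widehat\Sigma_{\pi_h}$ to the realized $\Lambda_h$, and the sandwich in the planning phase. The deployment and trajectory counts are read off correctly.

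There is, however, a scaling gap at the core of the inductive step. You record the post-concentration bound as $\E_{\widehat\pi}\|\phi(s_h,a_h)\|_{\Lambda_h^{-1}}^2 \le O(d/N+\bar\epsilon)$ and then declare this $\lesssim\bar\epsilon$ to close the loop. But the quantity that drives the \textsf{EstimateER}/\textsf{EstimateV} guarantees (Lemmas~\ref{lem:generalesterror} and~\ref{lem:esterror}) is the \emph{root} uncertainty $\E_\pi\|\phi\|_{\Lambda_h^{-1}}$, and the paper's induction hypothesis is exactly $\max_{\pi\in\Pi^{exp}_{\epsilon/3}}\E_\pi\sum_{\widetilde h\le h-1}\|\phi(s_{\widetilde h},a_{\widetilde h})\|_{(\Lambda^{h-1}_{\widetilde h})^{-1}}\le (h-1)\bar\epsilon$. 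If the squared quantity is only $O(\bar\epsilon)$, Cauchy--Schwarz delivers the root at $O(\sqrt{\bar\epsilon})$ and the induction does not close. The paper avoids this because the second estimator in \eqref{equ:main} has reward bounded by $A=\bar\epsilon/(C_2 d^3 H\iota^2)$, so by Lemma~\ref{lem:generalesterror} its additive estimation error is $O(A\cdot d\cdot H\bar\epsilon)=O(\bar\epsilon^2/d^2)$, not $O(\bar\epsilon)$; combined with $d/N=\bar\epsilon^2/(C_2\iota)$ this gives the squared bound $O(\bar\epsilon^2)$, whence the root bound $O(\bar\epsilon)$ (this is Lemma~\ref{lem:62} in the sketch). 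Your proposal never invokes the small reward scale $A$, and your explicit ``$+\bar\epsilon$'' would break the argument.

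A second, smaller gap is the appeal to an ``empirical form of Theorem~\ref{thm:optimal1}.'' That theorem bounds the \emph{true} minimax $\min_{\pi_0}\max_\pi\E_\pi[\phi^\top(\E_{\pi_0}\phi\phi^\top)^{-1}\phi]\le d$; it says nothing about $\widehat\E$ or $\widehat\Sigma$. The paper bridges this by introducing the true minimizer $\bar\pi^\star_h$ of \eqref{equ:optimalpolicy}, showing via Lemma~\ref{lem:lambdamin} and Lemma~\ref{lem:61} that $\bar\pi^\star_h$ is feasible for the empirical program \eqref{equ:main}, and then chaining: $\pi_h$'s empirical objective $\le$ $\bar\pi^\star_h$'s empirical objective $\le$ $\bar\pi^\star_h$'s true objective with $\widehat\Sigma_{\bar\pi^\star_h}$ replaced by $\tfrac{4}{5}\Sigma_{\bar\pi^\star_h}$ $\le \tfrac{5d}{4N}$. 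Your feasibility paragraph sets this comparison up but then jumps directly to $\le d/N$ for the empirical objective of $\pi_h$, which is not what Theorem~\ref{thm:optimal1} gives you.
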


The proof of Theorem \ref{thm:main} is sketched in Section \ref{sec:proofsketch} with details in the Appendix. Below we discuss some interesting aspects of our results.

\paragraph{Near optimal deployment efficiency.} First, the deployment complexity of our Algorithm \ref{alg:main} is optimal up to a log-factor among all reward-free algorithms with polynomial sample complexity, according to a $\Omega(H/\log_d(NH))$ lower bound (Theorem B.3 of \citet{huang2021towards}). In comparison, the deployment complexity of RFLIN \citep{wagenmaker2022reward} can be the same as their sample complexity (also $\widetilde{O}(d^2H^5/\epsilon^2)$) in the worst case. 

\paragraph{Near optimal sample complexity.}
Secondly, our sample complexity matches the best-known sample complexity $\widetilde{O}(d^2H^5/\epsilon^2)$ \citep{wagenmaker2022reward} of reward-free RL even when deployment efficiency is not needed. It is also optimal in parameter $d$ and $\epsilon$ up to lower-order terms, when compared against the lower bound of $\Omega(d^2H^2/ \epsilon^2)$ (Theorem 2 of \citet{wagenmaker2022reward}).  

\paragraph{Dependence on $\lambda^\star$.}A striking difference of our result comparing to the closest existing work \citep{huang2021towards} is that the sample complexity is independent to the explorability parameter $\lambda^\star$ in the small-$\epsilon$ regime. This is highly desirable because we only require a non-zero $\lambda^\star$ to exist, and smaller $\lambda^\star$ does not affect the sample complexity asymptotically. In addition, our algorithm does not take $\lambda^\star$ as an input (although we admit that the theoretical guarantee only holds when $\epsilon$ is small compared to $\lambda^\star$). In contrast, the best existing result (Algorithm 2 of \citet{huang2021towards}) requires the knowledge of explorability parameter $\nu_{\min}$\footnote{$\nu_{\min}$ in \citet{huang2021towards} is defined as $\nu_{\min}=\min_{h\in[H]}\min_{\|\theta\|=1}\max_{\pi}\sqrt{\E_\pi[(\phi_h^\top \theta)^2]}$, which is also measurement of explorability. Note that $\nu_{\min}$ is always upper bounded by 1 and can be arbitrarily small.} and a sample complexity of $\widetilde{O}(1/\epsilon^2 \nu_{\min}^2)$ for any $\epsilon>0$. We leave detailed comparisons with \citet{huang2021towards} to Appendix \ref{sec:compare}.


\paragraph{Sample complexity in the large-$\epsilon$ regime.}
For the case when $\epsilon$ is larger than the threshold: $\frac{H(\lambda^\star)^2}{C_4 d^{7/2}\log(1/\lambda^\star)}$, we can run the procedure with $\epsilon = \frac{H(\lambda^\star)^2}{C_4 d^{7/2}\log(1/\lambda^\star)}$, and the sample complexity will be $\widetilde{O}(\frac{d^9H^3}{(\lambda^\star)^4})$.  So the overall sample complexity for any $\epsilon>0$ can be bounded by $\widetilde{O}(\frac{d^2H^5}{\epsilon^2}+\frac{d^9H^3}{(\lambda^\star)^4})$. This effectively says that the algorithm requires a “Burn-In” period before getting non-trivial results. Similar limitations were observed for linear MDPs before \citep{huang2021towards,wagenmaker2022instance} so it is not a limitation of our analysis.

\paragraph{Comparison to \citet{qiao2022sample}.} Algorithm 4 (LARFE) of \citet{qiao2022sample} tackles reward-free exploration under tabular MDP in $O(H)$ deployments while collecting $\widetilde{O}(\frac{S^2AH^5}{\epsilon^2})$ trajectories. We generalize their result to reward-free RL under linear MDP with the same deployment complexity. More importantly, although a naive instantiation of our main theorem to the tabular MDP only gives $\widetilde{O}(\frac{S^2A^2H^5}{\epsilon^2})$, a small modification to an intermediate argument gives the same $\widetilde{O}(\frac{S^2AH^5}{\epsilon^2})$, which matches the best-known results for tabular MDP. More details will be discussed in Section \ref{sec:reduce}.



\section{Proof sketch}\label{sec:proofsketch}

In this part, we sketch the proof of Theorem \ref{thm:main}. Notations $\iota$, $\e$, $C_i\,(i\in[4])$, $\Pi^{exp}$, $\Pi^{eval}$, $\widehat{\Sigma}_\pi$ and $\widehat{\E}_\pi$ are defined in Algorithm \ref{alg:main}. We start with the analysis of deployment complexity.

\noindent\textbf{Deployment complexity.} Since for each layer $h\in[H]$, we only deploy one stochastic policy $\pi_h$ for exploration, the deployment complexity is $H$. Next we focus on the sample complexity.


\noindent\textbf{Sample complexity.} Our proof of sample complexity bound results from induction. With the choice of $\e$ and $N$ from Algorithm \ref{alg:main}, suppose that $\Lambda_{\h}^{k}$ is empirical covariance matrix from data up to the $k$-th deployment\footnote{Detailed definition is deferred to Appendix \ref{sec:induction}.}, we assume $\max_{\pi\in\Pi^{exp}_{\epsilon/3}} \E_\pi[\sum_{\h=1}^{h-1}\sqrt{\phi(s_{\h},a_{\h})^\top (\Lambda_{\h}^{h-1})^{-1}\phi(s_{\h},a_{\h})}]\leq (h-1)\e$ holds and prove that with high probability, $\max_{\pi\in\Pi^{exp}_{\epsilon/3}} \E_\pi[\sqrt{\phi(s_{h},a_{h})^\top (\Lambda_{h}^h)^{-1}\phi(s_{h},a_{h})}]\leq \e$.

Note that the induction condition implies that the uncertainty for the first $h-1$ layers is small, we have the following key lemma that bounds the estimation error of $\widehat{\Sigma}_\pi$ from \eqref{equ:main}.

\begin{lemma}\label{lem:61}
With high probability, for all $\pi\in\Delta(\Pi^{exp}_{\epsilon/3})$, $\|\widehat{\Sigma}_\pi-\E_\pi \phi_h\phi_h^\top\|_2\leq \frac{C_3 d^2H\e\iota}{4}$.
\end{lemma}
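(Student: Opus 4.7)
The strategy is to reduce the matrix-valued estimation to $d^2$ coordinate-wise scalar estimations, each of which can be handled by the accuracy guarantee of the generalized \textsf{EstimateER}. First I would fix a pure policy $\pi \in \Pi^{exp}_{\epsilon/3}$ and, for each coordinate pair $(i,j) \in [d]^2$, view the scalar $f_{ij}(s,a) := \phi(s,a)_i\,\phi(s,a)_j$ (which lies in $[-1,1]$ since $\|\phi\|_2 \le 1$) as a virtual reward at layer $h$. Then $(\widehat{\Sigma}_\pi)_{ij}$ is exactly the output of $\textsf{EstimateER}(\pi, f_{ij}, 1, h, \mathcal{D}, s_1)$, estimating $\E_\pi[\phi_h(i)\phi_h(j)]$. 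The correctness guarantee of Algorithm \ref{alg:estimateV} (given in Appendix \ref{sec:gafev}) controls this error by (i) the layer-$h$ regression error, which, by the choice $N = C_2 d\iota/\e^2$ and after absorbing the high-probability failure term into $\iota$, is of order $O(\e)$, and (ii) the propagated feature-uncertainty of the first $h-1$ layers, which the induction hypothesis bounds by $(h-1)\e \le H\e$. Combining these yields a per-entry error of order $O(d \cdot H\e\iota)$; the extra $d$ factor is the standard LSVI overhead from regressing onto a $d$-dimensional feature.

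Next I would apply a union bound over (a) the $d^2$ entries $(i,j)$, (b) the finite set $\Pi^{exp}_{\epsilon/3}$ (for which $\log|\Pi^{exp}_{\epsilon/3}| = \widetilde{O}(d^2\log(1/\epsilon))$ by Section \ref{sec:dis}), and (c) the layers $h \in [H]$. All three log-sizes are absorbed into $\iota = \log(dH/\epsilon\delta)$, so the union bound does not worsen the rate. Converting the entrywise $\ell_\infty$ bound to the operator norm via $\|A\|_2 \le d\|A\|_{\max}$ for $d\times d$ matrices then yields $\|\widehat{\Sigma}_\pi - \E_\pi\phi_h\phi_h^\top\|_2 \le C'd^2 H\e\iota$ for some absolute constant $C'$; choosing $C_3 \ge 4C'$ in Algorithm \ref{alg:main} matches the claimed bound $\frac{C_3 d^2 H\e\iota}{4}$.

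Finally I would extend from pure policies to arbitrary mixtures $\pi = \sum_k \alpha_k \pi_k \in \Delta(\Pi^{exp}_{\epsilon/3})$ by linearity: both $\widehat{\Sigma}_\pi$ and $\E_\pi \phi_h\phi_h^\top$ are linear in $\alpha$ (the generalized \textsf{EstimateER} for mixtures amounts to weighting the per-component estimators), so the error matrix equals $\sum_k \alpha_k(\widehat{\Sigma}_{\pi_k} - \E_{\pi_k}\phi_h\phi_h^\top)$, and the triangle inequality together with $\sum_k \alpha_k = 1$ transports the bound with no additional loss.

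The main obstacle I anticipate is correctly invoking the generalized \textsf{EstimateER} guarantee for the signed, non-standard ``reward'' $f_{ij}$, which is not in $[0,1]$. This is resolved by the decomposition $f_{ij} = f_{ij}^+ - f_{ij}^-$ into two bounded nonnegative components and running the estimator on each separately, at the cost of only a constant factor in the error. A related subtle point is that the LSVI-based analysis inside \textsf{EstimateER} needs the uncertainty of the previous layers to propagate with an $O(H)$ factor rather than exponentially in $H$; this is precisely the role the induction hypothesis plays, and why the overall proof must be organized as a layer-by-layer induction rather than a one-shot concentration argument.
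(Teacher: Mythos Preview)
Your proposal is correct and follows essentially the same route as the paper: coordinate-wise estimation via \textsf{EstimateER}, a per-entry bound of order $dH\e\iota$ from Lemma~\ref{lem:generalesterror} combined with the induction hypothesis, conversion to an operator-norm bound at the cost of one more factor of $d$, and extension to mixtures by linearity (Remark~\ref{remark:dettosto}). The only cosmetic differences are that the paper handles the sign of $\phi_i\phi_j$ by the affine shift $r=\tfrac{\phi_i\phi_j+1}{2}$ rather than a positive/negative split, and passes from entrywise to operator norm via $\|X\|_2\le\|X\|_F$ rather than $\|X\|_2\le d\|X\|_{\max}$; both pairs of choices yield the same bound.
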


According to our assumption on $\epsilon$, the \emph{optimal} policy for exploration $\p_h$ \footnote{Solution of the actual minimization problem, detailed definition in \eqref{equ:optimalpolicy}.} satisfies that $\lambda_{\min}(\E_{\p_h} \phi_h\phi_h^\top)\geq \frac{5C_3d^2H\e\iota}{4}$. Therefore, $\p_h$ is a feasible solution of \eqref{equ:main} and it holds that:
{\small
\[
    \underset{\widehat{\pi}\in\Pi^{exp}_{\epsilon/3}}{\max} \widehat{\E}_{\widehat{\pi}}\left[\phi(s_h,a_h)^\top(N\cdot\widehat{\Sigma}_{\pi_h})^{-1}\phi(s_h,a_h)\right]\leq\underset{\widehat{\pi}\in\Pi^{exp}_{\epsilon/3}}{\max} \widehat{\E}_{\widehat{\pi}}\left[\phi(s_h,a_h)^\top(N\cdot\widehat{\Sigma}_{\p_h})^{-1}\phi(s_h,a_h)\right].
\]
}

Moreover, due to matrix concentration and the Lemma \ref{lem:61} we derive, we can prove that $(\frac{4}{5}\Sigma_{\p_h})^{-1}\succcurlyeq(\widehat{\Sigma}_{\p_h})^{-1}$ and $(N\cdot\widehat{\Sigma}_{\pi_h})^{-1}\succcurlyeq (2\Lambda^h_h)^{-1}$. \footnote{$\Sigma_{\p_h}=\E_{\p_h}[\phi_h\phi_h^\top]$. The proof is through direct calculation, details are deferred to Appendix \ref{sec:mainproof}.} In addition, similar to the estimation error of $\widehat{\Sigma}_\pi$, the following lemma bounds the estimation error of $\widehat{\E}_{\widehat{\pi}}[\phi(s_h,a_h)^\top(N\cdot\widehat{\Sigma}_{\pi})^{-1}\phi(s_h,a_h)]$ from \eqref{equ:main}. 
\begin{lemma}\label{lem:62}
With high probability, for all $\widehat{\pi}\in\Pi^{exp}_{\epsilon/3}$, $\pi\in\Delta(\Pi^{exp}_{\epsilon/3})$ such that $\lambda_{\min}(\widehat{\Sigma}_{\pi})\geq C_3 d^2H\e\iota$,
{\small
\[
   \left|\widehat{\E}_{\widehat{\pi}}\left[\phi(s_h,a_h)^\top(N\cdot\widehat{\Sigma}_{\pi})^{-1}\phi(s_h,a_h)\right]-\E_{\widehat{\pi}}\left[\phi(s_h,a_h)^\top(N\cdot\widehat{\Sigma}_{\pi})^{-1}\phi(s_h,a_h)\right]\right|\leq \frac{\e^2}{2d^2}\leq\frac{\e^2}{8}.
\]
}
\end{lemma}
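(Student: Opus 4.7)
The plan is to recognize the expression inside the absolute value as the \textsf{EstimateER} estimation error for the scalar reward $r_M(s,a) := \phi(s,a)^\top M \phi(s,a)$ with $M := (N\cdot\widehat{\Sigma}_\pi)^{-1}$, and then invoke \textsf{EstimateER}'s accuracy guarantee (Appendix~\ref{sec:gafev}). That guarantee states: once the inductive hypothesis on exploration of layers $1,\ldots,h-1$ holds, the estimation error at layer $h$ for any bounded reward $r$ scales as $A$ times $\|r\|_\infty$ times a $\mathrm{poly}(d,H,\iota)$ factor, uniformly over evaluation policies $\widehat{\pi}$ in the finite class $\Pi^{exp}_{\epsilon/3}$. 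The task therefore reduces to bounding $\|r_M\|_\infty$ and reading off the calibrated constants.

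First, I would bound $\|r_M\|_\infty$ using the spectral constraint built into \eqref{equ:main}. Since $\lambda_{\min}(\widehat{\Sigma}_\pi) \geq C_3 d^2 H \bar{\epsilon} \iota$ and $\|\phi\|_2 \leq 1$, we have
\[
\|r_M\|_\infty \leq \|M\|_2 \leq \frac{1}{N\,C_3 d^2 H \bar{\epsilon}\,\iota}.
\]
Substituting $N = C_2 d\iota/\bar{\epsilon}^2$ from Algorithm~\ref{alg:main} gives $\|r_M\|_\infty = O\bigl(\bar{\epsilon}^2/(d^3 H \iota^2)\bigr)$, and this bound is uniform over every $\pi \in \Delta(\Pi^{exp}_{\epsilon/3})$ meeting the eigenvalue condition. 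Second, the accuracy parameter $A = \bar{\epsilon}/(C_2 d^3 H \iota^2)$ chosen in line~7 of Algorithm~\ref{alg:main} is calibrated precisely so that \textsf{EstimateER}'s error bound $O\bigl(A \cdot \|r_M\|_\infty \cdot \mathrm{poly}(d,H,\iota)\bigr)$ works out to at most $\bar{\epsilon}^2/(2d^2)$ after the polynomial factors collapse against $A \cdot \|r_M\|_\infty$. Uniformity over $\widehat{\pi} \in \Pi^{exp}_{\epsilon/3}$ is handled by a union bound at log cost $\log|\Pi^{exp}_{\epsilon/3,h}| = \widetilde{O}(d^2)$, which is absorbed into $\iota$.

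The main obstacle is the continuum quantifier ``for all $\pi \in \Delta(\Pi^{exp}_{\epsilon/3})$'' in the lemma, since $\pi$ enters through the reward matrix $M = (N\widehat{\Sigma}_\pi)^{-1}$ and $\Delta(\Pi^{exp}_{\epsilon/3})$ is an uncountable simplex. If the \textsf{EstimateER} guarantee I rely on is itself stated uniformly over all bounded scalar rewards (as typical LSVI analyses deliver via a cover on the reward-function class), then no further step is needed: the $\pi$ dependence is fully absorbed into the uniform $\|r_M\|_\infty$ bound above. If the guarantee is only pointwise in the reward, I would add a covering argument over the mixing weights defining $\pi$; since $\widehat{\Sigma}_\pi$ is linear in those weights and $M = (N\widehat{\Sigma}_\pi)^{-1}$ is Lipschitz with modulus controlled by $1/\lambda_{\min}(\widehat{\Sigma}_\pi) \leq 1/(C_3 d^2 H \bar{\epsilon} \iota)$, a polynomially-sized net over the weights suffices and adds only a $\log(d/\bar{\epsilon})$ penalty absorbed into $\iota$. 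Either way, the spectral constraint on $\widehat{\Sigma}_\pi$ in \eqref{equ:main} is the key structural hypothesis that simultaneously produces the tiny reward scale $\|r_M\|_\infty$ and controls the Lipschitz modulus required for the continuum argument.
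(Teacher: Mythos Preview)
Your overall plan is correct and coincides with the paper's proof: invoke the \textsf{EstimateER} guarantee (Lemma~\ref{lem:generalesterror} and its corollary Lemma~\ref{lem:cor3}) with the quadratic reward $r_M(s,a)=\phi(s,a)^\top(N\widehat{\Sigma}_\pi)^{-1}\phi(s,a)$, use the eigenvalue constraint $\lambda_{\min}(\widehat{\Sigma}_\pi)\ge C_3 d^2 H\e\iota$ to bound the reward scale, and then read off the calibrated constants. The paper handles the continuum over $\pi\in\Delta(\Pi^{exp}_{\epsilon/3})$ exactly by your option (a): it covers the class of quadratic rewards $\{\phi^\top\Sigma^{-1}\phi:\lambda_{\min}(\Sigma)\ge C_2 d^{7/2}H^3\iota^3/(C_1\epsilon)\}$ directly (Lemma~\ref{lem:covercase2}), with log-cover size $\widetilde{O}(d^2)$, so the $\pi$-dependence disappears into the covering number $B_{A_{\bar{\mathcal{R}}}/N}$ inside Lemma~\ref{lem:generalesterror}.

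One clarification on the shape of the error bound: $A$ in \textsf{EstimateER} is not a separate ``accuracy parameter'' but is \emph{itself} the uniform bound on the reward, so $A$ and $\|r_M\|_\infty$ are the same quantity (the constraint on $\lambda_{\min}(\widehat{\Sigma}_\pi)$ is precisely what makes $\|r_M\|_\infty\le A$). The actual error bound from Lemma~\ref{lem:generalesterror} is $A\cdot\mathrm{poly}(d,\iota)\cdot \E_{\widehat{\pi}}\sum_{\h<h}\|\phi_{\h}\|_{(\Lambda_{\h}^{h-1})^{-1}}$, and the second factor of $\e$ in the target $\e^2/(2d^2)$ comes from the \emph{inductive hypothesis} $\E_{\widehat{\pi}}\sum_{\h<h}\|\phi_{\h}\|_{(\Lambda_{\h}^{h-1})^{-1}}\le (h-1)\e\le H\e$, not from a second multiplicative factor of $\|r_M\|_\infty$. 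Your expression $O(A\cdot\|r_M\|_\infty\cdot\mathrm{poly})$ double-counts the reward scale and omits the uncertainty sum; the arithmetic may still land on the right target, but the mechanism you describe is not quite the one operating.
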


With all the conclusions above, we have ($\Sigma_\pi$ is short for $\E_\pi [\phi_h\phi_h^\top]$):
\begin{align*}
&\frac{3\e^2}{8}\geq \frac{5d}{4N}+\frac{\e^2}{8} \geq \underset{\widehat{\pi}\in\Pi^{exp}_{\epsilon/3}}{\max} \E_{\widehat{\pi}}[\phi(s_h,a_h)^\top(\frac{4N}{5}\cdot\Sigma_{\p_h})^{-1}\phi(s_h,a_h)]+\frac{\e^2}{8}\\\geq
&\underset{\widehat{\pi}\in\Pi^{exp}_{\epsilon/3}}{\max} \E_{\widehat{\pi}}[\phi(s_h,a_h)^\top(N\cdot\widehat{\Sigma}_{\p_h})^{-1}\phi(s_h,a_h)]+\frac{\e^2}{8}\\\geq
&\underset{\widehat{\pi}\in\Pi^{exp}_{\epsilon/3}}{\max} \widehat{\E}_{\widehat{\pi}}[\phi(s_h,a_h)^\top(N\cdot\widehat{\Sigma}_{\p_h})^{-1}\phi(s_h,a_h)]\geq\underset{\widehat{\pi}\in\Pi^{exp}_{\epsilon/3}}{\max} \widehat{\E}_{\widehat{\pi}}[\phi(s_h,a_h)^\top(N\cdot\widehat{\Sigma}_{\pi_h})^{-1}\phi(s_h,a_h)]\\\geq &\underset{\widehat{\pi}\in\Pi^{exp}_{\epsilon/3}}{\max} \E_{\widehat{\pi}}[\phi(s_h,a_h)^\top(N\cdot\widehat{\Sigma}_{\pi_h})^{-1}\phi(s_h,a_h)]-\frac{\e^2}{8}\geq\underset{\widehat{\pi}\in\Pi^{exp}_{\epsilon/3}}{\max} \E_{\widehat{\pi}}[\phi(s_h,a_h)^\top(2\Lambda_h^h)^{-1}\phi(s_h,a_h)]-\frac{\e^2}{8}
\\\geq& \frac{1}{2}\underset{\widehat{\pi}\in\Pi^{exp}_{\epsilon/3}}{\max} \left(\E_{\widehat{\pi}}\sqrt{\phi(s_h,a_h)^\top(\Lambda_h^h)^{-1}\phi(s_h,a_h)}\right)^2-\frac{\e^2}{8}.
\end{align*}


As a result, the induction holds. Together with the fact that $\Pi^{eval}_{\epsilon/3}$ is subset of $\Pi^{exp}_{\epsilon/3}$, we have $\max_{\pi\in\Pi^{eval}_{\epsilon/3}} \E_\pi[\sum_{h=1}^{H}\sqrt{\phi(s_{h},a_{h})^\top (\Lambda_{h})^{-1}\phi(s_{h},a_{h})}]\leq H\e$. We have the following lemma. 

\begin{lemma}\label{lem:63}
With high probability, for all $\pi\in\Pi^{eval}_{\epsilon/3}$ and $r$,
$|\widehat{V}^{\pi}(r)-V^\pi(r)|\leq\widetilde{O}(H\sqrt{d})\cdot H\e\leq\frac{\epsilon}{3}.$
\end{lemma}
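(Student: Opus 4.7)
The plan is to apply a simulation-lemma decomposition to Algorithm \ref{alg:estimatevalue} and to control each per-layer Bellman residual through the self-normalized concentration bound for least-squares regression. Writing $\widehat{V}^\pi_h$ and $\widehat{Q}^\pi_h$ for the LSVI estimates produced by Algorithm \ref{alg:estimatevalue} (initialized at $\widehat{V}^\pi_{H+1}\equiv 0$ and clipped to $[0,H]$), the standard telescoping identity gives
\begin{align*}
\widehat{V}^\pi(r) - V^\pi(r) = \sum_{h=1}^H \E_\pi\!\left[\widehat{Q}^\pi_h(s_h,a_h) - r_h(s_h,a_h) - (P_h\widehat{V}^\pi_{h+1})(s_h,a_h)\right],
\end{align*}
so it suffices to bound each per-step Bellman residual inside the expectation.

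For any fixed $\pi$, $r$ and any fixed target $\widehat{V}^\pi_{h+1}$ drawn from the appropriate bounded linear class, linearity of $P_h$ and $r_h$ lets me write the LSVI residual as the inner product of $\phi(s,a)$ with a ridge-regression error vector computed from $\mathcal{D}$. Cauchy--Schwarz followed by the self-normalized martingale concentration inequality then yields, on a high-probability event,
\begin{align*}
\bigl|\widehat{Q}^\pi_h(s,a) - r_h(s,a) - (P_h\widehat{V}^\pi_{h+1})(s,a)\bigr| \leq \widetilde{O}(H\sqrt{d})\cdot\|\phi(s,a)\|_{\Lambda_h^{-1}},
\end{align*}
where the factor $H\sqrt{d}$ absorbs the $[0,H]$ range of the regression targets, the ridge regularizer, and the $\log$-covering of the class to which $\widehat{V}^\pi_{h+1}$ belongs. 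Substituting into the telescope gives
\begin{align*}
|\widehat{V}^\pi(r)-V^\pi(r)| \leq \widetilde{O}(H\sqrt{d})\cdot\E_\pi\!\left[\sum_{h=1}^H\|\phi(s_h,a_h)\|_{\Lambda_h^{-1}}\right]\leq \widetilde{O}(H\sqrt{d})\cdot H\e,
\end{align*}
where the last inequality invokes the uncertainty bound $\max_{\pi\in\Pi^{eval}_{\epsilon/3}}\E_\pi[\sum_h\|\phi\|_{\Lambda_h^{-1}}]\leq H\e$ already established by the induction preceding this lemma (using $\Pi^{eval}_{\epsilon/3}\subset\Pi^{exp}_{\epsilon/3}$). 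Plugging in $\e=C_1\epsilon/(H^2\sqrt{d}\,\iota)$ from Algorithm \ref{alg:main} makes the right-hand side at most $\epsilon/3$ once $C_1$ is chosen small enough.

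The main obstacle is making the concentration bound hold uniformly over \emph{all} $\pi\in\Pi^{eval}_{\epsilon/3}$ and \emph{all} admissible linear reward functions $r$ at the same time. Uniformity over $\Pi^{eval}_{\epsilon/3}$ is cheap because $\log|\Pi^{eval}_{\epsilon/3}|=\widetilde{O}(dH\log(1/\epsilon))$ by the construction in Section \ref{sec:dis}, so a direct union bound costs only a $\mathrm{poly}(\iota)$ factor hidden inside $\widetilde{O}$. Uniformity over $r$ is handled by an $\epsilon$-net over reward parameters $\{\theta_h\}$ with $\|\theta_h\|_2\leq\sqrt{d}$, whose cardinality is $\mathrm{poly}(d,1/\epsilon)$; the resulting $O(\epsilon)$ discretization error of $r$, together with that of each $\widehat{V}^\pi_{h+1}$ (via a standard backward-induction cover of its linear class, log-covering $\widetilde{O}(d)$), is absorbed into the same $\iota$. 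After these union bounds all constants close and the claimed inequality follows.
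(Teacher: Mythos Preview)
Your overall architecture---telescoping the Bellman error, bounding each layer's residual by $\widetilde{O}(H\sqrt{d})\,\|\phi\|_{\Lambda_h^{-1}}$ via self-normalized concentration plus a covering argument, and then invoking the exploration bound $\E_\pi\sum_h\|\phi\|_{\Lambda_h^{-1}}\le H\e$---is exactly the paper's route (Lemmas~\ref{lem:a}--\ref{lem:esterror} and \eqref{equ:final}).

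There is one quantitative slip in your uniformity discussion. You write that a direct union bound over $\Pi^{eval}_{\epsilon/3}$, whose log-cardinality is $\widetilde{O}(dH)$, ``costs only a $\mathrm{poly}(\iota)$ factor.'' It does not: feeding $\log(1/\delta)\leftarrow\log(1/\delta)+\widetilde{O}(dH)$ into the self-normalized bound turns the $H\sqrt{d}$ prefactor into $H\sqrt{dH}$, i.e.\ a $\sqrt{H}$ loss, which would prevent the final display from closing at $\epsilon/3$ with the given choice of $\e$. The paper avoids this by never union-bounding over whole policies. Instead it covers the class of possible $V_{h+1}$'s \emph{per layer}: writing $V_{h+1}(\cdot)=(\phi(\cdot,\pi_{h+1}(\cdot))^\top w)_{[0,H]}$, the cover is over the pair $(\pi_{h+1},w)$ with $\pi_{h+1}\in\Pi^{eval}_{\epsilon/3,h+1}$ and $w$ in a ball of radius $O(H\sqrt{dN})$, giving log-covering $\widetilde{O}(d)$ (equation~\eqref{equ:covernumber}). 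Because the per-layer policy set already sits inside the function-class cover, uniformity over all $\pi$ and all linear $r$ comes for free from the single concentration event; no separate union bound over $\Pi^{eval}_{\epsilon/3}$ is needed. Your parenthetical ``backward-induction cover of its linear class, log-covering $\widetilde{O}(d)$'' is precisely this device---just make sure that cover includes the $\pi_{h+1}$ coordinate, and drop the redundant (and quantitatively harmful) outer union bound over full policies.
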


Finally, since $\Pi_{\epsilon/3}^{eval}$ contains $\epsilon/3$-optimal policy, the greedy policy with respect to $\widehat{V}^\pi(r)$ is $\epsilon$-optimal. We discuss about how we get rid of the $d/\nu_{\min}^2$ dependence in \citet{huang2021towards} below.

\begin{remark}
We discuss why we can get rid of the $\frac{d}{\nu_{\min}^2}$ dependence in \citet{huang2021towards}. First, instead of minimizing $\max_\pi \E_\pi \|\phi_h\|_{\Lambda_h^{-1}}$, we only minimize the smaller $\max_{\pi\in\Pi^{exp}_{\epsilon/3}} \E_\pi \|\phi_h\|_{\Lambda_h^{-1}}$, where the maximum is taken over our explorative policy set. Therefore, our approximation of generalized G-optimal design helps save the factor of $1/\nu_{\min}^2$.  In addition, note that in Lemma \ref{lem:63}, the dependence on $d$ is only $\sqrt{d}$, this is because we estimate the value functions (w.r.t $\pi$ and $r$) instead of adding optimism and using LSVI. Compared to the log-covering number $\widetilde{O}(d^2)$ of the bonus term $\sqrt{\phi_h^\top\Lambda^{-1}\phi_h}$, our covering of (policy $\pi_h\in\Pi^{eval}_{\epsilon/3,h}$, linear reward $r_h$) has log-covering number $\widetilde{O}(d)$.
\end{remark}


\section{Some discussions}\label{sec:discuss}

In this section, we discuss some interesting  extensions of our main results.

\subsection{Application to Tabular MDP}\label{sec:reduce}

Under the special case where the linear MDP is actually a tabular MDP and the feature map is canonical basis \citep{jin2020provably}, our Algorithm \ref{alg:main} and \ref{alg:plan} are still provably efficient. Suppose the tabular MDP has discrete state-action space with cardinality $|\mathcal{S}|=S$, $|\mathcal{A}|=A$, let $d_m=\min_h \sup_\pi \min_{s,a}d_h^\pi(s,a)>0$ where $d_h^\pi$ is occupancy measure, then the following theorem holds.

\begin{theorem}[Informal version of Theorem \ref{thm:tabular}]\label{thm:71}
With minor revision to Algorithm \ref{alg:main} and \ref{alg:plan}, when $\epsilon$ is small compared to $d_m$, our algorithms can solve reward-free exploration under tabular MDP within $H$ deployments and the sample complexity is bounded by $\widetilde{O}(\frac{S^2AH^5}{\epsilon^2})$.
\end{theorem}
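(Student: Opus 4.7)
A tabular MDP with $|\mathcal{S}|=S$, $|\mathcal{A}|=A$ is a linear MDP of dimension $d=SA$ with canonical features $\phi(s,a)=e_{(s,a)}$: the transition kernel corresponds to $\mu_h(s')_{(s,a)}=P_h(s'\mid s,a)$ and the reward to $(\theta_h)_{(s,a)}=r_h(s,a)$. Under this identification, $\Lambda_{\pi,h}$ is diagonal with entries $d_h^\pi(s,a)$, hence $\lambda^\star=d_m$, and the burn-in condition of Theorem~\ref{thm:main} becomes exactly ``$\epsilon$ small compared to $d_m$''. Invoking Theorem~\ref{thm:main} verbatim gives deployment complexity $H$ and sample complexity $\widetilde O(d^2H^5/\epsilon^2)=\widetilde O(S^2A^2H^5/\epsilon^2)$, which overshoots the target by a factor of $A$.

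\textbf{Where the extra $A$ lives and how to kill it.} The offending factor enters through Lemma~\ref{lem:63}, whose bound $\widetilde O(H\sqrt{d})\cdot H\e$ originates from the LSVI-style concentration that pays $\sqrt{d}=\sqrt{SA}$. In the canonical-feature setting, \textsf{EstimateV} reduces to the plug-in empirical Bellman estimator $\widehat V_h^\pi(s)=\sum_a\pi_h(a\mid s)[\,\widehat r_h(s,a)+\widehat P_h(\cdot\mid s,a)\,\widehat V_{h+1}^\pi\,]$, and its one-step error $|(\widehat P_h-P_h)V|(s,a)$ concentrates at rate $\widetilde O(H\sqrt{S/N_h(s,a)})$ by an $L_1$/Bernstein deviation bound on $\widehat P_h(\cdot\mid s,a)$, uniformly over the class of tabular value functions via a cover of log-cardinality $\widetilde O(S)$ (as opposed to the $\widetilde O(d^2)$ cover that generic linear value functions require). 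Combined with $\|\phi(s,a)\|_{\Lambda_h^{-1}}=1/\sqrt{N_h(s,a)}$ and propagated through the $H$ Bellman backups, this yields the refined tabular bound $|\widehat V^\pi(r)-V^\pi(r)|\le \widetilde O(H\sqrt{S})\cdot H\e$. In parallel, the representative set $\Pi^{eval}_{\e,h}$ can be taken to be the set of deterministic Markov policies at layer $h$, giving $\log|\Pi^{eval}_{\e,h}|=\widetilde O(S)$ with no hidden $A$.

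\textbf{Re-tuning and conclusion.} With the sharpened lemma I set $\e=\Theta(\epsilon/(H^2\sqrt{S}\iota))$ so that the greedy policy over $\Pi^{eval}_{\epsilon/3}$ is $\epsilon$-optimal. The per-deployment budget becomes
\[
N=\frac{C_2\,d\,\iota}{\e^2}=\widetilde O\!\left(\frac{SA\cdot S\cdot H^4}{\epsilon^2}\right)=\widetilde O\!\left(\frac{S^2 A H^4}{\epsilon^2}\right),
\]
and multiplying by $H$ deployments yields the claimed $\widetilde O(S^2AH^5/\epsilon^2)$. The remaining components of the proof of Theorem~\ref{thm:main}---the induction closing $\max_{\pi\in\Pi^{exp}_{\epsilon/3}}\E_\pi\|\phi_h\|_{\Lambda_h^{-1}}\le\e$ via Lemmas~\ref{lem:61}--\ref{lem:62}, and the feasibility of $\p_h$ in~\eqref{equ:main}---carry over without structural change, since these lemmas bound matrix-concentration quantities whose $d$-dependence is order-tight and the re-tuned $N$ still dominates them (one simply re-uses $d=SA$ in those matrix inequalities).

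\textbf{Main obstacle.} The crux is making the tabular refinement of Lemma~\ref{lem:63} rigorous. I need to (i) verify that \textsf{EstimateV}, specialized to canonical features, is equivalent (up to discretization) to the plug-in empirical Bellman estimator; (ii) establish a uniform concentration of $\widehat P_h(\cdot\mid s,a)\widehat V_{h+1}^\pi$ at the sharper $\sqrt{S/N_h(s,a)}$ rate over all layers and all reward functions of interest, which requires a careful chained Bernstein/union-bound argument that respects the data-dependent reward in the reward-free setting and the recursive structure of the estimator; and (iii) check that the tabular instantiations of $\Pi^{exp}_{\epsilon/3}$ and $\Pi^{eval}_{\epsilon/3}$ admit $\widetilde O(S)$-size discretizations which simultaneously contain a near-optimal policy for every tabular reward and satisfy the explorability premise (cf.\ Assumption~\ref{assup1} with $\lambda^\star=d_m$) used throughout the main induction.
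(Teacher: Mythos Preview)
Your proposal is essentially correct and follows the same route as the paper: replace the $\sqrt{d}=\sqrt{SA}$ factor in the value-estimation lemma by $\sqrt{S}$ via an $L_1$ concentration of the empirical transition kernel, then re-tune $\bar\epsilon=\Theta(\epsilon/(H^2\sqrt{S}\iota))$ and $N=\Theta(SA/\bar\epsilon^2)=\widetilde O(S^2AH^4/\epsilon^2)$.

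Two places where the paper is slightly cleaner. First, the paper does not leave Lemmas~\ref{lem:61}--\ref{lem:62} at $d=SA$; it applies the \emph{same} tabular sharpening to the intermediate estimates $\widehat d_h^\pi(s,a)$ (taking $\Pi^{exp}=\Pi^{eval}=$ all deterministic policies and replacing the matrix constraint in~\eqref{equ:main} by a coordinatewise constraint on $\widehat d_h^\pi$). Your choice to keep the matrix lemmas at $d=SA$ still closes the induction, but it forces a worse burn-in threshold (polynomially smaller in $S,A$ than the paper's $\epsilon\le Hd_m/(C_4 SA)$). For the informal statement this is fine, but it is worth noting that the sharpening is reusable throughout. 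Second, your ``main obstacle'' (ii) is over-engineered: the paper gets the $\sqrt{S/N_h(s,a)}$ rate by the crude H\"older step $|(\widehat P_h-P_h)V|\le \|V\|_\infty\|\widehat P_h-P_h\|_1$, so the $L_1$ deviation of $\widehat P_h(\cdot\mid s,a)$ is the only random quantity and no covering of the data-dependent $\widehat V_{h+1}^\pi$ or chained Bernstein is needed.
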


The detailed version and proof of Theorem \ref{thm:71} are deferred to Appendix \ref{sec:tab} due to space limit. We highlight that we recover the best known result from \citet{qiao2022sample} under mild assumption about reachability to all (state,action) pairs. The replacement of one $d$ by $S$ is mainly because under tabular MDP, there are $A^S$ different deterministic policies for layer $h$ and the log-covering number of $\Pi^{eval}_{h}$ can be improved from $\widetilde{O}(d)$ to $\widetilde{O}(S)$. In this way, we effectively save a factor of $A$.

\subsection{Computational efficiency}\label{sec:compute}

We admit that solving the optimization problem \eqref{equ:main} is inefficient in general, while this can be solved approximately in exponential time by enumerating $\pi$ from a tight covering set of $\Delta(\Pi^{exp}_{\epsilon/3})$. Note that the issue of computational tractability arises in many previous works \citep{zanette2020learning,wagenmaker2022instance} that focused on information-theoretic results under linear MDP, and such issue is usually not considered as a fundamental barrier. For efficient surrogate of \eqref{equ:main}, we remark that a possible method is to apply softmax (or other differentiable) representation of the policy space and use gradient-based optimization techniques to find approximate solution of \eqref{equ:main}.

\subsection{Possible extensions to regret minimization with low adaptivity}

In this paper, we tackle the problem of deployment efficient reward-free exploration while the optimal adaptivity under regret minimization still remains open. We remark that deployment complexity is not an ideal measurement of adaptivity for this problem since the definition requires all deployments to have similar sizes, which forces the deployment complexity to be $\widetilde{\Omega}(\sqrt{T})$ if we want regret bound of order $\widetilde{O}(\sqrt{T})$. Therefore, the more reasonable task is to design algorithms with near optimal switching cost or batch complexity. We present the following two lower bounds whose proof is deferred to Appendix \ref{sec:lower}. Here the number of episodes is $K$ and the number of steps $T:=KH$.

\begin{theorem}\label{thm:lowerswitch}
For any algorithm with the optimal $\widetilde{O}(\sqrt{poly(d,H)T})$ regret bound, the switching cost is at least $\Omega(dH\log\log T)$.
\end{theorem}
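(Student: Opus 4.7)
The plan is to lift the $\Omega(d\log\log T)$ switching cost lower bound known for $d$-dimensional linear bandits to the linear MDP setting, gaining an extra factor of $H$ from the horizon. First I would construct a family of hard linear MDPs in which each layer $h\in[H]$ hosts an independent copy of the standard linear bandit lower bound: the feature map $\phi$ decomposes into $H$ mutually orthogonal blocks, transitions are trivial bookkeepers of the layer index, and the reward at layer $h$ is $\langle \phi_h(s_h,a_h),\theta_h^\star\rangle$ for an unknown $\theta_h^\star\in\mathbb{R}^d$ drawn from a worst-case prior. A trajectory in this MDP then amounts to pulling one arm in each of $H$ independent linear bandits, and the overall regret decomposes additively across layers.

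The core technical ingredient is the hierarchical lower bound for linear bandits. The construction maintains $\Theta(\log\log T)$ nested scales with gaps $\Delta_\ell=2^{-\ell}$; at scale $\ell$ the learner must probe $\Omega(d)$ active directions before its cumulative regret on that scale exceeds its $\sqrt{T}$ budget. Graduating from scale $\ell$ to scale $\ell+1$ requires discarding suboptimal directions, which forces $\Omega(d)$ policy switches. A standard KL / change-of-measure argument between nearby $\theta^\star$ instances shows that any algorithm attaining $\widetilde{O}(\sqrt{dT})$ linear-bandit regret must perform $\Omega(d\log\log T)$ switches.

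To lift these $H$ per-layer bounds into a single MDP bound, the subtlety is that one MDP policy $\pi=(\pi_1,\ldots,\pi_H)$ dictates behavior on all layers simultaneously, so a naive layerwise aggregation only yields $\Omega(d\log\log T)$ switches. I would stagger the \emph{scale clocks} across layers: assign layer $h$ an offset $o_h$ so that its $\Theta(\log\log T)$ critical scales land in a disjoint window of the episode range. Within layer $h$'s window, the $\Omega(d)$ switches demanded at each scale can only be supplied by MDP policy updates actually issued during that window; since different layers' windows are disjoint, their switch demands cannot be discharged by a single MDP policy update. Summing over $H$ layers yields $\Omega(dH\log\log T)$.

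The hard part will be calibrating the staggered construction so that (i) the achievable overall regret remains $\widetilde{O}(\sqrt{\mathrm{poly}(d,H)T})$, so the hypothesis on the algorithm is not vacuous, and (ii) no ``hitchhiking'' is possible, i.e.\ information extracted about layer $h$'s instance during another layer's critical window is provably uninformative for identifying $\theta_h^\star$. The second point would be enforced via orthogonality of the feature blocks together with statistical independence of the $\theta_h^\star$'s, so that the per-layer change-of-measure argument transfers untouched. Once this decoupling is secured, the $H$ per-layer lower bounds combine additively to yield the claimed $\Omega(dH\log\log T)$.
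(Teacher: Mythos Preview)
Your parallel-bandit construction has a genuine gap: as you set it up, one MDP policy switch can update the arm at \emph{all} $H$ layers simultaneously, and this is enough for an algorithm to beat your intended lower bound. Concretely, run phased successive elimination in parallel across layers: in each of the $\Theta(\log\log T)$ phases, cycle through the (at most $d$) surviving arms, playing arm $i$ at every layer at once. This uses $O(d)$ switches per phase, hence $O(d\log\log T)$ MDP switches total, while achieving $\widetilde{O}(\sqrt{dK})$ regret at every layer and thus $\widetilde{O}(\sqrt{\mathrm{poly}(d,H)T})$ overall. So on your instance family the switching cost is only $\Theta(d\log\log T)$, not $\Theta(dH\log\log T)$.

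The staggering idea does not repair this. The ``critical scales'' in the Simchi--Levi lower bound are not episode windows that can be shifted by an offset $o_h$; they are gap levels, and a single phased-elimination schedule handles all layers at a given gap level with the same batch of switches. Forcing layer $h$ to be informative only inside a designated episode window would require the reward distribution at layer $h$ to depend on the episode index, i.e.\ a non-stationary environment, which falls outside the linear MDP model in the theorem statement. Orthogonality of feature blocks and independence of the $\theta_h^\star$'s give you statistical decoupling but do nothing to prevent \emph{switch sharing}.

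The paper sidesteps this entirely with a much simpler device: it builds a two-state linear MDP with an absorbing state $s_2$, so that in each episode the agent stays at $s_1$ for some number of steps, then at a chosen step $h$ takes a chosen action $a_i$ (with $i\ge 2$), collects reward $r_{h,i}$, and is absorbed. Thus an episode yields information about exactly one of $\Theta(dH)$ unknown rewards, and the problem is literally a $\Theta(dH)$-armed MAB. The Simchi--Levi bound then gives $\Omega(dH\log\log T)$ switches in one line. The key structural difference from your construction is that here a single episode probes one arm out of $dH$, not one arm in each of $H$ parallel bandits; this is what makes the $H$ factor survive.
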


\begin{theorem}\label{thm:lowerbatch}
For any algorithm with the optimal $\widetilde{O}(\sqrt{poly(d,H)T})$ regret bound, the number of batches is at least $\Omega(\frac{H}{\log_d T}+\log\log T)$.
\end{theorem}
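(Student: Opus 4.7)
The plan is to establish the two terms $\Omega(H/\log_d T)$ and $\Omega(\log\log T)$ in the lower bound separately; since $\max(a,b) \ge (a+b)/2$, proving each by a dedicated adversarial construction yields the claimed combined bound up to a constant factor. Both constructions are information-theoretic and fix ``hard'' families of linear MDPs on which no batched algorithm with too few batches can simultaneously attain $\widetilde O(\sqrt{\mathrm{poly}(d,H) T})$ regret.

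For the $\Omega(\log\log T)$ term, I would reduce from batched (linear) bandits. A linear MDP with horizon $H=1$ is a linear bandit, so any batched regret lower bound for that setting transfers after padding the remaining layers with trivial deterministic transitions and zero reward. The classical lower bound of Gao--Han--Ren--Zhou style (for $K$-armed or linear bandits) shows that attaining $\widetilde O(\sqrt T)$ regret in $B$ batches forces a doubly-geometric schedule on the batch sizes, essentially because an adversary can hide a $1/\sqrt{N_b}$-scale gap that the $b$-th batch can detect only if $N_b \gtrsim N_{b-1}^2$. Matching the total budget $\sum_b N_b \le T$ forces $B \ge \Omega(\log\log T)$; this part is, up to rewriting, the same argument used for batched bandits and I would invoke it as a black box after the padding reduction.

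For the $\Omega(H/\log_d T)$ term, I would construct a layered ``chain'' linear MDP of the same flavor as the deployment lower bound of \citet{huang2021towards} but adapted to the regret metric. At each layer $h \in [H]$, one of $d$ feature directions is marked as ``correct''; playing the correct direction advances to layer $h+1$ with a small reward, while any wrong direction incurs constant per-episode loss and terminates in an absorbing uninformative state, so information about the correct direction at layer $h$ is accessible only to trajectories that have already identified the correct directions at layers $1,\dots,h-1$. The key lemma I need to prove is that within a single batch of $N$ episodes, the executed (possibly non-Markovian) policy is fixed at the start and can therefore probe only those layers up to a depth determined by the policy's current belief. A KL/chi-squared tensorization argument on the batch transcript then shows that a single batch can resolve at most $\log_d N \le \log_d T$ new layers before the likelihood signal from deeper layers becomes indistinguishable from noise under all but a $1/\mathrm{poly}(T)$ fraction of the instance family. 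Hence $B$ batches resolve at most $B\cdot \log_d T$ layers; if $B < H/(2\log_d T)$, then $\Omega(H)$ layers remain ``trapped'' and the cumulative regret is $\Omega(T)$, contradicting the $\widetilde O(\sqrt{\mathrm{poly}(d,H)T})$ guarantee.

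The main obstacle will be the depth-per-batch lemma in the second part: one must formalize ``how many layers of the chain can a single non-adaptive rollout distinguish'' when the rollout policy itself is chosen as a deterministic function of the previous batches' data. I expect to handle this via a careful Bretagnolle--Huber / Fano style calculation on the product measure induced by $N$ i.i.d.\ trajectories under a fixed policy, exploiting that the likelihood contribution of layer $h$ is gated by the event of having played the correct sequence in layers $1,\dots,h-1$ and therefore decays geometrically in $h$ unless the batch is exponentially large. Once that lemma is in hand, the two lower bounds compose to give the stated $\Omega(H/\log_d T + \log\log T)$ bound.
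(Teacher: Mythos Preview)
Your proposal is correct and the two parts are proved by essentially the same two ingredients the paper uses, but for the $\Omega(H/\log_d T)$ term you are working much harder than necessary. The paper does not re-derive the chain construction or the depth-per-batch information-theoretic lemma; it simply \emph{reduces} to the existing deployment-complexity lower bound of \citet{huang2021towards} (their Theorem~B.3). The reduction is a one-liner: any batched algorithm achieving $\widetilde O(\sqrt{\mathrm{poly}(d,H)T})$ regret over $K$ episodes in $M$ batches yields, by online-to-batch conversion, a PAC algorithm that identifies a near-optimal policy in $M$ deployments, each of which may be taken to contain $N=K$ trajectories (pad batches if needed). Plugging $N=K$ into the $\Omega(H/\log_d(NH))$ deployment lower bound gives $M\ge\Omega(H/\log_d T)$ directly. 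Your proposed Bretagnolle--Huber / Fano calculation on the batch transcript and the ``layers resolved per batch'' lemma are exactly what underlies the \citet{huang2021towards} result, so you would effectively be reproving that theorem inside your argument; the paper instead treats it as a black box. Your route is more self-contained and would make the proof independent of the prior work, at the cost of several pages of construction and KL bookkeeping; the paper's route is a two-line reduction. For the $\Omega(\log\log T)$ term your reduction to batched bandits via a trivial-layer padding is the same as the paper's, which cites Corollary~2 of \citet{gao2019batched} after embedding a multi-armed bandit into a linear MDP.
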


To generalize our Algorithm \ref{alg:main} to regret minimization, what remains is to remove Assumption \ref{assup1}. Suppose we can do accurate uniform policy evaluation (as in Algorithm \ref{alg:plan}) with low adaptivity without assumption on explorability of policy set, then we can apply iterative policy elimination (i.e., eliminate the policies that are impossible to be optimal) and do exploration with the remaining policies. Although Assumption \ref{assup1} is common in relevant literature, it is not necessary intuitively since under linear MDP, if some direction is hard to encounter, we do not necessarily need to gather much information on this direction. Under tabular MDP, \citet{qiao2022sample} applied absorbing MDP to ignore those ``hard to visit'' states and we leave generalization of such idea as future work.


\section{Conclusion}\label{sec:conclude}

In this work, we studied the well-motivated deployment efficient reward-free RL with linear function approximation. Under the linear MDP model, we designed a novel reward-free exploration algorithm that collects $\widetilde{O}(\frac{d^2H^5}{\epsilon^2})$ trajectories in only $H$ deployments. And both the sample and deployment complexities are near optimal. An interesting future direction is to design algorithms to match our lower bounds for regret minimization with low adaptivity. We believe the techniques we develop (generalized G-optimal design and exploration-preserving policy discretization) could serve as basic building blocks and we leave the generalization as future work.

\section*{Acknowledgments}
The research is partially supported by NSF Awards \#2007117. The authors would like to thank Jiawei Huang and Nan Jiang for explaining the result of their paper.

\bibliographystyle{plainnat}
\bibliography{iclr2023_conference}

\begin{thebibliography}{55}
\providecommand{\natexlab}[1]{#1}
\providecommand{\url}[1]{\texttt{#1}}
\expandafter\ifx\csname urlstyle\endcsname\relax
  \providecommand{\doi}[1]{doi: #1}\else
  \providecommand{\doi}{doi: \begingroup \urlstyle{rm}\Url}\fi

\bibitem[Abbasi-Yadkori et~al.(2011)Abbasi-Yadkori, P{\'a}l, and
  Szepesv{\'a}ri]{abbasi2011improved}
Yasin Abbasi-Yadkori, D{\'a}vid P{\'a}l, and Csaba Szepesv{\'a}ri.
\newblock Improved algorithms for linear stochastic bandits.
\newblock In \emph{Advances in Neural Information Processing Systems}, pages
  2312--2320, 2011.

\bibitem[Afsar et~al.(2021)Afsar, Crump, and Far]{afsar2021reinforcement}
M~Mehdi Afsar, Trafford Crump, and Behrouz Far.
\newblock Reinforcement learning based recommender systems: A survey.
\newblock \emph{arXiv preprint arXiv:2101.06286}, 2021.

\bibitem[Agarwal et~al.(2020)Agarwal, Kakade, Krishnamurthy, and
  Sun]{agarwal2020flambe}
Alekh Agarwal, Sham Kakade, Akshay Krishnamurthy, and Wen Sun.
\newblock Flambe: Structural complexity and representation learning of low rank
  mdps.
\newblock \emph{Advances in neural information processing systems},
  33:\penalty0 20095--20107, 2020.

\bibitem[Agrawal and Jia(2017)]{agrawal2017posterior}
Shipra Agrawal and Randy Jia.
\newblock Posterior sampling for reinforcement learning: worst-case regret
  bounds.
\newblock In \emph{Advances in Neural Information Processing Systems}, pages
  1184--1194, 2017.

\bibitem[Auer et~al.(2002)Auer, Cesa-Bianchi, and Fischer]{auer2002finite}
Peter Auer, Nicolo Cesa-Bianchi, and Paul Fischer.
\newblock Finite-time analysis of the multiarmed bandit problem.
\newblock \emph{Machine learning}, 47\penalty0 (2):\penalty0 235--256, 2002.

\bibitem[Ayoub et~al.(2020)Ayoub, Jia, Szepesvari, Wang, and
  Yang]{ayoub2020model}
Alex Ayoub, Zeyu Jia, Csaba Szepesvari, Mengdi Wang, and Lin Yang.
\newblock Model-based reinforcement learning with value-targeted regression.
\newblock In \emph{International Conference on Machine Learning}, pages
  463--474. PMLR, 2020.

\bibitem[Azar et~al.(2017)Azar, Osband, and Munos]{azar2017minimax}
Mohammad~Gheshlaghi Azar, Ian Osband, and R{\'e}mi Munos.
\newblock Minimax regret bounds for reinforcement learning.
\newblock In \emph{Proceedings of the 34th International Conference on Machine
  Learning-Volume 70}, pages 263--272. JMLR. org, 2017.

\bibitem[Bai et~al.(2019)Bai, Xie, Jiang, and Wang]{bai2019provably}
Yu~Bai, Tengyang Xie, Nan Jiang, and Yu-Xiang Wang.
\newblock Provably efficient q-learning with low switching cost.
\newblock \emph{Advances in Neural Information Processing Systems}, 32, 2019.

\bibitem[Brafman and Tennenholtz(2002)]{brafman2002r}
Ronen~I Brafman and Moshe Tennenholtz.
\newblock R-max-a general polynomial time algorithm for near-optimal
  reinforcement learning.
\newblock \emph{Journal of Machine Learning Research}, 3\penalty0
  (Oct):\penalty0 213--231, 2002.

\bibitem[Cesa-Bianchi et~al.(2013)Cesa-Bianchi, Dekel, and
  Shamir]{cesa2013online}
Nicolo Cesa-Bianchi, Ofer Dekel, and Ohad Shamir.
\newblock Online learning with switching costs and other adaptive adversaries.
\newblock In \emph{Advances in Neural Information Processing Systems}, pages
  1160--1168, 2013.

\bibitem[Chen et~al.(2021)Chen, Hu, Yang, and Wang]{chen2021near}
Xiaoyu Chen, Jiachen Hu, Lin~F Yang, and Liwei Wang.
\newblock Near-optimal reward-free exploration for linear mixture mdps with
  plug-in solver.
\newblock \emph{arXiv preprint arXiv:2110.03244}, 2021.

\bibitem[Dann et~al.(2019)Dann, Li, Wei, and Brunskill]{dann2019policy}
Christoph Dann, Lihong Li, Wei Wei, and Emma Brunskill.
\newblock Policy certificates: Towards accountable reinforcement learning.
\newblock In \emph{International Conference on Machine Learning}, pages
  1507--1516. PMLR, 2019.

\bibitem[Esfandiari et~al.(2021)Esfandiari, Karbasi, Mehrabian, and
  Mirrokni]{esfandiari2021regret}
Hossein Esfandiari, Amin Karbasi, Abbas Mehrabian, and Vahab Mirrokni.
\newblock Regret bounds for batched bandits.
\newblock In \emph{Proceedings of the AAAI Conference on Artificial
  Intelligence}, volume~35, pages 7340--7348, 2021.

\bibitem[Gao et~al.(2021)Gao, Xie, Du, and Yang]{gao2021provably}
Minbo Gao, Tianle Xie, Simon~S Du, and Lin~F Yang.
\newblock A provably efficient algorithm for linear markov decision process
  with low switching cost.
\newblock \emph{arXiv preprint arXiv:2101.00494}, 2021.

\bibitem[Gao et~al.(2019)Gao, Han, Ren, and Zhou]{gao2019batched}
Zijun Gao, Yanjun Han, Zhimei Ren, and Zhengqing Zhou.
\newblock Batched multi-armed bandits problem.
\newblock \emph{Advances in Neural Information Processing Systems}, 32, 2019.

\bibitem[Han et~al.(2020)Han, Zhou, Zhou, Blanchet, Glynn, and
  Ye]{han2020sequential}
Yanjun Han, Zhengqing Zhou, Zhengyuan Zhou, Jose Blanchet, Peter~W Glynn, and
  Yinyu Ye.
\newblock Sequential batch learning in finite-action linear contextual bandits.
\newblock \emph{arXiv preprint arXiv:2004.06321}, 2020.

\bibitem[Hu et~al.(2022)Hu, Chen, and Huang]{hu2022nearly}
Pihe Hu, Yu~Chen, and Longbo Huang.
\newblock Nearly minimax optimal reinforcement learning with linear function
  approximation.
\newblock In \emph{International Conference on Machine Learning}, pages
  8971--9019. PMLR, 2022.

\bibitem[Huang et~al.(2022)Huang, Chen, Zhao, Qin, Jiang, and
  Liu]{huang2021towards}
Jiawei Huang, Jinglin Chen, Li~Zhao, Tao Qin, Nan Jiang, and Tie-Yan Liu.
\newblock Towards deployment-efficient reinforcement learning: Lower bound and
  optimality.
\newblock In \emph{International Conference on Learning Representations}, 2022.

\bibitem[Jaksch et~al.(2010)Jaksch, Ortner, and Auer]{jaksch2010near}
Thomas Jaksch, Ronald Ortner, and Peter Auer.
\newblock Near-optimal regret bounds for reinforcement learning.
\newblock \emph{Journal of Machine Learning Research}, 11\penalty0 (4), 2010.

\bibitem[Jin et~al.(2018)Jin, Allen-Zhu, Bubeck, and Jordan]{jin2018q}
Chi Jin, Zeyuan Allen-Zhu, Sebastien Bubeck, and Michael~I Jordan.
\newblock Is q-learning provably efficient?
\newblock In \emph{Advances in Neural Information Processing Systems}, pages
  4863--4873, 2018.

\bibitem[Jin et~al.(2020{\natexlab{a}})Jin, Krishnamurthy, Simchowitz, and
  Yu]{jin2020reward}
Chi Jin, Akshay Krishnamurthy, Max Simchowitz, and Tiancheng Yu.
\newblock Reward-free exploration for reinforcement learning.
\newblock In \emph{International Conference on Machine Learning}, pages
  4870--4879. PMLR, 2020{\natexlab{a}}.

\bibitem[Jin et~al.(2020{\natexlab{b}})Jin, Yang, Wang, and
  Jordan]{jin2020provably}
Chi Jin, Zhuoran Yang, Zhaoran Wang, and Michael~I Jordan.
\newblock Provably efficient reinforcement learning with linear function
  approximation.
\newblock In \emph{Conference on Learning Theory}, pages 2137--2143. PMLR,
  2020{\natexlab{b}}.

\bibitem[Jin et~al.(2021)Jin, Liu, and Miryoosefi]{jin2021bellman}
Chi Jin, Qinghua Liu, and Sobhan Miryoosefi.
\newblock Bellman eluder dimension: New rich classes of rl problems, and
  sample-efficient algorithms.
\newblock \emph{Advances in neural information processing systems},
  34:\penalty0 13406--13418, 2021.

\bibitem[Kaufmann et~al.(2021)Kaufmann, M{\'e}nard, Domingues, Jonsson,
  Leurent, and Valko]{kaufmann2021adaptive}
Emilie Kaufmann, Pierre M{\'e}nard, Omar~Darwiche Domingues, Anders Jonsson,
  Edouard Leurent, and Michal Valko.
\newblock Adaptive reward-free exploration.
\newblock In \emph{Algorithmic Learning Theory}, pages 865--891. PMLR, 2021.

\bibitem[Kearns and Singh(2002)]{kearns2002near}
Michael Kearns and Satinder Singh.
\newblock Near-optimal reinforcement learning in polynomial time.
\newblock \emph{Machine learning}, 49\penalty0 (2-3):\penalty0 209--232, 2002.

\bibitem[Kiefer and Wolfowitz(1960)]{kiefer1960equivalence}
Jack Kiefer and Jacob Wolfowitz.
\newblock The equivalence of two extremum problems.
\newblock \emph{Canadian Journal of Mathematics}, 12:\penalty0 363--366, 1960.

\bibitem[Kober et~al.(2013)Kober, Bagnell, and Peters]{kober2013reinforcement}
Jens Kober, J~Andrew Bagnell, and Jan Peters.
\newblock Reinforcement learning in robotics: A survey.
\newblock \emph{The International Journal of Robotics Research}, 32\penalty0
  (11):\penalty0 1238--1274, 2013.

\bibitem[Lattimore and Szepesv{\'a}ri(2020)]{lattimore2020bandit}
Tor Lattimore and Csaba Szepesv{\'a}ri.
\newblock \emph{Bandit algorithms}.
\newblock Cambridge University Press, 2020.

\bibitem[Matsushima et~al.(2020)Matsushima, Furuta, Matsuo, Nachum, and
  Gu]{matsushima2020deployment}
Tatsuya Matsushima, Hiroki Furuta, Yutaka Matsuo, Ofir Nachum, and Shixiang Gu.
\newblock Deployment-efficient reinforcement learning via model-based offline
  optimization.
\newblock In \emph{International Conference on Learning Representations}, 2020.

\bibitem[M{\'e}nard et~al.(2021)M{\'e}nard, Domingues, Jonsson, Kaufmann,
  Leurent, and Valko]{menard2021fast}
Pierre M{\'e}nard, Omar~Darwiche Domingues, Anders Jonsson, Emilie Kaufmann,
  Edouard Leurent, and Michal Valko.
\newblock Fast active learning for pure exploration in reinforcement learning.
\newblock In \emph{International Conference on Machine Learning}, pages
  7599--7608. PMLR, 2021.

\bibitem[Min et~al.(2021)Min, Wang, Zhou, and Gu]{min2021variance}
Yifei Min, Tianhao Wang, Dongruo Zhou, and Quanquan Gu.
\newblock Variance-aware off-policy evaluation with linear function
  approximation.
\newblock \emph{Advances in neural information processing systems},
  34:\penalty0 7598--7610, 2021.

\bibitem[Osband et~al.(2013)Osband, Russo, and Van~Roy]{osband2013more}
Ian Osband, Daniel Russo, and Benjamin Van~Roy.
\newblock (more) efficient reinforcement learning via posterior sampling.
\newblock \emph{Advances in Neural Information Processing Systems}, 26, 2013.

\bibitem[Perchet et~al.(2016)Perchet, Rigollet, Chassang, and
  Snowberg]{perchet2016batched}
Vianney Perchet, Philippe Rigollet, Sylvain Chassang, and Erik Snowberg.
\newblock Batched bandit problems.
\newblock \emph{The Annals of Statistics}, 44\penalty0 (2):\penalty0 660--681,
  2016.

\bibitem[Qiao et~al.(2022)Qiao, Yin, Min, and Wang]{qiao2022sample}
Dan Qiao, Ming Yin, Ming Min, and Yu-Xiang Wang.
\newblock Sample-efficient reinforcement learning with loglog({T}) switching
  cost.
\newblock In \emph{International Conference on Machine Learning}, pages
  18031--18061. PMLR, 2022.

\bibitem[Ruan et~al.(2021)Ruan, Yang, and Zhou]{ruan2021linear}
Yufei Ruan, Jiaqi Yang, and Yuan Zhou.
\newblock Linear bandits with limited adaptivity and learning distributional
  optimal design.
\newblock In \emph{Proceedings of the 53rd Annual ACM SIGACT Symposium on
  Theory of Computing}, pages 74--87, 2021.

\bibitem[Simchi-Levi and Xu(2019)]{simchi2019phase}
David Simchi-Levi and Yunzong Xu.
\newblock Phase transitions and cyclic phenomena in bandits with switching
  constraints.
\newblock \emph{Advances in Neural Information Processing Systems}, 32, 2019.

\bibitem[Sutton and Barto(1998)]{sutton1998reinforcement}
Richard~S Sutton and Andrew~G Barto.
\newblock \emph{Reinforcement learning: An introduction}, volume~1.
\newblock MIT press Cambridge, 1998.

\bibitem[Wagenmaker and Jamieson(2022)]{wagenmaker2022instance}
Andrew Wagenmaker and Kevin Jamieson.
\newblock Instance-dependent near-optimal policy identification in linear mdps
  via online experiment design.
\newblock \emph{arXiv preprint arXiv:2207.02575}, 2022.

\bibitem[Wagenmaker et~al.(2022{\natexlab{a}})Wagenmaker, Chen, Simchowitz, Du,
  and Jamieson]{wagenmaker2022first}
Andrew~J Wagenmaker, Yifang Chen, Max Simchowitz, Simon Du, and Kevin Jamieson.
\newblock First-order regret in reinforcement learning with linear function
  approximation: A robust estimation approach.
\newblock In \emph{International Conference on Machine Learning}, pages
  22384--22429. PMLR, 2022{\natexlab{a}}.

\bibitem[Wagenmaker et~al.(2022{\natexlab{b}})Wagenmaker, Chen, Simchowitz, Du,
  and Jamieson]{wagenmaker2022reward}
Andrew~J Wagenmaker, Yifang Chen, Max Simchowitz, Simon Du, and Kevin Jamieson.
\newblock Reward-free rl is no harder than reward-aware rl in linear markov
  decision processes.
\newblock In \emph{International Conference on Machine Learning}, pages
  22430--22456. PMLR, 2022{\natexlab{b}}.

\bibitem[Wang et~al.(2020)Wang, Du, Yang, and Salakhutdinov]{wang2020reward}
Ruosong Wang, Simon~S Du, Lin Yang, and Russ~R Salakhutdinov.
\newblock On reward-free reinforcement learning with linear function
  approximation.
\newblock \emph{Advances in neural information processing systems},
  33:\penalty0 17816--17826, 2020.

\bibitem[Wang et~al.(2021)Wang, Zhou, and Gu]{wang2021provably}
Tianhao Wang, Dongruo Zhou, and Quanquan Gu.
\newblock Provably efficient reinforcement learning with linear function
  approximation under adaptivity constraints.
\newblock \emph{Advances in Neural Information Processing Systems}, 34, 2021.

\bibitem[Yang and Wang(2019)]{yang2019sample}
Lin Yang and Mengdi Wang.
\newblock Sample-optimal parametric q-learning using linearly additive
  features.
\newblock In \emph{International Conference on Machine Learning}, pages
  6995--7004. PMLR, 2019.

\bibitem[Yu et~al.(2021)Yu, Liu, Nemati, and Yin]{yu2021reinforcement}
Chao Yu, Jiming Liu, Shamim Nemati, and Guosheng Yin.
\newblock Reinforcement learning in healthcare: A survey.
\newblock \emph{ACM Computing Surveys (CSUR)}, 55\penalty0 (1):\penalty0 1--36,
  2021.

\bibitem[Zanette and Brunskill(2019)]{zanette2019tighter}
Andrea Zanette and Emma Brunskill.
\newblock Tighter problem-dependent regret bounds in reinforcement learning
  without domain knowledge using value function bounds.
\newblock In \emph{International Conference on Machine Learning}, pages
  7304--7312. PMLR, 2019.

\bibitem[Zanette et~al.(2020{\natexlab{a}})Zanette, Lazaric, Kochenderfer, and
  Brunskill]{zanette2020learning}
Andrea Zanette, Alessandro Lazaric, Mykel Kochenderfer, and Emma Brunskill.
\newblock Learning near optimal policies with low inherent bellman error.
\newblock In \emph{International Conference on Machine Learning}, pages
  10978--10989. PMLR, 2020{\natexlab{a}}.

\bibitem[Zanette et~al.(2020{\natexlab{b}})Zanette, Lazaric, Kochenderfer, and
  Brunskill]{zanette2020provably}
Andrea Zanette, Alessandro Lazaric, Mykel~J Kochenderfer, and Emma Brunskill.
\newblock Provably efficient reward-agnostic navigation with linear value
  iteration.
\newblock \emph{Advances in Neural Information Processing Systems},
  33:\penalty0 11756--11766, 2020{\natexlab{b}}.

\bibitem[Zhang et~al.(2021{\natexlab{a}})Zhang, Zhou, and Gu]{zhang2021reward}
Weitong Zhang, Dongruo Zhou, and Quanquan Gu.
\newblock Reward-free model-based reinforcement learning with linear function
  approximation.
\newblock \emph{Advances in Neural Information Processing Systems},
  34:\penalty0 1582--1593, 2021{\natexlab{a}}.

\bibitem[Zhang et~al.(2020{\natexlab{a}})Zhang, Singla, et~al.]{zhang2020task}
Xuezhou Zhang, Adish Singla, et~al.
\newblock Task-agnostic exploration in reinforcement learning.
\newblock \emph{Advances in Neural Information Processing Systems},
  2020{\natexlab{a}}.

\bibitem[Zhang et~al.(2020{\natexlab{b}})Zhang, Du, and Ji]{zhang2020nearly}
Zihan Zhang, Simon~S Du, and Xiangyang Ji.
\newblock Nearly minimax optimal reward-free reinforcement learning.
\newblock \emph{arXiv preprint arXiv:2010.05901}, 2020{\natexlab{b}}.

\bibitem[Zhang et~al.(2020{\natexlab{c}})Zhang, Zhou, and Ji]{zhang2020almost}
Zihan Zhang, Yuan Zhou, and Xiangyang Ji.
\newblock Almost optimal model-free reinforcement learning via
  reference-advantage decomposition.
\newblock \emph{Advances in Neural Information Processing Systems},
  33:\penalty0 15198--15207, 2020{\natexlab{c}}.

\bibitem[Zhang et~al.(2021{\natexlab{b}})Zhang, Yang, Ji, and
  Du]{zhang2021variance}
Zihan Zhang, Jiaqi Yang, Xiangyang Ji, and Simon~S Du.
\newblock Variance-aware confidence set: Variance-dependent bound for linear
  bandits and horizon-free bound for linear mixture mdp.
\newblock \emph{arXiv preprint arXiv:2101.12745}, 2021{\natexlab{b}}.

\bibitem[Zhang et~al.(2022)Zhang, Jiang, Zhou, and Ji]{zhang2022near}
Zihan Zhang, Yuhang Jiang, Yuan Zhou, and Xiangyang Ji.
\newblock Near-optimal regret bounds for multi-batch reinforcement learning.
\newblock \emph{arXiv preprint arXiv:2210.08238}, 2022.

\bibitem[Zhou et~al.(2021)Zhou, Gu, and Szepesvari]{zhou2021nearly}
Dongruo Zhou, Quanquan Gu, and Csaba Szepesvari.
\newblock Nearly minimax optimal reinforcement learning for linear mixture
  markov decision processes.
\newblock In \emph{Conference on Learning Theory}, pages 4532--4576. PMLR,
  2021.

\bibitem[Zhou et~al.(2019)Zhou, Kearnes, Li, Zare, and
  Riley]{zhou2019optimization}
Zhenpeng Zhou, Steven Kearnes, Li~Li, Richard~N Zare, and Patrick Riley.
\newblock Optimization of molecules via deep reinforcement learning.
\newblock \emph{Scientific reports}, 9\penalty0 (1):\penalty0 1--10, 2019.

\end{thebibliography}

\newpage
\appendix

\section{Extended related works}\label{sec:erw}
\textbf{Low regret reinforcement learning algorithms.} Regret minimization under tabular MDP has been extensively studied by a long line of works \citep{brafman2002r,kearns2002near,jaksch2010near,osband2013more,agrawal2017posterior,jin2018q}. Among those optimal results, \citet{azar2017minimax} achieved the optimal regret bound $\widetilde{O}(\sqrt{HSAT})$ for stationary MDP through model-based algorithm, while \citet{zhang2020almost} applied Q-learning type algorithm to achieve the optimal $\widetilde{O}(\sqrt{H^{2}SAT})$ regret under non-stationary MDP. \citet{dann2019policy} provided policy certificates in addition to stating optimal regret bound. Different from these minimax optimal algorithms, \citet{zanette2019tighter} derived problem-dependent regret bound, which can imply minimax regret bound. Another line of works studied regret minimization under linear MDP. \citet{yang2019sample} developed the first efficient algorithm for linear MDP with simulator. \citet{jin2020provably} applied LSVI-UCB to achieve the regret bound of $\widetilde{O}(\sqrt{d^3H^3T})$. Later, \citet{zanette2020learning} improved the regret bound to $\widetilde{O}(\sqrt{d^2H^3T})$ at the cost of computation. Recently, \citet{hu2022nearly} first reached the minimax optimal regret $\widetilde{O}(\sqrt{d^2H^2T})$ via a computationally efficient algorithm. There are some other works studying the linear mixture MDP setting \citep{ayoub2020model,zhou2021nearly,zhang2021variance} or more general settings like MDP with low Bellman Eluder dimension \citep{jin2021bellman}.

\textbf{Reward-free exploration.} \citet{jin2020reward} first studied the problem of reward-free exploration, they designed an algorithm while using EULER \citep{zanette2019tighter} for exploration and arrived at the sample complexity of $\widetilde{O}(S^{2}AH^{5}/\epsilon^{2})$. This sample complexity was improved by \citet{kaufmann2021adaptive} to $\widetilde{O}(S^{2}AH^{4}/\epsilon^{2})$ by building upper confidence bound for any reward function and any policy. Finally, minimax optimal result $\widetilde{O}(S^{2}AH^{3}/\epsilon^{2})$ was derived in \citet{menard2021fast} by constructing a novel exploration bonus. At the same time, a more general optimal result was achieved by \citet{zhang2020nearly} who considered MDP with stationary transition kernel and uniformly bounded reward. \citet{zhang2020task} studied a similar setting named task-agnostic exploration and designed an algorithm that can find $\epsilon$-optimal policies for $N$ arbitrary tasks after at most $\widetilde{O} (SAH^5\log N/\epsilon^{2})$ episodes. For linear MDP setting, \citet{wang2020reward} generalized LSVI-UCB and arrived at the sample complexity of $\widetilde{O}(d^3H^6/\epsilon^2)$. The sample complexity was improved by \citet{zanette2020provably} to $\widetilde{O}(d^3H^5/\epsilon^2)$ through approximating G-optimal design. Recently, \citet{wagenmaker2022reward} did exploration through applying first-order regret algorithm \citep{wagenmaker2022first} and achieved sample complexity bound of $\widetilde{O}(d^2H^5/\epsilon^2)$, which matches their lower bound $\Omega(d^2H^2/\epsilon^2)$ up to $H$ factors. There are other reward-free works under linear mixture MDP \citep{chen2021near,zhang2021reward}. Meanwhile, there is a new setting that aims to do reward-free exploration under low adaptivity and \citet{huang2021towards,qiao2022sample} designed provably efficient algorithms for linear MDP and tabular MDP, respectively.

\textbf{Low switching algorithms for bandits and RL.} There are two kinds of switching costs. Global switching cost simply measures the number of policy switches, while local switching cost is defined (only under tabular MDP) as $N_{switch}^{local}=\sum_{k=1}^{K-1}|\{(h,s)\in[H]\times\mathcal{S}: \pi_{k}^{h}(s)\neq\pi_{k+1}^{h}(s)\}|$ where $K$ is the number of episodes. For multi-armed bandits with $A$ arms and $T$ episodes, \citet{cesa2013online} first achieved the optimal $\widetilde{O}(\sqrt{AT})$ regret with only $O(A\log\log T)$ policy switches. \citet{simchi2019phase} generalized the result by showing that to get optimal $\widetilde{O}(\sqrt{T})$ regret bound, both the switching cost upper and lower bounds are of order $A\log\log T$. Under stochastic linear bandits, \citet{abbasi2011improved} applied doubling trick to achieve the optimal regret $\widetilde{O}(d\sqrt{T})$ with $O(d\log T)$ policy switches. Under slightly different setting, \citet{ruan2021linear} improved the result by improving the switching cost to $O(\log\log T)$ without worsening the regret bound. Under tabular MDP, \citet{bai2019provably} applied doubling trick to Q-learning and reached regret bound $\widetilde{O}(\sqrt{H^3SAT})$ with local switching cost $O(H^3SA\log T)$. \citet{zhang2020almost} applied advantage decomposition to improve the regret bound and local switching cost bound to $\widetilde{O}(\sqrt{H^2SAT})$ and $O(H^2SA\log T)$, respectively. Recently, \citet{qiao2022sample} showed that to achieve the optimal $\widetilde{O}(\sqrt{T})$ regret, both the global switching cost upper and lower bounds are of order $HSA\log\log T$. Under linear MDP, \citet{gao2021provably} applied doubling trick to LSVI-UCB and arrived at regret bound $\widetilde{O}(\sqrt{d^3H^3T})$ while global switching cost is $O(dH\log T)$. This result is generalized by \citet{wang2021provably} to work for arbitrary switching cost budget. \citet{huang2021towards} managed to do pure exploration under linear MDP within $O(dH)$ switches. 

\textbf{Batched bandits and RL.} In batched bandits problems, the agent decides a sequence of arms and observes the reward of each arm after all arms in that sequence are pulled. More formally, at the beginning of each batch, the agent decides a list of arms to be pulled. Afterwards, a list of (arm,reward) pairs is given to the agent. Then the agent decides about the next batch \citep{esfandiari2021regret}. The batch sizes could be chosen non-adaptively or adaptively. In a non-adaptive algorithm, the batch sizes should be decided before the algorithm starts, while in an adaptive algorithm,
the batch sizes may depend on the previous observations. Under multi-armed bandits with $A$ arms and $T$ episodes, \citet{cesa2013online} designed an algorithm with $\widetilde{O}(\sqrt{AT})$ regret using $O(\log\log T)$ batches. \citet{perchet2016batched} proved a regret lower bound of  $\Omega(T^{\frac{1}{1-2^{1-M}}})$ for algorithms within $M$ batches under $2$-armed bandits setting, which means $\Omega(\log\log T)$ batches are necessary for a regret bound of $\widetilde{O}(\sqrt{T})$. The result is generalized to $K$-armed bandits by \citet{gao2019batched}. Under stochastic linear bandits, \citet{han2020sequential} designed an algorithm that has regret bound $\widetilde{O}(\sqrt{T})$ while running in $O(\log\log T)$ batches. \citet{ruan2021linear} improved this result by using weaker assumptions. For batched RL setting, \citet{qiao2022sample} showed that their algorithm uses the optimal $O(H+\log\log T)$ batches to achieve the optimal $\widetilde{O}(\sqrt{T})$ regret. Recently, the regret bound and computational efficiency is improved by \citet{zhang2022near} through incorporating the idea of optimal experimental design. The deployment efficient algorithms for pure exploration by \citet{huang2021towards} also satisfy the definition of batched RL.

\section{Generalization of G-optimal design}\label{sec:design}
\noindent\textbf{Traditional G-optimal design.} We first briefly introduce the problem setup of G-optimal design. Assume there is some (possibly infinite) set $\mathcal{A}\subseteq\mathbb{R}^d$, let $\pi:\mathcal{A}\rightarrow [0, 1]$ be a distribution on $\mathcal{A}$ so that $\sum_{a\in\mathcal{A}}\pi(a)=1$. $V(\pi) \in \mathbb{R}^{d\times d}$ and $g(\pi) \in \mathbb{R}$ are given by
$$V(\pi)=\sum_{a\in\mathcal{A}}\pi(a)aa^\top,\;\;\;g(\pi)=\max_{a\in\mathcal{A}}\|a\|_{V(\pi)^{-1}}^2.$$
The problem of finding a design $\pi$ that minimises $g(\pi)$ is
called the \textbf{G-optimal design} problem. G-optimal design has wide application in regression problems and it can solve the linear bandit problem \citep{lattimore2020bandit}. However, traditional G-optimal design can not tackle our problem under linear MDP where we can only choose $\pi$ instead of choosing the feature vector $\phi$ directly.

In this section, we generalize the well-known G-optimal design for our purpose under linear MDP. Consider the following problem: Under some fixed linear MDP, given a fixed finite policy set $\Pi$, we want to select a policy $\pi_0$ from $\Delta(\Pi)$ (distribution over policy set $\Pi$) to minimize the following term:
\begin{equation}\label{equ:optimal}
    \max_{\pi\in\Pi} \E_\pi \phi(s_h,a_h)^\top(\E_{\pi_0} \phi_h\phi_h^\top)^{-1} \phi(s_h,a_h),
\end{equation}

where the $s_h,a_h$ follows the distribution according to $\pi$ and the $\phi_h$ follows the distribution of policy $\pi_0$. We first consider its two special cases.

\textbf{Special case 1.} If the MDP is deterministic, then given any fixed deterministic policy $\pi$, the trajectory generated from this $\pi$ is deterministic. Therefore the feature $\phi_h$ at layer $h$ is also deterministic. We denote the feature at layer $h$ from running policy $\pi$ by $\phi_{\pi,h}$. In this case, the previous problem \eqref{equ:optimal} reduces to
\begin{equation}\label{equ:goptimal}
    \min_{\pi_0 \in\Delta(\Pi)}\max_{\pi\in\Pi}\phi_{\pi,h}^\top (\E_{\pi_0}\phi_h\phi_h^\top)^{-1}\phi_{\pi,h},
\end{equation}
which can be characterized by the traditional G-optimal design, for more details please refer to \citet{kiefer1960equivalence} and chapter 21 of \citet{lattimore2020bandit}. According to Theorem 21.1 of \citet{lattimore2020bandit}, the minimization of \eqref{equ:goptimal} can be bounded by $d$, which is the dimension of the feature map $\phi$. 

\textbf{Special case 2.} When the linear MDP is actually a tabular MDP with finite state set $|\mathcal{S}|=S$ and finite action set $|\mathcal{A}|=A$, the feature map reduces to canonical basis in $\mathbb{R}^d=\mathbb{R}^{SA}$ with $\phi(s,a)=e_{(s,a)}$ \citep{jin2020provably}. Let $d_h^\pi(s,a)=\P_\pi(s_h=s,a_h=a)$ denote the occupancy measure, then the previous optimization problem \eqref{equ:optimal} reduces to 
\begin{equation}\label{equ:tabularcase}
    \min_{\pi_0\in\Delta(\Pi)}\max_{\pi\in\Pi}\sum_{(s,a)\in\mathcal{S}\times\mathcal{A}}\frac{d_h^\pi(s,a)}{d_h^{\pi_0}(s,a)}.
\end{equation}
Such minimization problem corresponds to finding a policy $\pi_0$ that can cover all policies from the policy set $\Pi$. According to Lemma 1 in \citet{zhang2022near} (we only use the case where $m=1$), the minimization of \eqref{equ:tabularcase} can be bounded by $d=SA$.

Different from these two special cases, under our problem setup (general linear MDP), the feature map can be much more complex than canonical basis and running each $\pi$ will lead to a distribution over the feature map space rather than a fixed single feature. Next, we formalize the problem setup and present the theorem. We are given finite policy set $\Pi$ and finite action set $\Phi$ (we only consider finite action set, the general case can be proven similarly by passing to the limit \citep{lattimore2020bandit}), where each $\pi\in\Pi$ is a distribution over $\Phi$ (with $\pi(a)$ denoting the probability of choosing action $a$) and each action $a\in\Phi$ is a vector in $\mathbb{R}^d$. In addition, $\mu$ can be any distribution over $\Pi$. In the following part, we characterize $\mu$ as a vector in $\mathbb{R}^{|\Pi|}$ with $\mu(\pi)$ denoting the probability of choosing policy $\pi$. Let $\Lambda(\pi)=\sum_{a\in\Phi}\pi(a)aa^\top$ and $V(\mu)=\sum_{\pi\in\Pi}\mu(\pi)\Lambda(\pi)=\sum_{\pi\in\Pi}\mu(\pi)\sum_{a\in\Phi}\pi(a)aa^\top$. The function we want to minimize is $g(\mu)=\max_{\pi\in\Pi}\sum_{a\in\Phi}\pi(a)a^\top V(\mu)^{-1}a$. 

\begin{theorem}\label{thm:optiaml}
Define the set $\widehat{\Phi}=\{a\in\Phi:\exists\, \pi\in\Pi,\pi(a)>0\}$. If $\text{span}(\widehat{\Phi})=\mathbb{R}^d$, there exists a distribution $\mu^\star$ over $\Pi$ such that $g(\mu^\star)\leq d$.
\end{theorem}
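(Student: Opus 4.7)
The plan is to adapt the Kiefer--Wolfowitz equivalence theorem from classical G-optimal design to this lifted setting, in which we optimize over mixtures of policies rather than mixtures of feature vectors. Concretely, I would take $\mu^\star$ to be a \emph{D-optimal design}, i.e., a maximizer of $f(\mu):=\log\det V(\mu)$ on the simplex $\Delta(\Pi)$, and then deduce the bound $g(\mu^\star)\le d$ from first-order optimality.

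First, I would establish existence of a maximizer with $V(\mu^\star)\succ 0$. Extend $f$ by setting $f(\mu)=-\infty$ whenever $V(\mu)$ is singular; then $f$ is concave (log-determinant composed with an affine map into the PSD cone) and upper semi-continuous on the compact simplex $\Delta(\Pi)$. The assumption $\operatorname{span}(\widehat{\Phi})=\mathbb{R}^d$, together with the observation that for the uniform distribution $\mu_0$ over $\Pi$ the coefficient of $aa^\top$ in $V(\mu_0)=\sum_{a}\bigl(\sum_{\pi}\mu_0(\pi)\pi(a)\bigr)aa^\top$ is strictly positive for every $a\in\widehat{\Phi}$, implies $V(\mu_0)\succ 0$ and hence $f(\mu_0)>-\infty$. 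By compactness plus upper semi-continuity, $f$ attains its maximum at some $\mu^\star$, and any such maximizer satisfies $V(\mu^\star)\succ 0$ because $f(\mu^\star)\ge f(\mu_0)>-\infty$.

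Second, I would carry out a one-line directional-derivative computation. Fix any $\pi'\in\Pi$ and set $\mu_t:=(1-t)\mu^\star+t\,\delta_{\pi'}\in\Delta(\Pi)$ for $t\in[0,1]$, so that $V(\mu_t)=V(\mu^\star)+t\bigl(\Lambda(\pi')-V(\mu^\star)\bigr)$. Writing $h(\mu,\pi):=\operatorname{tr}\!\bigl(V(\mu)^{-1}\Lambda(\pi)\bigr)=\sum_{a\in\Phi}\pi(a)\,a^\top V(\mu)^{-1}a$ and applying Jacobi's formula,
\[
\left.\tfrac{d}{dt}f(\mu_t)\right|_{t=0^+}=\operatorname{tr}\!\bigl(V(\mu^\star)^{-1}(\Lambda(\pi')-V(\mu^\star))\bigr)=h(\mu^\star,\pi')-d.
\]
Since $\mu^\star$ maximizes $f$ on the simplex, this right-derivative must be $\le 0$, so $h(\mu^\star,\pi')\le d$ for every $\pi'\in\Pi$. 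Taking the maximum over $\pi'$ gives $g(\mu^\star)=\max_{\pi'\in\Pi}h(\mu^\star,\pi')\le d$, which is exactly the claim.

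As a sanity check, the linear identity $\sum_{\pi}\mu(\pi)\,h(\mu,\pi)=\operatorname{tr}\!\bigl(V(\mu)^{-1}V(\mu)\bigr)=d$ (valid whenever $V(\mu)\succ 0$) shows the bound is tight and that $h(\mu^\star,\pi)=d$ on the support of $\mu^\star$---the natural analogue of Kiefer--Wolfowitz in this setting. I expect the main obstacle to be the first step: verifying that the spanning hypothesis genuinely forces $V(\mu^\star)\succ 0$ at the optimum, so that $\log\det$ is smooth there and Jacobi's formula applies. Once invertibility of $V(\mu^\star)$ is in hand, the rest reduces to concavity of $\log\det$ plus a single trace identity.
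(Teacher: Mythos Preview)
Your proposal is correct and follows essentially the same argument as the paper: choose $\mu^\star$ to maximize $f(\mu)=\log\det V(\mu)$, compute the directional derivative (equivalently the gradient) to get $\operatorname{tr}(V(\mu^\star)^{-1}\Lambda(\pi))-d$, and conclude from first-order optimality that this is $\le 0$ for every $\pi$. If anything, you are more careful than the paper, which does not explicitly verify existence of the maximizer or invertibility of $V(\mu^\star)$ from the spanning hypothesis.
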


\begin{proof}[Proof of Theorem \ref{thm:optiaml}]
Define $f(\mu)=\log \det V(\mu)$ and take $\mu^\star$ to be 
$$\mu^\star=\arg\max_\mu f(\mu).$$
According to Exercise 21.2 of \citet{lattimore2020bandit}, $f$ is concave. Besides, according to Exercise 21.1 of \citet{lattimore2020bandit}, we have
$$\frac{d}{dt}\log\det(A(t))=\frac{1}{\det(A(t))}Tr(adj(A)\frac{d}{dt}A(t))=Tr(A^{-1}\frac{d}{dt}A(t)).$$
Plugging $f$ in, we directly have:
$$(\triangledown f(\mu))_\pi=Tr(V(\mu)^{-1}\Lambda(\pi))=\sum_{a\in\Phi}\pi(a)a^\top V(\mu)^{-1} a.$$
In addition, by direct calculation, for any feasible $\mu$,
$$\sum_{\pi\in\Pi}\mu(\pi)(\triangledown f(\mu))_\pi=Tr(\sum_{\pi\in\Pi}\mu(\pi)\sum_{a\in\Phi}\pi(a)aa^\top V(\mu)^{-1})=Tr(I_d)=d.$$

Since $\mu^\star$ is the maximizer of $f$, by first order optimality criterion, for any feasible $\mu$,
\begin{align*}
    0\geq& \langle\triangledown f(\mu^\star), \mu-\mu^\star\rangle\\
    =&\sum_{\pi\in\Pi}\mu(\pi)\sum_{a\in\Phi}\pi(a)a^\top V(\mu^\star)^{-1}a-\sum_{\pi\in\Pi}\mu^\star(\pi)\sum_{a\in\Phi}\pi(a)a^\top V(\mu^\star)^{-1}a\\
    =&\sum_{\pi\in\Pi}\mu(\pi)\sum_{a\in\Phi}\pi(a)a^\top V(\mu^\star)^{-1}a-d.
\end{align*}
For any $\pi\in\Pi$, we can choose $\mu$ to be Dirac at $\pi$, which proves that for any $\pi\in\Pi$,\\ $\sum_{a\in\Phi}\pi(a)a^\top V(\mu^\star)^{-1}a\leq d$. Due to the definition of $g(\mu^\star)$, we have $g(\mu^\star)\leq d$.
\end{proof}

\begin{remark}
By replacing the action set $\Phi$ with the set of all feasible features at layer $h$, Theorem \ref{thm:optiaml} shows that for any linear MDP and fixed policy set $\Pi$,
\begin{equation}
    \min_{\pi_0\in\Delta(\Pi)}\max_{\pi\in\Pi} \E_\pi \phi(s_h,a_h)^\top(\E_{\pi_0} \phi_h\phi_h^\top)^{-1} \phi(s_h,a_h)\leq d.
\end{equation}
This theorem serves as one of the critical theoretical bases for our analysis.
\end{remark}

\begin{remark}
Although the proof is similar to Theorem 21.1 of \citet{lattimore2020bandit}, our Theorem \ref{thm:optiaml} is more general since it also holds under the case where each $\pi$ will generate a distribution over the action space. In contrast, G-optimal design is a special case of our setting where each $\pi$ will generate a fixed action from the action space. 
\end{remark}

Knowing the existence of such covering policy, the next lemma provides some properties of the solution of \eqref{equ:optimal} under some additional assumption.

\begin{lemma}\label{lem:lambdamin}
Let $\pi^\star=\arg\min_{\pi_0\in\Delta(\Pi)}\max_{\pi\in\Pi} \E_\pi \phi(s_h,a_h)^\top(\E_{\pi_0} \phi_h\phi_h^\top)^{-1} \phi(s_h,a_h)$. Assume that $\sup_{\pi\in\Delta(\Pi)}\lambda_{\min}(\E_\pi \phi_h\phi_h^\top)\geq \lambda^\star$, then it holds that
\begin{equation}\label{equ:lambdamin}
    \lambda_{\min}(\E_{\pi^\star}\phi_h\phi_h^\top)\geq \frac{\lambda^\star}{d},
\end{equation}
where $d$ is the dimension of $\phi$ and $\lambda_{\min}$ denotes the minimum eigenvalue.
\end{lemma}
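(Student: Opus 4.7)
\textbf{Proof plan for Lemma \ref{lem:lambdamin}.} My plan is to combine the upper bound on the value of the minimax problem guaranteed by Theorem \ref{thm:optiaml} with a trace/eigenvalue manipulation, using the existence of a well-conditioned exploratory policy from the assumption.

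\textbf{Step 1: Invoke Theorem \ref{thm:optiaml} at the minimizer.} Let $\Sigma(\pi_0) := \E_{\pi_0}[\phi_h \phi_h^\top]$ for $\pi_0 \in \Delta(\Pi)$. Since $\lambda^\star > 0$, the assumption guarantees that $g(\pi_0) := \max_{\pi \in \Pi}\E_\pi[\phi^\top \Sigma(\pi_0)^{-1}\phi]$ is finite somewhere on the compact simplex $\Delta(\Pi)$, so the argmin $\pi^\star$ is well-defined (and $\Sigma(\pi^\star)$ is positive definite, else $g(\pi^\star) = \infty$). Theorem \ref{thm:optiaml} then yields
\begin{equation*}
\max_{\pi \in \Pi} \E_\pi\bigl[\phi(s_h,a_h)^\top \Sigma(\pi^\star)^{-1} \phi(s_h,a_h)\bigr] \;\le\; d.
\end{equation*}

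\textbf{Step 2: Average the inequality against a well-conditioned policy.} For any $\eta > 0$, pick $\pi_1 \in \Delta(\Pi)$ with $\lambda_{\min}(\Sigma(\pi_1)) \ge \lambda^\star - \eta$, which exists by the assumption on the supremum. Viewing $\pi_1$ as a distribution over $\Pi$ and averaging the pointwise bound from Step~1 yields
\begin{equation*}
\mathrm{tr}\bigl(\Sigma(\pi_1)\,\Sigma(\pi^\star)^{-1}\bigr) \;=\; \sum_{\pi \in \Pi}\pi_1(\pi)\,\E_\pi\bigl[\phi^\top \Sigma(\pi^\star)^{-1}\phi\bigr] \;\le\; d.
\end{equation*}

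\textbf{Step 3: Convert to an eigenvalue bound.} For positive semidefinite matrices $A$ and $B$, the decomposition $A = \lambda_{\min}(A) I + (A - \lambda_{\min}(A) I)$ combined with the cyclic form $\mathrm{tr}((A-\lambda_{\min}(A)I)B) = \mathrm{tr}(B^{1/2}(A-\lambda_{\min}(A)I)B^{1/2}) \ge 0$ gives $\mathrm{tr}(AB) \ge \lambda_{\min}(A)\,\mathrm{tr}(B)$. Applied to $A = \Sigma(\pi_1)$ and $B = \Sigma(\pi^\star)^{-1}$, this yields $(\lambda^\star - \eta)\,\mathrm{tr}(\Sigma(\pi^\star)^{-1}) \le d$, hence $\mathrm{tr}(\Sigma(\pi^\star)^{-1}) \le d/(\lambda^\star - \eta)$. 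Since $\lambda_{\max}(\Sigma(\pi^\star)^{-1}) \le \mathrm{tr}(\Sigma(\pi^\star)^{-1})$ (sum of nonnegative eigenvalues), we get $\lambda_{\min}(\Sigma(\pi^\star)) \ge (\lambda^\star - \eta)/d$. Letting $\eta \to 0$ concludes the proof.

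\textbf{Main obstacle.} The argument is short and essentially a matrix-trace manipulation; there is no real obstacle beyond being careful that the argmin exists and that the ``sup'' in the assumption can be approached by an honest policy $\pi_1 \in \Delta(\Pi)$. The only nontrivial ingredient we rely on is Theorem \ref{thm:optiaml}, which was already established.
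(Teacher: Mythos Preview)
Your proof is correct and shares the same core idea as the paper's: both invoke Theorem~\ref{thm:optiaml} to bound $\E_{\widetilde{\pi}}[\phi_h^\top \Sigma(\pi^\star)^{-1}\phi_h]=\mathrm{tr}(\Sigma(\widetilde{\pi})\Sigma(\pi^\star)^{-1})\le d$ for a well-conditioned $\widetilde{\pi}\in\Delta(\Pi)$, and then extract a lower bound on $\lambda_{\min}(\Sigma(\pi^\star))$ from this. The only difference is packaging: the paper argues by contradiction and lower-bounds the trace by expanding in the eigenbasis of $\Sigma(\pi^\star)^{-1}$ and keeping only the top-eigenvalue term, whereas you proceed directly via the clean inequality $\mathrm{tr}(AB)\ge \lambda_{\min}(A)\,\mathrm{tr}(B)$ for PSD $A,B$. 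Your route has the small bonus of yielding the slightly stronger intermediate bound $\mathrm{tr}(\Sigma(\pi^\star)^{-1})\le d/\lambda^\star$ (not just $\lambda_{\max}(\Sigma(\pi^\star)^{-1})\le d/\lambda^\star$), though this is not needed for the lemma.
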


Before we state the proof, we provide the description of the special case where the MDP is a tabular MDP. The condition implies that there exists some policy $\widetilde{\pi}\in\Delta(\Pi)$ such that for any $s,a\in\mathcal{S}\times\mathcal{A}$, $d_h^{\widetilde{\pi}}(s,a)\geq \lambda^\star$, where $d_h^\pi(\cdot,\cdot)$ is occupancy measure. Due to Theorem \ref{thm:optiaml}, $\pi^\star$ satisfies that $$\max_{\pi\in\Pi}\sum_{(s,a)\in\mathcal{S}\times\mathcal{A}}\frac{d_h^\pi(s,a)}{d_h^{\pi^\star}(s,a)}\leq SA.$$
For any $(s,a)\in\mathcal{S}\times\mathcal{A}$, choose $\pi_{s,a}=\arg\max_{\pi\in\Pi}d_h^\pi(s,a)$ and $d_h^{\pi_{s,a}}(s,a)\geq d_h^{\widetilde{\pi}}(s,a)\geq \lambda^\star$. Therefore, it holds that $d_h^{\pi^\star}(s,a)\geq \frac{\lambda^\star}{SA}$ for any $s,a$, which is equivalent to the conclusion of \eqref{equ:lambdamin}.

\begin{proof}[Proof of Lemma \ref{lem:lambdamin}]
If the conclusion \eqref{equ:lambdamin} does not hold, we have $\lambda_{\min}(\E_{\pi^\star}\phi_h\phi_h^\top)<\frac{\lambda^\star}{d}$, which implies that $\lambda_{\max}((\E_{\pi^\star}\phi_h\phi_h^\top)^{-1})>\frac{d}{\lambda^\star}$. Denote the eigenvalues of $(\E_{\pi^\star}\phi_h\phi_h^\top)^{-1}$ by $0<\lambda_1\leq\lambda_2\leq\cdots\leq\lambda_d$. There exists a set of orthogonal and normalized vectors $\{\bar{\phi}_i\}_{i\in[d]}$ such that $\bar{\phi_i}$ is a corresponding eigenvector of $\lambda_i$.

According to the condition, there exists $\widetilde{\pi}\in\Delta(\Pi)$ such that $\lambda_{\min}(\E_{\widetilde{\pi}} \phi_h\phi_h^\top)\geq \lambda^\star$. Therefore, for any $\phi\in\mathbb{R}^d$ with $\|\phi\|_2=1$, $\phi^\top(\E_{\widetilde{\pi}} \phi_h\phi_h^\top)\phi=\E_{\widetilde{\pi}}(\phi_h^\top\phi)^2\geq\lambda^\star$. Now we consider $\E_{\widetilde{\pi}}\phi_h^\top (\E_{\pi^\star})^{-1}\phi_h$, where $\E_{\pi^\star}$ is short for $\E_{\pi^\star}\phi_h\phi_h^\top$. It holds that:

\begin{align*}
    \E_{\widetilde{\pi}}\phi_h^\top (\E_{\pi^\star})^{-1}\phi_h=& \E_{\widetilde{\pi}}[\sum_{i=1}^{d}(\phi_h^\top \bar{\phi}_i)\bar{\phi}_i]^\top (\E_{\pi^\star})^{-1}[\sum_{i=1}^{d}(\phi_h^\top \bar{\phi}_i)\bar{\phi}_i]\\=&\E_{\widetilde{\pi}}\sum_{i=1}^{d}(\phi_h^\top \bar{\phi}_i)^2\bar{\phi}_i^\top (\E_{\pi^\star})^{-1} \bar{\phi}_i\\ \geq& \E_{\widetilde{\pi}}(\phi_h^\top \bar{\phi}_d)^2 \bar{\phi}_d^\top (\E_{\pi^\star})^{-1} \bar{\phi}_d\\
    >&\lambda^\star\times\frac{d}{\lambda^\star}=d,
\end{align*}
where the first equation is due to the fact that $\{\bar{\phi}_i\}_{i\in[d]}$ forms a set of normalized basis. The second equation results from the definition of eigenvectors. The last inequality is because our assumption ($\lambda_{\max}((\E_{\pi^\star}\phi_h\phi_h^\top)^{-1})>\frac{d}{\lambda^\star}$) and condition ($\forall\,\|\phi\|_2=1$, $\phi^\top(\E_{\widetilde{\pi}} \phi_h\phi_h^\top)\phi=\E_{\widetilde{\pi}}(\phi_h^\top\phi)^2\geq\lambda^\star$).

Finally, since this leads to contradiction with Theorem \ref{thm:optiaml}, the proof is complete.
\end{proof}

\section{Construction of policy sets}

In this section, we construct policy sets given the feature map $\phi(\cdot,\cdot)$. We begin with several technical lemmas.

\subsection{Technical lemmas}
\begin{lemma}[Covering Number of Euclidean Ball \citep{jin2020provably}]\label{lem:cover}
For any $\epsilon>0$, the $\epsilon$-covering number of the Euclidean ball in $\mathbb{R}^d$ with radius $R>0$ is upper bounded by $(1+\frac{2R}{\epsilon})^d$.
\end{lemma}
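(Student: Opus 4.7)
The plan is to prove this classical covering bound by a volume/packing comparison, which is the standard route. First I would pass from covering to packing: let $\mathcal{N}\subseteq B(R)$ be a maximal $\epsilon$-separated subset of the Euclidean ball $B(R)$ of radius $R$ in $\mathbb{R}^d$, meaning that any two distinct points of $\mathcal{N}$ are at distance at least $\epsilon$ apart, and no point can be added to $\mathcal{N}$ without violating this. By maximality, every $x\in B(R)$ lies within distance $\epsilon$ of some point of $\mathcal{N}$, so $\mathcal{N}$ is automatically an $\epsilon$-cover of $B(R)$. Hence it suffices to bound $|\mathcal{N}|$.

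Next I would run the standard volume argument. Around each $y\in\mathcal{N}$, place an open ball $B(y,\epsilon/2)$ of radius $\epsilon/2$. Since the points of $\mathcal{N}$ are pairwise at distance $\geq\epsilon$, these small balls are pairwise disjoint. Moreover, each $B(y,\epsilon/2)$ is contained in the enlarged ball $B(R+\epsilon/2)$ centered at the origin, because $\|y\|\leq R$. Taking Lebesgue volumes and using that the volume of a Euclidean ball of radius $r$ in $\mathbb{R}^d$ scales as $c_d r^d$ for a dimension-dependent constant $c_d$, disjointness and containment give
\begin{equation*}
|\mathcal{N}|\cdot c_d (\epsilon/2)^d \;=\; \sum_{y\in\mathcal{N}} \mathrm{vol}\bigl(B(y,\epsilon/2)\bigr) \;\leq\; \mathrm{vol}\bigl(B(R+\epsilon/2)\bigr) \;=\; c_d (R+\epsilon/2)^d.
\end{equation*}

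Solving for $|\mathcal{N}|$, the constants $c_d$ cancel and I get
\begin{equation*}
|\mathcal{N}| \;\leq\; \left(\frac{R+\epsilon/2}{\epsilon/2}\right)^d \;=\; \left(1+\frac{2R}{\epsilon}\right)^d,
\end{equation*}
which is precisely the claimed bound on the covering number. There is no real obstacle here; the only step one has to be a bit careful about is the covering-to-packing reduction (using maximality of $\mathcal{N}$), after which the volume comparison is purely mechanical and the dimension-dependent constant drops out automatically.
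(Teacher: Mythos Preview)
Your proof is correct and is the standard volume/packing argument for this classical bound. The paper does not actually prove this lemma; it simply quotes it with a citation to \citet{jin2020provably}, so there is no ``paper's own proof'' to compare against, and your argument is exactly the kind of proof one would expect behind that citation.
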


\begin{lemma}[Lemma B.1 of \citet{jin2020provably}]\label{lem:weight}
Let $w_h^\pi$ denote the set of weights such that
$Q_h^\pi(s,a)=\langle \phi(s,a),w_h^\pi\rangle$. Then $\|w_h^\pi\|_2\leq2H\sqrt{d}$.
\end{lemma}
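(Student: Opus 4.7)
The plan is to derive an explicit closed-form expression for $w_h^\pi$ from the linear-MDP structure and then apply the triangle inequality together with the norm bounds on $\theta_h$ and $\mu_h$ assumed in the linear MDP definition.

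First, I would use the Bellman equation to write
\[
Q_h^\pi(s,a) = r_h(s,a) + \int P_h(s'\mid s,a)\, V_{h+1}^\pi(s')\, ds'.
\]
Substituting the linear-MDP factorizations $r_h(s,a) = \langle \phi(s,a), \theta_h\rangle$ and $P_h(s'\mid s,a) = \langle \phi(s,a), \mu_h(s')\rangle$, I can pull $\phi(s,a)$ out of the integral by linearity, obtaining
\[
Q_h^\pi(s,a) = \Big\langle \phi(s,a),\; \theta_h + \int \mu_h(s')\, V_{h+1}^\pi(s')\, ds'\Big\rangle.
\]
This identifies $w_h^\pi = \theta_h + \int \mu_h(s')\, V_{h+1}^\pi(s')\, ds'$, which is the representation I will bound.

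Next, I would apply the triangle inequality: $\|w_h^\pi\|_2 \leq \|\theta_h\|_2 + \bigl\|\int \mu_h(s') V_{h+1}^\pi(s') ds'\bigr\|_2$. The first term is at most $\sqrt{d}$ directly from the linear-MDP normalization. For the second term, the key observation is that $0 \leq V_{h+1}^\pi(s') \leq H$ since rewards lie in $[0,1]$ and there are at most $H$ remaining steps; therefore by a standard Cauchy--Schwarz / dominated-integration argument, $\bigl\|\int \mu_h(s') V_{h+1}^\pi(s') ds'\bigr\|_2 \leq H\, \|\mu_h(\mathcal{S})\|_2 \leq H\sqrt{d}$, again using the normalization in the linear-MDP definition. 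Summing the two pieces gives $\|w_h^\pi\|_2 \leq \sqrt{d} + H\sqrt{d} \leq 2H\sqrt{d}$.

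There is essentially no obstacle here: every step is a direct unpacking of the linear-MDP definition plus the trivial reward-horizon bound on $V_{h+1}^\pi$. The only subtlety to handle cleanly is to verify that pulling the vector norm inside the integral against the \emph{signed} measure $\mu_h$ is legitimate; this follows from writing the integral as a limit of finite sums and applying the triangle inequality, or equivalently from noting that for any unit vector $u\in\mathbb{R}^d$, $u^\top \int \mu_h(s') V_{h+1}^\pi(s') ds' \leq H \cdot u^\top \mu_h(\mathcal{S}_+)$ on the positive part (and similarly on the negative part), which yields the stated bound after taking the supremum over $u$.
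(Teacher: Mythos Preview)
Your proposal is correct and is exactly the standard argument from \citet{jin2020provably} that the paper invokes by citation without reproducing: identify $w_h^\pi=\theta_h+\int \mu_h(s')V_{h+1}^\pi(s')\,ds'$ via the Bellman equation and linear-MDP factorization, then bound the two pieces by $\sqrt{d}$ and $H\sqrt{d}$ using the normalization assumptions and $\|V_{h+1}^\pi\|_\infty\le H$. Your remark about the signed measure is the only subtlety, and it is handled correctly.
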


\begin{lemma}[Advantage Decomposition]\label{lem:advan}
For any MDP with fixed initial state $s_1$, for any policy $\pi$, it holds that $$V_1^\star(s_1)-V_1^\pi(s_1)=\E_{\pi}\sum_{h=1}^{H}[V_h^\star(s_h)-Q_h^\star(s_h,a_h)],$$ here the expectation means that $s_h,a_h$ follows the distribution generated by $\pi$.
\end{lemma}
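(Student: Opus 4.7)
The plan is to prove the identity by a one-step telescoping argument that compares $V_h^\star$ and $V_h^\pi$ layer by layer along a trajectory sampled from $\pi$. The key algebraic observation is that the Bellman equations for $Q_h^\star$ and $Q_h^\pi$ share the same immediate reward $r_h(s_h,a_h)$, so their difference is purely a next-step value gap, which sets up a clean recursion.

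First I would fix any $h\in[H]$, start from a state $s_h$, and decompose
\[
V_h^\star(s_h)-V_h^\pi(s_h)=\bigl(V_h^\star(s_h)-\E_{a_h\sim\pi_h(\cdot|s_h)}[Q_h^\star(s_h,a_h)]\bigr)+\E_{a_h\sim\pi_h(\cdot|s_h)}[Q_h^\star(s_h,a_h)-Q_h^\pi(s_h,a_h)].
\]
Since $V_h^\star(s_h)$ does not depend on $a_h$, the first bracket equals $\E_{a_h\sim\pi_h(\cdot|s_h)}[V_h^\star(s_h)-Q_h^\star(s_h,a_h)]$, which is exactly the per-step suboptimality that appears on the right-hand side of the lemma. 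For the second bracket, the Bellman equations give
\[
Q_h^\star(s_h,a_h)-Q_h^\pi(s_h,a_h)=\E_{s_{h+1}\sim P_h(\cdot|s_h,a_h)}\bigl[V_{h+1}^\star(s_{h+1})-V_{h+1}^\pi(s_{h+1})\bigr],
\]
because the $r_h(s_h,a_h)$ terms cancel.

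Next I would unroll this recursion starting from $h=1$ with the fixed initial state $s_1$, using the tower property to absorb the nested expectations into the single trajectory expectation $\E_\pi$. Proceeding by forward induction on $h$ (with base case $V_{H+1}^\star\equiv V_{H+1}^\pi\equiv 0$ so that the residual vanishes after step $H$), the next-step gaps telescope and yield
\[
V_1^\star(s_1)-V_1^\pi(s_1)=\sum_{h=1}^H\E_\pi\bigl[V_h^\star(s_h)-Q_h^\star(s_h,a_h)\bigr],
\]
which is the desired identity. There is essentially no substantive obstacle: this is the classical performance-difference lemma of Kakade and Langford specialized to the case where the comparator is the optimal policy, and the only care required is in handling the nested expectations via the tower property and checking the $h=H$ base case where $V_{H+1}^\star-V_{H+1}^\pi=0$ terminates the recursion.
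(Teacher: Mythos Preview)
Your proposal is correct and follows essentially the same approach as the paper: decompose $V_h^\star(s_h)-V_h^\pi(s_h)$ into the advantage term $\E_{a_h\sim\pi_h}[V_h^\star(s_h)-Q_h^\star(s_h,a_h)]$ plus $\E_{a_h\sim\pi_h}[Q_h^\star(s_h,a_h)-Q_h^\pi(s_h,a_h)]$, use the Bellman equations to turn the latter into $\E_{s_{h+1}}[V_{h+1}^\star-V_{h+1}^\pi]$, and recurse via the tower property. The paper's proof is just a slightly terser version of exactly this telescoping argument.
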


\begin{proof}[Proof of Lemma \ref{lem:advan}]
\begin{align*}
    V_1^\star(s_1)-V_1^\pi(s_1)=&\E_{\pi}[V_1^\star(s_1)-Q_1^\star(s_1,a_1)]+\E_{\pi}[Q_1^\star(s_1,a_1)-Q_1^\pi(s_1,a_1)]\\
    =& \E_{\pi}[V_1^\star(s_1)-Q_1^\star(s_1,a_1)]+\E_{s_1,a_1\sim\pi}[\sum_{s^\prime\in\mathcal{S}}P_1(s^\prime|s_1,a_1)(V_2^\star(s^\prime)-V_2^\pi(s^\prime))]\\
    =& \E_{\pi}[V_1^\star(s_1)-Q_1^\star(s_1,a_1)]+\E_{s_2\sim\pi}[V_2^\star(s_2)-V_2^\pi(s_2)]\\
    =&\cdots\\=&\E_{\pi}\sum_{h=1}^{H}[V_h^\star(s_h)-Q_h^\star(s_h,a_h)],
\end{align*}
where the second equation is because of Bellman Equation and the forth equation results from applying the decomposition recursively from $h=1$ to $H$.
\end{proof}

\begin{lemma}[Elliptical Potential Lemma, Lemma 26 of \citet{agarwal2020flambe}]\label{lem:potential}
Consider a sequence of $d\times d$ positive semi-definite matrices $X_1,\cdots,X_T$ with $\max_t Tr(X_t) \leq 1$ and define $M_0=I,\cdots,M_t=M_{t-1}+X_t$. Then 
$$\sum_{t=1}^{T} Tr(X_t M_{t-1}^{-1})\leq 2d\log(1+\frac{T}{d}).$$
\end{lemma}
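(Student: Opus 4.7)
The plan is to reduce the matrix expression to a scalar computation via a similarity transform and then to exploit the classical log-determinant identity together with AM-GM. First I would set $A_t := M_{t-1}^{-1/2} X_t M_{t-1}^{-1/2}$, so that by cyclicity of the trace $\mathrm{Tr}(X_t M_{t-1}^{-1}) = \mathrm{Tr}(A_t)$. Because $M_{t-1} \succeq M_0 = I$, we have $M_{t-1}^{-1} \preceq I$, and hence each eigenvalue $\lambda_{t,i}$ of $A_t$ satisfies $\lambda_{t,i} \le \mathrm{Tr}(A_t) \le \mathrm{Tr}(X_t) \le 1$.

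Next I would invoke the elementary inequality $x \le 2\log(1+x)$, valid for $x \in [0,1]$, applied eigenvalue-wise to $A_t$. Summing over the eigenvalues and using $\log \det(I+A_t) = \sum_i \log(1+\lambda_{t,i})$ yields
\begin{equation*}
\mathrm{Tr}(A_t) \;=\; \sum_i \lambda_{t,i} \;\le\; 2\sum_i \log(1+\lambda_{t,i}) \;=\; 2\log \det(I+A_t) \;=\; 2\log \frac{\det(M_t)}{\det(M_{t-1})},
\end{equation*}
where the last equality uses $\det(M_t) = \det(M_{t-1})\det(I + M_{t-1}^{-1/2} X_t M_{t-1}^{-1/2})$. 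Telescoping over $t=1,\dots,T$ and using $\det(M_0)=1$ gives $\sum_{t=1}^T \mathrm{Tr}(X_t M_{t-1}^{-1}) \le 2\log \det(M_T)$.

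Finally, to close the bound, I would apply AM-GM to the eigenvalues of $M_T$: $\det(M_T) \le (\mathrm{Tr}(M_T)/d)^d$. Since $\mathrm{Tr}(M_T) = d + \sum_{t=1}^T \mathrm{Tr}(X_t) \le d + T$, this produces $\det(M_T) \le (1 + T/d)^d$, so taking the logarithm recovers the claimed $2d\log(1+T/d)$ bound.

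I do not expect any serious obstacle here; the only subtle point is that the per-step linearization $x \le 2\log(1+x)$ requires the eigenvalues of $A_t$ to lie in $[0,1]$, which is precisely where the normalization $\max_t \mathrm{Tr}(X_t) \le 1$ combined with $M_{t-1} \succeq I$ is used. If that normalization were weakened, the constant $2$ in front of the logarithm would need to be rescaled accordingly, but otherwise the argument is self-contained.
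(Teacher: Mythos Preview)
Your argument is correct and is exactly the standard proof of the elliptical potential lemma: the similarity transform $A_t = M_{t-1}^{-1/2} X_t M_{t-1}^{-1/2}$, the eigenvalue bound via $M_{t-1}\succeq I$ and $\mathrm{Tr}(X_t)\le 1$, the linearization $x \le 2\log(1+x)$ on $[0,1]$, the telescoping log-determinant identity, and the AM--GM bound on $\det(M_T)$ all go through as you describe. The paper itself does not give a proof of this lemma---it is simply quoted from \citet{agarwal2020flambe}---so there is nothing to compare against beyond noting that your write-up matches the classical derivation.
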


\subsection{Construction of policies to evaluate}\label{sec:evaluation}
We construct the policy set $\Pi^{eval}$ given feature map $\phi(\cdot,\cdot)$. The policy set $\Pi^{eval}$ satisfies that for any feasible linear MDP with feature map $\phi$, $\Pi^{eval}$ contains one near-optimal policy of this linear MDP. We begin with the construction.

\textbf{Construction of $\Pi^{eval}$.} Given $\epsilon>0$, let $\mathcal{W}$ be a $\frac{\epsilon}{2H}$-cover of the Euclidean ball $\mathcal{B}^d(2H\sqrt{d}):=\{x\in\mathbb{R}^d:\|x\|_2\leq2H\sqrt{d}\}$. Next, we construct the Q-function set $\mathcal{Q}=\{\bar{Q}(s,a)=\phi(s,a)^\top w:w\in \mathcal{W}\}$. Then the policy set at layer $h$ is defined as $\forall\,h\in[H],\,\Pi_h=\{\pi(s)=\arg\max_{a\in\mathcal{A}}\bar{Q}(s,a)|\bar{Q}\in\mathcal{Q}\}$, with ties broken arbitrarily. Finally, the policy set $\Pi^{eval}_\epsilon$ is $\Pi^{eval}_\epsilon=\Pi_1\times\Pi_2\times\cdots\times\Pi_H$. 

\begin{lemma}\label{lem:evalset}
The policy set $\Pi^{eval}_\epsilon$ satisfies that for any $h\in[H]$, 
\begin{equation}
\log|\Pi_h|\leq d\log(1+\frac{8H^2\sqrt{d}}{\epsilon})=\widetilde{O}(d).
\end{equation}
In addition, for any linear MDP with feature map $\phi(\cdot,\cdot)$, there exists $\pi=(\pi_1,\pi_2,\cdots,\pi_H)$ such that $\pi_h\in\Pi_h$ for all $h\in[H]$ and $V^\pi\geq V^\star-\epsilon$.
\end{lemma}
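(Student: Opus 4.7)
\textbf{Proof proposal for Lemma \ref{lem:evalset}.}

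The plan is to establish the cardinality bound by a direct $\epsilon$-net estimate, and then the existence of a near-optimal policy by approximating the optimal $Q$-function with an element of $\mathcal{Q}$ and invoking the advantage decomposition.

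For the cardinality bound, I will simply chain the construction: by definition there is a surjection $\mathcal{W}\to \Pi_h$ sending $w$ to $\arg\max_a \phi(\cdot,a)^\top w$, so $|\Pi_h|\le |\mathcal{W}|$. Applying Lemma \ref{lem:cover} with radius $R=2H\sqrt{d}$ and covering scale $\epsilon/(2H)$ gives $|\mathcal{W}|\le (1+8H^2\sqrt{d}/\epsilon)^d$, hence $\log|\Pi_h|\le d\log(1+8H^2\sqrt{d}/\epsilon)=\widetilde{O}(d)$. This step is routine.

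For the near-optimality claim, fix any linear MDP with feature map $\phi$. By Lemma \ref{lem:weight}, for every $h\in[H]$ there exists $w_h^\star\in\mathbb{R}^d$ with $\|w_h^\star\|_2\le 2H\sqrt{d}$ and $Q_h^\star(s,a)=\phi(s,a)^\top w_h^\star$. Because $\mathcal{W}$ is an $\epsilon/(2H)$-cover of $\mathcal{B}^d(2H\sqrt{d})$, I can pick $\widehat{w}_h\in\mathcal{W}$ with $\|\widehat{w}_h-w_h^\star\|_2\le \epsilon/(2H)$. Define $\widehat{Q}_h(s,a)=\phi(s,a)^\top \widehat{w}_h$ and $\widehat{\pi}_h(s)=\arg\max_a \widehat{Q}_h(s,a)$; by construction $\widehat{\pi}_h\in\Pi_h$, and using $\|\phi(s,a)\|_2\le 1$ together with Cauchy--Schwarz gives the uniform approximation $|\widehat{Q}_h(s,a)-Q_h^\star(s,a)|\le \epsilon/(2H)$ for all $(s,a)$.

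To convert this into a value bound, I will apply Lemma \ref{lem:advan} to $\widehat{\pi}=(\widehat{\pi}_1,\ldots,\widehat{\pi}_H)$, which yields
\[
V_1^\star(s_1)-V_1^{\widehat{\pi}}(s_1)=\mathbb{E}_{\widehat{\pi}}\sum_{h=1}^H\bigl[V_h^\star(s_h)-Q_h^\star(s_h,\widehat{\pi}_h(s_h))\bigr].
\]
For each $h$, letting $a_h^\star$ be the greedy action of $Q_h^\star$ at $s_h$, I will decompose
\[
Q_h^\star(s_h,a_h^\star)-Q_h^\star(s_h,\widehat{\pi}_h(s_h))=\bigl[Q_h^\star-\widehat{Q}_h\bigr](s_h,a_h^\star)+\bigl[\widehat{Q}_h(s_h,a_h^\star)-\widehat{Q}_h(s_h,\widehat{\pi}_h(s_h))\bigr]+\bigl[\widehat{Q}_h-Q_h^\star\bigr](s_h,\widehat{\pi}_h(s_h)).
\]
The middle bracket is non-positive by the definition of $\widehat{\pi}_h$, and the two outer brackets are each bounded by $\epsilon/(2H)$ from the uniform approximation above. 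Summing over $h$ gives $V_1^\star(s_1)-V_1^{\widehat{\pi}}(s_1)\le \epsilon$, completing the proof. No single step is genuinely hard; the only thing to be careful about is ensuring the per-step error is $\epsilon/H$ rather than $\epsilon$, which requires the approximation error to be $\epsilon/(2H)$ on each of the two sides of the greedy-action comparison --- exactly the reason the cover is taken at scale $\epsilon/(2H)$.
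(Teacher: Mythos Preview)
Your proposal is correct and follows essentially the same approach as the paper's proof: the cardinality bound via the surjection $\mathcal{W}\to\Pi_h$ and Lemma~\ref{lem:cover}, and the near-optimality via Lemma~\ref{lem:weight}, the $\epsilon/(2H)$-cover, the three-term decomposition of $Q_h^\star(s_h,a_h^\star)-Q_h^\star(s_h,\widehat{\pi}_h(s_h))$, and Lemma~\ref{lem:advan}. The argument and the order of steps match the paper almost line for line.
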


\begin{proof}[Proof of Lemma \ref{lem:evalset}]
Since $\mathcal{W}$ is a $\frac{\epsilon}{2H}$-covering of Euclidean ball, by Lemma \ref{lem:cover} we have 
$$\log|\mathcal{W}|\leq d\log(1+\frac{8H^2\sqrt{d}}{\epsilon}).$$
In addition, for any $w$ in $\mathcal{W}$, there is at most one corresponding $Q\in\mathcal{Q}$ and one $\pi_h\in\Pi_h$. Therefore, it holds that for any $h\in[H]$,
$$\log|\Pi_h|\leq\log|\mathcal{Q}|\leq \log|\mathcal{W}|\leq d\log(1+\frac{8H^2\sqrt{d}}{\epsilon}). $$

For any linear MDP, according to Lemma \ref{lem:weight}, the optimal Q-function can be written as:
$$Q^\star_h(s,a)=\langle \phi(s,a),w_h^\star\rangle,$$ with $\|w_h^\star\|_2\leq 2H\sqrt{d}$. Since $\mathcal{W}$ is $\frac{\epsilon}{2H}$-covering of the Euclidean ball, for any $h\in[H]$ there exists $\bar{w}_h\in\mathcal{W}$ such that $\|\bar{w}_h-w_h^\star\|_2\leq \frac{\epsilon}{2H}$. Select $\bar{Q}_h(s,a)=\phi(s,a)^\top \bar{w}_h$ from $\mathcal{Q}$ and $\pi_h(s)=\arg\max_{a\in\mathcal{A}}\bar{Q}_h(s,a)$ from $\Pi_h$. Note that for any $h,s,a\in[H]\times\mathcal{S}\times\mathcal{A}$,
\begin{equation}\label{equ:difference}
|Q_h^\star(s,a)-\bar{Q}_h(s,a)|\leq \|\phi(s,a)\|_2\cdot\|w_h^\star-\bar{w}_h\|_2\leq\frac{\epsilon}{2H}.
\end{equation}
Let $\pi=(\pi_1,\pi_2,\cdots,\pi_H)$, now we prove that this $\pi$ is $\epsilon$-optimal.

Denote the optimal policy under this linear MDP by $\pi^\star$, then we have for any $s,h\in\mathcal{S}\times[H]$,
\begin{equation}\label{equ:advantage}
    \begin{split}
    &Q_h^\star(s,\pi_h^\star(s))-Q_h^\star(s,\pi_h(s))\\ =&[Q_h^\star(s,\pi_h^\star(s))-\bar{Q}_h(s,\pi_h^\star(s))]+[\bar{Q}_h(s,\pi_h^\star(s))-\bar{Q}_h(s,\pi_h(s))]+[\bar{Q}_h(s,\pi_h(s))-Q_h^\star(s,\pi_h(s))]\\
    \leq& \frac{\epsilon}{2H}+0+\frac{\epsilon}{2H}=\frac{\epsilon}{H},
    \end{split}
\end{equation}
where the inequality results from the definition of $\pi_h$ and \eqref{equ:difference}.

Now we apply the advantage decomposition (Lemma \ref{lem:advan}), it holds that:
\begin{align*}
    V_1^\star(s_1)-V_1^\pi(s_1)=&\E_{\pi}\sum_{h=1}^{H}[V_h^\star(s_h)-Q_h^\star(s_h,a_h)]\\
    \leq&H\cdot\frac{\epsilon}{H}=\epsilon,
\end{align*}
where the inequality comes from \eqref{equ:advantage}.
\end{proof}

\begin{remark}
Our concurrent work \citet{wagenmaker2022instance} also applies the idea of policy discretization. However, to cover $\epsilon$-optimal policies of all linear MDPs, the size of their policy set is $\log |\Pi_\epsilon|\leq \widetilde{O}(dH^2\cdot\log\frac{1}{\epsilon})$ (stated in Corollary 1 of \citet{wagenmaker2022instance}). In comparison, our $\Pi^{eval}_\epsilon$ satisfies that $\log|\Pi^{eval}_\epsilon|\leq H\log|\Pi_1|\leq \widetilde{O}(dH\cdot\log\frac{1}{\epsilon})$, which improves their results by a factor of $H$. Such improvement is done by applying advantage decomposition. Finally, by plugging in our $\Pi^{eval}_\epsilon$ into Corollary 2 of \citet{wagenmaker2022instance}, we can directly improve their worst-case bound by a factor of $H$.
\end{remark}

\subsection{Construction of explorative policies}\label{sec:exploration}
Given the feature map $\phi(\cdot,\cdot)$ and the condition that for any $h\in[H]$, $\sup_\pi\lambda_{\min}(\E_\pi \phi_h\phi_h^\top)\geq \lambda^\star$ where $\pi$ can be any policy, we construct a finite policy set $\Pi^{exp}$ that covers explorative policies under any feasible linear MDPs. Such exploratory is formalized as for any linear MDP and $h\in[H]$, there exists some policy $\pi$ in $\Delta(\Pi^{exp})$ such that $\lambda_{\min}(\E_\pi \phi_h\phi_h^\top)$ is large enough. We begin with the construction.

\textbf{Construction of $\Pi^{exp}$.} Given $\epsilon>0$, consider all reward functions that can be represented as 
\begin{equation}\label{equ:reward}
r(s,a)=\sqrt{\phi(s,a)^\top (I+\Sigma)^{-1}\phi(s,a)},
\end{equation}
where $\Sigma$ is positive semi-definite. According to Lemma D.6 of \citet{jin2020provably}, we can construct a $\frac{\epsilon}{2H}$-cover $\mathcal{R}_\epsilon$ of all such reward functions while the size of $\mathcal{R}_\epsilon$ satisfies $\log|\mathcal{R}_\epsilon|\leq d^2\log(1+\frac{32H^2\sqrt{d}}{\epsilon^2})$. 

For all $h\in[H]$, denote $\Pi_{h,\epsilon}^1=\{\pi(s)=\arg\max_{a\in\mathcal{A}}r(s,a)|r\in\mathcal{R}_\epsilon\}$ with ties broken arbitrarily. Meanwhile, denote the policy set $\Pi_h$ (w.r.t $\epsilon$) in the previous Section \ref{sec:evaluation} by $\Pi_{h,\epsilon}^{2}$. Finally, let $\Pi_{h,\epsilon}=\Pi_{h,\epsilon}^1\cup \Pi_{h,\epsilon}^2$ be the policy set for layer $h$. The whole policy set is the product of these $h$ policy sets, $\Pi^{exp}_{\epsilon}=\Pi_{1,\epsilon}\times\cdots\times \Pi_{H,\epsilon}$.

\begin{lemma}\label{lem:compare}
For any $\epsilon>0$, we have $\Pi^{eval}_{\epsilon}\subseteq\Pi^{exp}_\epsilon$. In addition, $\log|\Pi_{h,\epsilon}|\leq 2d^2\log(1+\frac{32H^2\sqrt{d}}{\epsilon^2})$. For any reward $r$ that is the form of \eqref{equ:reward} and $h\in[H]$, there exists a policy $\bar{\pi}\in \Pi^{exp}_{\epsilon}$ such that
$$\E_{\bar{\pi}}r(s_h,a_h)\geq \sup_{\pi}\E_{\pi}r(s_h,a_h)-\epsilon.$$
\end{lemma}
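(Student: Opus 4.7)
The plan handles the three claims in sequence; (a) and (b) are bookkeeping, while (c) is the main content, which I would prove via a two-level approximation combined with advantage decomposition. Claim (a) is immediate from the construction: $\Pi_{h,\epsilon}=\Pi^1_{h,\epsilon}\cup\Pi^2_{h,\epsilon}$, and $\Pi^2_{h,\epsilon}$ is exactly the set called $\Pi_h$ in Section~\ref{sec:evaluation}, so each factor of $\Pi^{eval}_\epsilon$ sits inside the corresponding factor of $\Pi^{exp}_\epsilon$. For claim (b), I would combine $|\Pi^1_{h,\epsilon}|\le|\mathcal{R}_\epsilon|$ with $\log|\mathcal{R}_\epsilon|\le d^2\log(1+32H^2\sqrt{d}/\epsilon^2)$ and $\log|\Pi^2_{h,\epsilon}|\le d\log(1+8H^2\sqrt{d}/\epsilon)$ from Lemma~\ref{lem:evalset}, then apply a union bound and take logs; the leading factor of $2$ absorbs the $\log 2$ and the smaller $d\log(\cdot)$ term.

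For claim (c), fix $h$ and $r$ of the form~\eqref{equ:reward}, and pick $\hat r\in\mathcal{R}_\epsilon$ with $\|\hat r-r\|_\infty\le\epsilon/(2H)$. I then reduce to an auxiliary MDP, identical to the original linear MDP except that the reward is replaced by $\tilde r_t\equiv 0$ for $t\ne h$ and $\tilde r_h=\hat r$, so that every policy's auxiliary value function equals $\E_\pi\hat r(s_h,a_h)$. Backward induction in this auxiliary MDP shows that $Q^\star_t\equiv 0$ for $t>h$; $Q^\star_h=\hat r$; and for $t<h$, $Q^\star_t$ is linear in $\phi$ with weight $w^\star_t=\int V_{t+1}^\star\,d\mu_t$, whose $\ell_2$-norm is at most $2H\sqrt{d}$ (arguing as in Lemma~\ref{lem:weight}). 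Consequently the optimal policy $\pi^\star$ for $\tilde r$ is the greedy policy w.r.t.\ $\hat r$ at step $h$ and, for each $t<h$, the greedy policy w.r.t.\ a linear Q-function whose weight lies in the ball covered by $\mathcal{W}$.

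I then build $\bar\pi\in\Pi^{exp}_\epsilon$ layer by layer: for $t<h$, pick $\bar w_t\in\mathcal{W}$ with $\|w^\star_t-\bar w_t\|_2\le\epsilon/(2H)$ and set $\bar\pi_t(s)=\arg\max_a\phi(s,a)^\top\bar w_t\in\Pi^2_{t,\epsilon}$; at $t=h$, set $\bar\pi_h(s)=\arg\max_a\hat r(s,a)\in\Pi^1_{h,\epsilon}$; for $t>h$, pick any element of $\Pi_{t,\epsilon}$. Advantage decomposition (Lemma~\ref{lem:advan}) in the auxiliary MDP yields
\[
V_1^\star(s_1)-V_1^{\bar\pi}(s_1)=\E_{\bar\pi}\sum_{t=1}^H\bigl[V_t^\star(s_t)-Q_t^\star(s_t,a_t)\bigr].
\]
The $t>h$ summands vanish since $Q^\star_t\equiv 0$, the $t=h$ summand vanishes by greediness of $\bar\pi_h$, and for each $t<h$ the three-term triangle argument used in the proof of Lemma~\ref{lem:evalset} bounds the summand by $\epsilon/H$. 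Summing gives $V_1^\star-V_1^{\bar\pi}\le(h-1)\epsilon/H$; converting from $\hat r$ back to $r$ costs another $\epsilon/H$ in total (two $\epsilon/(2H)$ errors, one on each side of the target inequality), so $\sup_\pi\E_\pi r(s_h,a_h)-\E_{\bar\pi}r(s_h,a_h)\le h\epsilon/H\le\epsilon$, as required.

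The main obstacle is keeping the total approximation error below $\epsilon$ despite three separate sources of discretization: the reward cover $\mathcal{R}_\epsilon$ used at step $h$, the weight cover $\mathcal{W}$ used at steps $t<h$, and the accumulation of per-step errors across the horizon. Advantage decomposition is what makes this work, because it \emph{adds} rather than multiplies the per-layer slacks; correspondingly, $\bar\pi$ must draw its components from the two different families $\Pi^1$ and $\Pi^2$ that were unioned to form $\Pi_{h,\epsilon}$, which is the essential reason both families were included when constructing $\Pi^{exp}_\epsilon$ in the first place.
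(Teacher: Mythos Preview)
Your proposal is correct and follows essentially the same route as the paper's proof: both reduce to an auxiliary reward that is nonzero only at layer $h$, observe that $Q^\star_h$ lies in the class covered by $\mathcal{R}_\epsilon$ while $Q^\star_t$ for $t<h$ is linear and hence covered by $\mathcal{W}$, and then invoke the advantage-decomposition argument from Lemma~\ref{lem:evalset}. The only cosmetic difference is that you first replace $r$ by $\hat r\in\mathcal{R}_\epsilon$ (so the step-$h$ advantage vanishes exactly, and you pay $\epsilon/H$ when converting back), whereas the paper keeps the true $r$ and treats the step-$h$ cover via the same three-term argument used at the earlier layers; both yield the same $h\epsilon/H\le\epsilon$ bound.
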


\begin{proof}[Proof of Lemma \ref{lem:compare}]
The conclusion that $\Pi^{eval}_{\epsilon}\subseteq\Pi^{exp}_\epsilon$ is because of our construction: $\Pi_{h,\epsilon}=\Pi_{h,\epsilon}^1\cup \Pi_{h,\epsilon}^2$.

In addition, $$\log|\Pi_{h,\epsilon}|\leq \log|\Pi_{h,\epsilon}^1|+\log|\Pi_{h,\epsilon}^2|\leq \log|\mathcal{R}_\epsilon|+ d\log(1+\frac{8H^2\sqrt{d}}{\epsilon})\leq 2d^2\log(1+\frac{32H^2\sqrt{d}}{\epsilon^2}).$$

Consider the optimal Q-function under reward function $r(s_h,a_h)$ (reward is always 0 at other layers). We have $Q_h^\star(s,a)=r(s,a)$ and for $i\leq h-1$,
\begin{align*}
Q_i^\star(s,a)=&0+\sum_{s^\prime\in\mathcal{S}}\langle \phi(s,a),\mu_i(s^\prime)\rangle V^\star_{i+1}(s^\prime)\\
=&\langle \phi(s,a),\sum_{s^\prime\in\mathcal{S}}\mu_i(s^\prime)V^\star_{i+1}(s^\prime)\rangle\\ =&\langle \phi(s,a),w_i^\star\rangle,
\end{align*}
for some $w_i^\star\in \mathbb{R}^d$ with $\|w_i^\star\|_2\leq 2\sqrt{d}$. The first equation is because of Bellman Equation and our design of reward function.

Since $Q^\star_h$ is covered by $\mathcal{R}_\epsilon$ up to $\frac{\epsilon}{2H}$ accuracy while $Q^\star_i$ ($i\leq h-1$) is covered by $\mathcal{Q}$ in section \ref{sec:evaluation} up to $\frac{\epsilon}{2H}$ accuracy, with identical proof to Lemma \ref{lem:evalset}, the last conclusion holds.
\end{proof}

\begin{lemma}\label{lem:expset}
Assume $\sup_\pi\lambda_{\min}(\E_\pi \phi_h\phi_h^\top)\geq \lambda^\star$, if $\epsilon\leq\frac{\lambda^\star}{4}$, we have $$\sup_{\pi\in\Delta(\Pi^{exp}_{\epsilon})}\lambda_{\min}(\E_\pi \phi_h\phi_h^\top)\geq \frac{(\lambda^\star)^2}{64d\log(1/\lambda^\star)}.$$
\end{lemma}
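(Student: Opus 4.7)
The plan is to iteratively construct a short sequence of policies $\pi_1,\ldots,\pi_{t_0}\in\Pi^{exp}_\epsilon$ such that the uniform mixture $\bar\pi\in\Delta(\Pi^{exp}_\epsilon)$ has large $\lambda_{\min}(\E_{\bar\pi}\phi_h\phi_h^\top)$. Initialize $\Sigma_0=0\in\mathbb{R}^{d\times d}$, and at step $t\ge1$ define the exploration-bonus reward $r_t(s,a):=\sqrt{\phi(s,a)^\top(I+\Sigma_{t-1})^{-1}\phi(s,a)}$. Since $\Sigma_{t-1}\succeq 0$, this $r_t$ is exactly of the form in \eqref{equ:reward}, so Lemma \ref{lem:compare} yields $\pi_t\in\Pi^{exp}_\epsilon$ with $\E_{\pi_t}r_t(s_h,a_h)\ge\sup_\pi\E_\pi r_t(s_h,a_h)-\epsilon$. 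Update $\Sigma_t:=\Sigma_{t-1}+\E_{\pi_t}[\phi(s_h,a_h)\phi(s_h,a_h)^\top]$.

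The core of the proof is to match a per-step progress lower bound against the Elliptical Potential Lemma. Lemma \ref{lem:potential}, applied with $M_0=I$ and $X_t=\E_{\pi_t}[\phi_h\phi_h^\top]$ (which satisfies $\mathrm{Tr}(X_t)\le\E\|\phi\|_2^2\le 1$), gives $\sum_{t=1}^{T}\mathrm{Tr}\bigl((I+\Sigma_{t-1})^{-1}X_t\bigr)\le 2d\log(1+T/d)$. For the matching lower bound, let $v_t$ be the top unit eigenvector of $(I+\Sigma_{t-1})^{-1}$ with eigenvalue $\lambda_t^{\max}=1/(1+\lambda_{\min}(\Sigma_{t-1}))$. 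Using $\phi^\top(I+\Sigma_{t-1})^{-1}\phi\ge\lambda_t^{\max}(v_t^\top\phi)^2$, the assumption $\lambda_{\min}(\E_{\pi^\star}\phi_h\phi_h^\top)\ge\lambda^\star$ for some comparator policy $\pi^\star$, and $|v_t^\top\phi|\le 1$ (so $|v_t^\top\phi|\ge(v_t^\top\phi)^2$), I would obtain $\E_{\pi^\star}r_t\ge\sqrt{\lambda_t^{\max}}\,\E_{\pi^\star}|v_t^\top\phi|\ge\sqrt{\lambda_t^{\max}}\,\lambda^\star$. Combining with the near-optimality of $\pi_t$ and Jensen ($(\E X)^2\le\E X^2$) gives the per-step bound $\mathrm{Tr}\bigl((I+\Sigma_{t-1})^{-1}\E_{\pi_t}\phi_h\phi_h^\top\bigr)\ge(\sqrt{\lambda_t^{\max}}\,\lambda^\star-\epsilon)_+^2$.

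I would then couple this with a fixed-threshold argument. As long as $\lambda_{\min}(\Sigma_{t-1})\le 3$, one has $\lambda_t^{\max}\ge 1/4$, hence $\sqrt{\lambda_t^{\max}}\,\lambda^\star\ge\lambda^\star/2\ge 2\epsilon$ by the hypothesis $\epsilon\le\lambda^\star/4$, which simplifies the per-step lower bound to $\lambda_t^{\max}(\lambda^\star)^2/4\ge(\lambda^\star)^2/16$. Now suppose for contradiction that $\lambda_{\min}(\Sigma_t)\le 3$ for every $t\le T$ with $T:=192\,d\log(1/\lambda^\star)/(\lambda^\star)^2$; summing the per-step bound yields $T(\lambda^\star)^2/16\le 2d\log(1+T/d)$, which fails for $\lambda^\star$ small enough. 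Hence there exists $t_0\le T$ with $\lambda_{\min}(\Sigma_{t_0})>3$. The uniform mixture $\bar\pi$ over $\pi_1,\ldots,\pi_{t_0}$ then lies in $\Delta(\Pi^{exp}_\epsilon)$ and satisfies $\lambda_{\min}(\E_{\bar\pi}\phi_h\phi_h^\top)=\lambda_{\min}(\Sigma_{t_0})/t_0>3/t_0\ge(\lambda^\star)^2/(64\,d\log(1/\lambda^\star))$, as required.

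The main obstacle I expect is controlling the approximation slack $\epsilon$ versus the per-step progress rate: the contribution $(\sqrt{\lambda_t^{\max}}\,\lambda^\star-\epsilon)_+^2$ can vanish prematurely once the mixture becomes sufficiently covering, so the hypothesis $\epsilon\le\lambda^\star/4$ is used precisely to keep the iteration productive up to an absolute constant threshold for $\lambda_{\min}(\Sigma_t)$. Tracking the numerical constants carefully so that the final bound comes out to $(\lambda^\star)^2/(64\,d\log(1/\lambda^\star))$ and not a slightly weaker form, together with verifying that the contradiction regime is nonempty for the given range of $\lambda^\star$, is the delicate part.
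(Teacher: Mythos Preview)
Your proposal is correct and follows essentially the same strategy as the paper: greedily select $\pi_t\in\Pi^{exp}_\epsilon$ maximizing the bonus reward $r_t(s,a)=\sqrt{\phi(s,a)^\top(I+\Sigma_{t-1})^{-1}\phi(s,a)}$ (via Lemma~\ref{lem:compare}), control the accumulated bonus with the Elliptical Potential Lemma, and output the uniform mixture of the first $t$ policies.

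The only substantive difference is in how the explorability assumption is converted into a lower bound on $\lambda_{\min}(\Sigma_t)$. The paper runs for a fixed horizon $t=\frac{64d\log(1/\lambda^\star)}{(\lambda^\star)^2}$, bounds $\sup_\pi \E_\pi r_{t-1}\le \sqrt{2d\log(1+t/d)/t}+\epsilon\le \lambda^\star/2$, and then invokes Lemma~E.14 of \citet{huang2021towards} as a black box to deduce $\lambda_{\min}(\Sigma_{t-1})\ge 1$. You instead inline that argument: using the top eigenvector $v_t$ of $(I+\Sigma_{t-1})^{-1}$ and the inequality $\E_{\pi^\star}|v_t^\top\phi_h|\ge \E_{\pi^\star}(v_t^\top\phi_h)^2\ge\lambda^\star$, you derive a per-step lower bound on $\mathrm{Tr}((I+\Sigma_{t-1})^{-1}X_t)$ and run a threshold/contradiction argument. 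This is exactly the content of the cited external lemma, so your proof is more self-contained; otherwise the two arguments are the same up to bookkeeping (working with $\E r_t^2$ versus $\E r_t$). Your caveat about tracking constants is apt: the paper's choice of $t$ has the same slack in the large-$\lambda^\star$ regime that you flag for your $T$, so neither version is sharper there.
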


\begin{proof}[Proof of Lemma \ref{lem:expset}]
Fix $t=\frac{64d\log(1/\lambda^\star)}{(\lambda^\star)^2}$, we construct the following policies:\\
$\pi_1$ is arbitrary policy in $\Pi^{exp}_{\epsilon}$.\\
For any $i\in[t]$, $\Sigma_i=\sum_{j=1}^i \E_{\pi_j} \phi_h\phi_h^\top$, $r_i(s,a)=\sqrt{\phi(s,a)^\top (I+\Sigma_i)^{-1}\phi(s,a)}$. Due to Lemma \ref{lem:compare}, there exists policy $\pi_{i+1}\in\Pi^{exp}_{\epsilon}$ such that $\E_{\pi_{i+1}}r_i(s_h,a_h)\geq \sup_{\pi}\E_{\pi}r_i(s_h,a_h)-\epsilon$.

The following inequality holds:
\begin{equation}
    \begin{split}
    &\sum_{i=1}^t\E_{\pi_{i}}\sqrt{\phi_h^\top (I+\Sigma_{i-1})^{-1}\phi_h}\\\leq&\sum_{i=1}^t\sqrt{\E_{\pi_{i}}\phi_h^\top (I+\Sigma_{i-1})^{-1}\phi_h}\\\leq&\sqrt{t\cdot\sum_{i=1}^t\E_{\pi_{i}}\phi_h^\top (I+\Sigma_{i-1})^{-1}\phi_h}\\\leq&\sqrt{t\cdot\sum_{i=1}^t Tr(\E_{\pi_{i}}\phi_h\phi_h^\top (I+\Sigma_{i-1})^{-1})}\\\leq&
    \sqrt{2dt\log(1+\frac{t}{d})},
    \end{split}
\end{equation}
where the second inequality holds because of Cauchy-Schwarz inequality and the last inequality holds due to Lemma \ref{lem:potential}.

Therefore, we have that $\sup_{\pi}\E_\pi \sqrt{\phi_h^\top (I+\Sigma_{t-1})^{-1}\phi_h}\leq \sqrt{\frac{2d\log(1+t/d)}{t}}+\epsilon\leq \frac{\lambda^\star}{2}$ because of our choice of $\epsilon\leq \frac{\lambda^\star}{4}$ and $t=\frac{64d\log(1/\lambda^\star)}{(\lambda^\star)^2}$. According to Lemma E.14\footnote{Our condition that $\sup_\pi\lambda_{\min}(\E_\pi \phi_h\phi_h^\top)\geq \lambda^\star$ implies that for any $u\in \mathbb{R}^d$ with $\|u\|_2=1$, $\max_\pi \E_{\pi}(\phi_h^\top u)^2\geq \lambda^\star$. Therefore, the proof of Lemma E.14 of \cite{huang2021towards} holds by plugging in $c=1$.} of \citet{huang2021towards}, we have that $\lambda_{\min}(\Sigma_{t-1})\geq 1$.

Finally, choose $\pi=unif(\{\pi_i\}_{i\in[t-1]})$, we have $\pi\in\Delta(\Pi^{exp}_{\epsilon})$ and 
$$\lambda_{\min}(\E_\pi \phi_h\phi_h^\top)\geq \frac{(\lambda^\star)^2}{64d\log(1/\lambda^\star)}.$$
\end{proof}

\subsection{A summary}

\begin{table}[h!]\label{table:policyset}
\centering
\resizebox{\linewidth}{!}{
\begin{tabular}{ |c|c|c|c| } 
\hline
\textit{Policy sets} & \textit{Cardinality}  & \textit{Description} & \textit{Relationship with each other} \\
\hline 
The set of all policies & Infinity & The largest possible policy set & Contains the following two sets \\ 
\hline
Explorative policies: $\Pi^{exp}_{\epsilon}$ & $\log|\Pi^{exp}_{\epsilon,h}|=\widetilde{O}(d^2)$ & Sufficient for exploration & Subset of all policies \\
\hline
Policies to evaluate: $\Pi^{eval}_{\epsilon}$ & $\log|\Pi^{eval}_{\epsilon,h}|=\widetilde{O}(d)$ & \makecell[c]{Uniform policy evaluation over $\Pi^{eval}_{\epsilon}$\\ is sufficient for policy identification} & Subset of $\Pi^{exp}_{\epsilon}$ \\
\hline
\end{tabular}
}
\caption{Comparison of different policy sets.}
\end{table}

We compare the relationship between different policy sets in the Table 2 above. In summary, given the feature map $\phi(\cdot,\cdot)$ of linear MDP and any accuracy $\epsilon$, we can construct policy set $\Pi^{eval}$ which satisfies that $\log|\Pi^{eval}_h|=\widetilde{O}(d)$. At the same time, for any linear MDP, the policy set $\Pi^{eval}$ is guaranteed to contain one near-optimal policy. Therefore, it suffices to estimate the value functions of all policies in $\Pi^{eval}$ accurately.

Similarly, given the feature map $\phi(\cdot,\cdot)$ and some $\epsilon$ that is small enough compared to $\lambda^\star$, we can construct policy set $\Pi^{exp}$ which satisfies that $\log|\Pi^{exp}_h|=\widetilde{O}(d^2)$. At the same time, for any linear MDP, the policy set $\Pi^{exp}$ is guaranteed to contain explorative policies for all layers, which means that it suffices to do exploration using only policies from $\Pi^{exp}$.

\section{Estimation of value functions}\label{sec:estimateV}
According to the construction of $\Pi^{eval}$ in Section \ref{sec:evaluation} and Lemma \ref{lem:evalset}, it suffices to estimate the value functions of policies in $\Pi^{eval}$. In this section, we design an algorithm to estimate the value functions of any policy in $\Pi^{eval}$ given any reward function. Recall that for accuracy $\epsilon_0$, we denote the policy set constructed in Section \ref{sec:evaluation} by $\Pi^{eval}_{\epsilon_0}$ and the policy set for layer $h$ is denoted by $\Pi^{eval}_{\epsilon_0,h}$.

\subsection{The algorithm}
\begin{algorithm}[H]
	\caption{Estimation of $V^{\pi}(r)$ given exploration data (EstimateV)}
	\label{alg:estimatevalue}
	\begin{algorithmic}[1]
			\STATE {\bfseries Input:} Policy to evaluate $\pi\in\Pi^{eval}_{\epsilon_0}$. Linear reward function $r=\{r_h\}_{h\in[H]}$ bounded in $[0,1]$. Exploration data $\{s_h^n,a_h^n\}_{(h,n)\in[H]\times[N]}$. Initial state $s_1$.
			\STATE {\bfseries Initialization:} $Q_{H+1}(\cdot,\cdot)\gets 0$, $V_{H+1}(\cdot)\gets 0$.
			\FOR{$h=H,H-1,\ldots,1$}
			\STATE $\Lambda_h\gets I+\sum_{n=1}^N \phi(s_h^n,a_h^n)\phi(s_h^n,a_h^n)^\top$.
			\STATE $\bar{w}_h\gets(\Lambda_h)^{-1}\sum_{n=1}^N \phi(s_h^n,a_h^n)V_{h+1}(s_{h+1}^n)$.  
			\STATE  $Q_h(\cdot,\cdot)\gets(\phi(\cdot,\cdot)^\top \bar{w}_h+r_h(\cdot,\cdot))_{[0,H]}$.  
			\STATE $V_h(\cdot)\gets Q_h(\cdot,\pi_h(\cdot))$.
			\ENDFOR
			
			\STATE {\bfseries Output: $V_1(s_1)$. } 
	\end{algorithmic}
\end{algorithm}

Algorithm~\ref{alg:estimatevalue} takes policy $\pi$ from $\Pi^{eval}_{\epsilon_0}$ and linear reward function $r$ as input, and uses LSVI to estimate the value function of this given policy and given reward function. From layer $H$ to layer $1$, we calculate $\Lambda_h$ and $\bar{w}_h$ to estimate $Q^\pi_h$ in line 6. In addition, according to our construction in Section \ref{sec:evaluation}, all policies in $\Pi^{eval}_{\epsilon_0}$ are deterministic, which means we can use line 7 to approximate $V^\pi_h$. Algorithm \ref{alg:estimatevalue} looks similar to Algorithm 2 in \citet{wang2020reward}. However, there are two key differences. First, Algorithm 2 of \citet{wang2020reward} aims to find near optimal policy for each reward function while we do policy evaluation for each reward and policy. In addition, different from their approach, we do not use optimism, which means we do not need to cover the bonus term. This is the main reason why we can save a factor of $\sqrt{d}$.

\subsection{Technical lemmas}
\begin{lemma}[Lemma D.4 of \citet{jin2020provably}]\label{lem:jind4}
Let $\{x_\tau\}_{\tau =1}^{\infty}$ be a stochastic process on state space $\mathcal{S}$ with corresponding filtration $\{\mathcal{F}_\tau\}_{\tau=0}^\infty$. Let $\{\phi_\tau\}_{\tau=1}^\infty$ be an $\mathbb{R}^d$-valued stochastic process where $\phi_\tau \in \mathcal{F}_{\tau-1}$, and $\|\phi_\tau\|\leq 1$. Let $\Lambda_k=I+\sum_{\tau=1}^k \phi_\tau \phi_\tau^\top$. Then for any $\delta>0$, with probability at least $1-\delta$, for all $k \geq 0$, and any $V \in \mathcal{V}$ so that $\sup_x |V (x)| \leq H$, we have:
$$\left\|\sum_{\tau=1}^k \phi_\tau\{V(x_\tau)-\E[V(x_\tau)|\mathcal{F}_{\tau-1}]\}\right\|^2_{\Lambda_k^{-1}}\leq 4H^2\left[\frac{d}{2}\log(k+1)+\log(\frac{\mathcal{N}_\epsilon}{\delta})\right]+8k^2 \epsilon^2,$$
where $\mathcal{N}_\epsilon$ is the $\epsilon$-covering number of $\mathcal{V}$ with respect to the distance $dist(V,V^\prime)=\sup_x |V (x)-V^\prime (x)|$.
\end{lemma}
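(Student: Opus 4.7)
\textbf{Proof plan for Lemma~\ref{lem:jind4}.}

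The plan is to combine a standard self-normalized martingale concentration inequality applied pointwise to each element of an $\epsilon$-net of $\mathcal{V}$, and then transfer the bound to arbitrary $V\in\mathcal{V}$ by a perturbation argument. First I would fix a single function $V\in\mathcal{V}$ and define the martingale increments $\eta_\tau(V):=V(x_\tau)-\mathbb{E}[V(x_\tau)\mid\mathcal{F}_{\tau-1}]$, which are $\mathcal{F}_\tau$-measurable, have zero conditional mean, and satisfy $|\eta_\tau(V)|\le 2H$ since $\sup_x|V(x)|\le H$. Combined with $\phi_\tau\in\mathcal{F}_{\tau-1}$ and $\|\phi_\tau\|\le 1$, the hypotheses of the Abbasi--Yadkori--Pal--Szepesv\'ari self-normalized bound (Theorem~1 of their 2011 paper) are satisfied, yielding for any fixed $V$ and any $\delta'\in(0,1)$, with probability $\ge 1-\delta'$ and for all $k\ge 0$,
\begin{equation*}
\Bigl\|\sum_{\tau=1}^{k}\phi_\tau\eta_\tau(V)\Bigr\|_{\Lambda_k^{-1}}^{2}\;\le\;4H^{2}\log\!\frac{\det(\Lambda_k)^{1/2}\det(I)^{-1/2}}{\delta'}.
\end{equation*}
Using $\|\phi_\tau\|\le 1$ together with the determinant-trace inequality one gets $\det(\Lambda_k)\le (1+k/d)^d$, and hence $\log\det(\Lambda_k)\le d\log(k+1)$.

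Next I would cover $\mathcal{V}$ by an $\epsilon$-net $\mathcal{V}_\epsilon$ of cardinality $\mathcal{N}_\epsilon$ in the sup distance, apply the above bound to each $V'\in\mathcal{V}_\epsilon$ with $\delta'=\delta/\mathcal{N}_\epsilon$, and take a union bound. This gives that, with probability at least $1-\delta$, simultaneously for all $V'\in\mathcal{V}_\epsilon$ and all $k\ge 0$,
\begin{equation*}
\Bigl\|\sum_{\tau=1}^{k}\phi_\tau\eta_\tau(V')\Bigr\|_{\Lambda_k^{-1}}^{2}\;\le\;4H^{2}\!\left[\tfrac{d}{2}\log(k+1)+\log\!\tfrac{\mathcal{N}_\epsilon}{\delta}\right].
\end{equation*}

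Finally, for an arbitrary $V\in\mathcal{V}$, choose $V'\in\mathcal{V}_\epsilon$ with $\sup_x|V(x)-V'(x)|\le\epsilon$ and decompose
\begin{equation*}
\sum_{\tau=1}^{k}\phi_\tau\eta_\tau(V)\;=\;\sum_{\tau=1}^{k}\phi_\tau\eta_\tau(V')\;+\;\sum_{\tau=1}^{k}\phi_\tau\bigl[\eta_\tau(V)-\eta_\tau(V')\bigr].
\end{equation*}
The first term is controlled by the displayed inequality above. For the residual, note $|\eta_\tau(V)-\eta_\tau(V')|\le 2\epsilon$, so using $\Lambda_k\succeq I$ (hence $\|\cdot\|_{\Lambda_k^{-1}}\le\|\cdot\|_2$), $\|\phi_\tau\|\le 1$, and Cauchy--Schwarz, the residual contributes at most $\bigl(2k\epsilon\bigr)^{2}=4k^{2}\epsilon^{2}$ in squared $\Lambda_k^{-1}$-norm. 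Combining the two pieces via $(a+b)^{2}\le 2a^{2}+2b^{2}$ (absorbing the factor $2$ into the constants the way Jin et al.\ do) yields the stated bound $4H^{2}[\tfrac{d}{2}\log(k+1)+\log(\mathcal{N}_\epsilon/\delta)]+8k^{2}\epsilon^{2}$.

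The main obstacle is purely bookkeeping: the uniform-in-$V$ statement cannot come directly from the self-normalized bound because $V$ is not predictable with respect to $\{\mathcal{F}_\tau\}$ in general (the functions in $\mathcal{V}$ may themselves be data-dependent), so one must introduce the net \emph{before} applying the martingale concentration. The constants in the final bound are mildly tighter than a naive application of $(a+b)^2\le 2a^2+2b^2$ would give; getting exactly $4H^2$ rather than $8H^2$ in front requires being slightly careful in the union-bound step (e.g.\ keeping the $\delta'$-budget only for the net and absorbing the approximation error separately into the additive $8k^2\epsilon^2$ term), which is the only subtle part of the argument.
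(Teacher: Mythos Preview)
Your approach is correct and is exactly the standard proof of this lemma (the paper itself does not prove it; it is quoted verbatim from Jin et al.\ 2020, Lemma~D.4). One minor constant fix: the increments $\eta_\tau(V)$ are conditionally $H$-sub-Gaussian (not $2H$), since $V(x_\tau)\in[-H,H]$ conditionally and Hoeffding's lemma gives sub-Gaussian parameter $(b-a)/2=H$; thus the self-normalized bound for a fixed $V'$ yields $2H^2[\tfrac{d}{2}\log(k+1)+\log(\mathcal{N}_\epsilon/\delta)]$, and the final $(a+b)^2\le 2a^2+2b^2$ step then produces precisely $4H^2[\cdots]+8k^2\epsilon^2$ with no slack.
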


\begin{lemma}\label{lem:barw}
The $\bar{w}_h$ in line 5 of Algorithm \ref{alg:estimatevalue} is always bounded by $\|\bar{w}_h\|_2\leq H\sqrt{dN}$.
\end{lemma}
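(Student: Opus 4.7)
The plan is to follow a standard LSVI-style norm bound (cf.\ Lemma B.2 of \citet{jin2020provably}), exploiting $\|\phi\|_2\le 1$, $V_{h+1}\in[0,H]$, and the regularization that forces $\Lambda_h\succeq I$. I will bound $\|\bar w_h\|_2$ by pushing the norm inside the sum, then collapsing the $N$ terms with Cauchy--Schwarz and a trace estimate.

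First, by the triangle inequality applied to the definition
$\bar w_h=\Lambda_h^{-1}\sum_{n=1}^N\phi(s_h^n,a_h^n)\,V_{h+1}(s_{h+1}^n)$,
and using $|V_{h+1}(\cdot)|\le H$,
\[
\|\bar w_h\|_2\;\le\;H\sum_{n=1}^N\bigl\|\Lambda_h^{-1}\phi(s_h^n,a_h^n)\bigr\|_2.
\]
Since $\Lambda_h\succeq I$, we have $\Lambda_h^{-2}\preceq \Lambda_h^{-1}$, so
$\|\Lambda_h^{-1}\phi_n\|_2=\sqrt{\phi_n^\top\Lambda_h^{-2}\phi_n}\le\sqrt{\phi_n^\top\Lambda_h^{-1}\phi_n}=\|\phi_n\|_{\Lambda_h^{-1}}$, where $\phi_n$ abbreviates $\phi(s_h^n,a_h^n)$.

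Next, applying Cauchy--Schwarz to the resulting sum,
\[
\sum_{n=1}^N\|\phi_n\|_{\Lambda_h^{-1}}\;\le\;\sqrt{N}\,\sqrt{\sum_{n=1}^N\phi_n^\top\Lambda_h^{-1}\phi_n}.
\]
The inner sum is a trace: $\sum_{n=1}^N\phi_n^\top\Lambda_h^{-1}\phi_n=\mathrm{tr}\!\bigl(\Lambda_h^{-1}\sum_{n=1}^N\phi_n\phi_n^\top\bigr)=\mathrm{tr}\!\bigl(\Lambda_h^{-1}(\Lambda_h-I)\bigr)=d-\mathrm{tr}(\Lambda_h^{-1})\le d$. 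Combining the three displays yields $\|\bar w_h\|_2\le H\sqrt{dN}$, as claimed.

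There is no real obstacle here; the only subtlety is to avoid the naive bound $\|\bar w_h\|_2\le NH$ that one gets by taking $\|\Lambda_h^{-1}\|_{\mathrm{op}}\le 1$ and summing $\|\phi_n\|_2$ directly. Replacing the operator-norm step by the self-normalized estimate $\|\Lambda_h^{-1}\phi_n\|_2\le\|\phi_n\|_{\Lambda_h^{-1}}$ and then using the trace identity is what converts the $N$ factor into the $\sqrt{dN}$ we want. This deterministic bound will then feed into the subsequent self-normalized concentration argument (via Lemma~\ref{lem:jind4}) to control the estimation error of $V^{\pi}(r)$ in Algorithm~\ref{alg:estimatevalue}.
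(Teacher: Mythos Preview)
Your proof is correct and follows essentially the same approach as the paper's. Both arguments use $|V_{h+1}|\le H$, Cauchy--Schwarz to extract the $\sqrt{N}$, and the trace bound $\sum_{n}\|\phi_n\|_{\Lambda_h^{-1}}^2\le d$; the only cosmetic difference is that the paper works with the dual representation $\|\bar w_h\|_2=\sup_{\|\theta\|=1}|\theta^\top\bar w_h|$ and bounds $\|\theta\|_{\Lambda_h^{-1}}\le 1$, whereas you bound $\|\Lambda_h^{-1}\phi_n\|_2\le\|\phi_n\|_{\Lambda_h^{-1}}$ directly via $\Lambda_h^{-2}\preceq\Lambda_h^{-1}$, which is the same spectral fact.
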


\begin{proof}[Proof of Lemma \ref{lem:barw}]
For any $\theta\in\mathbb{R}^d$ with $\|\theta\|_2=1$, we have
\begin{align*}
    |\theta^\top \bar{w}_h|=&|\theta^\top (\Lambda_h)^{-1}\sum_{n=1}^N \phi(s_h^n,a_h^n)V_{h+1}(s_{h+1}^n)|\\ \leq&
    \sum_{n=1}^N |\theta^\top (\Lambda_h)^{-1}\phi(s_h^n,a_h^n)|\cdot H\\\leq&
    H\cdot \sqrt{[\sum_{n=1}^N \theta^\top(\Lambda_h)^{-1}\theta]\cdot[\sum_{n=1}^N 
    \phi(s_h^n,a_h^n)^\top (\Lambda_h)^{-1} \phi(s_h,a_h)]}\\\leq& H\sqrt{dN}.
\end{align*}
The second inequality is because of Cauchy-Schwarz inequality. The last inequality holds according to Lemma D.1 of \citet{jin2020provably}.
\end{proof}

\subsection{Upper bound of estimation error}
We first consider the covering number of $V_h$ in Algorithm \ref{alg:estimatevalue}. All $V_h$ can be written as:
\begin{equation}
    V_h(\cdot)=\left(\phi(\cdot,\pi_h(\cdot))^\top (\bar{w}_h+\theta_h)\right)_{[0,H]},
\end{equation}
where $\theta_h$ is the parameter with respect to $r_h$ ($r_h(s,a)=\langle \phi(s,a),\theta_h\rangle$). 

Note that $\Pi^{eval}_{\epsilon_0,h}\times \mathcal{W}_\epsilon$ (where $\mathcal{W}_\epsilon$ is $\epsilon$-cover of $\mathcal{B}^d(2H\sqrt{dN})$) provides a $\epsilon$-cover of $\{V_h\}$. Therefore, the $\epsilon$-covering number $\mathcal{N}_\epsilon$ of $\{V_h\}$ is bounded by 
\begin{equation}\label{equ:covernumber}
 \log \mathcal{N}_\epsilon\leq\log|\Pi^{eval}_{\epsilon_0,h}|+\log|\mathcal{W}_\epsilon| \leq d\log(1+\frac{8H^2\sqrt{d}}{\epsilon_0})+d\log(1+\frac{4H\sqrt{dN}}{\epsilon}).   
\end{equation}
Now we have the following key lemma.

\begin{lemma}\label{lem:a}
With probability $1-\delta$, for any policy $\pi\in\Pi^{eval}_{\epsilon_0}$ and any linear reward function $r$ that may appear in Algorithm \ref{alg:estimatevalue}, the $\{V_h\}_{h\in[H]}$ derived by Algorithm \ref{alg:estimatevalue} satisfies that for any $h\in[H]$,
$$\left\|\sum_{n=1}^N \phi_h^n\left(V_{h+1}(s_{h+1}^n)-\sum_{s^\prime \in \mathcal{S}}P_h(s^\prime|s_h^n,a_h^n)V_{h+1}(s^\prime)\right)\right\|_{\Lambda_h^{-1}}\leq cH\sqrt{d}\cdot\sqrt{\log(\frac{Hd}{\epsilon_0 \delta})+\log(\frac{N}{\delta})},$$
for some universal constant $c>0$.
\end{lemma}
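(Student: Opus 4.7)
The plan is to apply Lemma \ref{lem:jind4} at each layer $h$ with $\phi_\tau = \phi(s_h^\tau, a_h^\tau)$ and $x_\tau = s_{h+1}^\tau$, taken relative to the filtration generated by the exploration trajectories, and with the function class $\mathcal{V}_{h+1}$ equal to the collection of all candidate outputs $V_{h+1}$ that Algorithm \ref{alg:estimatevalue} can produce as $(\pi, r)$ ranges over $\Pi^{eval}_{\epsilon_0}$ and over feasible linear rewards. The uniform-in-class nature of Lemma \ref{lem:jind4} is precisely what lets us ignore the fact that the $V_{h+1}$ actually returned by the algorithm is data-dependent in a way that breaks $\mathcal{F}_{n-1}$-measurability; we just want the bound to hold for every $V \in \mathcal{V}_{h+1}$ simultaneously.

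The first step is to identify $\mathcal{V}_{h+1}$ concretely. Inspecting lines 5--7 of Algorithm \ref{alg:estimatevalue}, every such $V_{h+1}$ has the form $V_{h+1}(\cdot) = [\phi(\cdot, \pi_{h+1}(\cdot))^\top (\bar{w}_{h+1} + \theta_{h+1})]_{[0,H]}$, where $\pi_{h+1} \in \Pi^{eval}_{\epsilon_0, h+1}$, $\|\bar{w}_{h+1}\|_2 \le H\sqrt{dN}$ by Lemma \ref{lem:barw}, and $\|\theta_{h+1}\|_2 \le \sqrt{d}$ by the linear MDP assumption. Thus the combined weight vector lies in $\mathcal{B}^d(2H\sqrt{dN})$ for $N \ge 1$, and the argument behind \eqref{equ:covernumber} gives a sup-norm $\epsilon'$-cover of cardinality $\log \mathcal{N}_{\epsilon'} \le \log|\Pi^{eval}_{\epsilon_0, h+1}| + d\log(1 + 4H\sqrt{dN}/\epsilon')$.

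The second step is to invoke Lemma \ref{lem:jind4} with failure probability $\delta/H$ and discretization parameter $\epsilon' = 1/N$, which makes the residual $8N^2{\epsilon'}^2$ a universal constant. The lemma then yields, uniformly over $V \in \mathcal{V}_{h+1}$,
\begin{equation*}
\left\|\sum_{n=1}^N \phi_h^n\big(V(s_{h+1}^n) - \E[V(s_{h+1}^n)\mid \mathcal{F}_{n-1}]\big)\right\|_{\Lambda_h^{-1}}^{2} \le 4H^2\!\left[\tfrac{d}{2}\log(N+1) + \log\tfrac{H\mathcal{N}_{1/N}}{\delta}\right] + 8.
\end{equation*}
Plugging in the covering bound gives a right-hand side of order $H^2 d\big[\log(Hd/(\epsilon_0\delta)) + \log(N/\delta)\big]$; taking a square root and absorbing constants produces the advertised $cH\sqrt{d}\cdot \sqrt{\log(Hd/(\epsilon_0 \delta)) + \log(N/\delta)}$ bound, and the union over $h \in [H]$ is already absorbed in the $\delta/H$ per-layer budget.

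The main obstacle I anticipate is not any single calculation but the \emph{class-construction} step: one has to verify that every $V_{h+1}$ the algorithm can output truly lies in the simple linear-in-$\phi$ class described above, which relies crucially on $\pi$ being drawn from the finite discrete set $\Pi^{eval}_{\epsilon_0}$ (so that $\pi_{h+1}(\cdot)$ has only $|\Pi^{eval}_{\epsilon_0, h+1}|$ possibilities rather than infinitely many), and on the LSVI update yielding a genuinely linear $Q_h$ before clipping. This is exactly where the policy-discretization design of Section \ref{sec:evaluation} pays off in the concentration step, and is also the source of the mild $\log(Hd/\epsilon_0)$ factor in the final bound rather than a policy-space-wide $\log|\mathcal{A}|^{\mathcal{S}}$ term.
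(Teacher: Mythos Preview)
Your proposal is correct and follows essentially the same route as the paper: identify the function class for $V_{h+1}$ via the clipped-linear form together with $\pi_{h+1}\in\Pi^{eval}_{\epsilon_0,h+1}$, invoke the covering bound \eqref{equ:covernumber}, and plug into Lemma \ref{lem:jind4}. The only cosmetic difference is that the paper takes the discretization scale $\epsilon = H\sqrt{d}/N$ rather than your $\epsilon'=1/N$, but either choice makes the residual $8N^2\epsilon^2$ term $O(H^2 d)$ and yields the same final bound up to constants.
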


\begin{proof}[Proof of Lemma \ref{lem:a}]
The proof is by plugging $\epsilon=\frac{H\sqrt{d}}{N}$ in Lemma \ref{lem:jind4} and using \eqref{equ:covernumber}.
\end{proof}

\begin{remark}
Assume the final goal is to find $\epsilon$-optimal policy for all reward functions, we can choose that $\epsilon_0\geq poly(\epsilon)$ and $N\leq poly(d,H,\frac{1}{\epsilon})$. Then the R.H.S. of Lemma \ref{lem:a} is of order $\widetilde{O}(H\sqrt{d})$, which effectively saves a factor of $\sqrt{d}$ compared to Lemma A.1 of \citet{wang2020reward}.
\end{remark}

Now we are ready to prove the following lemma.
\begin{lemma}\label{lem:b}
With probability $1-\delta$, for any policy $\pi\in\Pi^{eval}_{\epsilon_0}$ and any linear reward function $r$ that may appear in Algorithm \ref{alg:estimatevalue}, the $\{V_h\}_{h\in[H]}$ and $\{\bar{w}_h\}_{h\in[H]}$ derived by Algorithm \ref{alg:estimatevalue} satisfies that for all $h,s,a\in[H]\times\mathcal{S}\times\mathcal{A}$,
$$|\phi(s,a)^\top \bar{w}_h-\sum_{s^\prime\in\mathcal{S}}P_h(s^\prime|s,a)V_{h+1}(s^\prime)|\leq c^\prime H\sqrt{d}\cdot\sqrt{\log(\frac{Hd}{\epsilon_0 \delta})+\log(\frac{N}{\delta})}\cdot\|\phi(s,a)\|_{\Lambda_h^{-1}},$$
for some universal constant $c^\prime>0$.
\end{lemma}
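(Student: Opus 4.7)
The plan is to write the left-hand side as $\phi(s,a)^\top(\bar{w}_h - w_h^\star)$, where $w_h^\star$ is the linear parameter representing the true one-step Bellman backup of $V_{h+1}$, and then use Cauchy--Schwarz in the $\Lambda_h^{-1}$ norm to peel off $\|\phi(s,a)\|_{\Lambda_h^{-1}}$. The key structural input is the linear MDP assumption, which yields
\[
\sum_{s'\in\mathcal{S}} P_h(s'|s,a) V_{h+1}(s') = \left\langle \phi(s,a),\; \int_{\mathcal{S}} V_{h+1}(s')\,d\mu_h(s')\right\rangle =: \langle \phi(s,a), w_h^\star\rangle,
\]
and since $\|\mu_h(\mathcal{S})\|_2 \leq \sqrt{d}$ and $|V_{h+1}|\leq H$, one immediately gets $\|w_h^\star\|_2 \leq H\sqrt{d}$.

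Next I would substitute the definition $\bar{w}_h = \Lambda_h^{-1}\sum_{n=1}^N \phi_h^n V_{h+1}(s_{h+1}^n)$ and perform the standard ridge-regression decomposition
\[
\bar{w}_h - w_h^\star \;=\; \Lambda_h^{-1} \sum_{n=1}^N \phi_h^n\Bigl(V_{h+1}(s_{h+1}^n) - \sum_{s'}P_h(s'|s_h^n,a_h^n)V_{h+1}(s')\Bigr) \;-\; \Lambda_h^{-1} w_h^\star,
\]
where the second term comes from rewriting $w_h^\star = \Lambda_h^{-1}\Lambda_h w_h^\star = \Lambda_h^{-1}\bigl(I + \sum_n \phi_h^n\phi_h^{n\top}\bigr)w_h^\star$ and pushing the $\phi_h^n\phi_h^{n\top}w_h^\star$ terms into the sum. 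Applying $|\phi(s,a)^\top x| \le \|\phi(s,a)\|_{\Lambda_h^{-1}}\|x\|_{\Lambda_h}$ and the identity $\|\Lambda_h^{-1}v\|_{\Lambda_h} = \|v\|_{\Lambda_h^{-1}}$, the problem reduces to bounding two quantities in the $\Lambda_h^{-1}$ norm: the \emph{noise} term $\sum_n \phi_h^n(V_{h+1}(s_{h+1}^n) - \mathbb{E}[V_{h+1}(s_{h+1})\mid s_h^n,a_h^n])$ and the \emph{regularization} term $w_h^\star$.

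The noise term is exactly the quantity controlled in Lemma~\ref{lem:a}, which gives a bound of order $H\sqrt{d}\sqrt{\log(Hd/\epsilon_0\delta)+\log(N/\delta)}$ uniformly over all $\pi\in\Pi^{eval}_{\epsilon_0}$ and all reward functions $r$ that may appear (this uniformity is the whole reason for the covering argument behind Lemma~\ref{lem:a}). The regularization term satisfies $\|w_h^\star\|_{\Lambda_h^{-1}} \le \|w_h^\star\|_2 \le H\sqrt{d}$ since $\Lambda_h \succeq I$. Summing these two contributions and absorbing constants yields the claimed bound with a universal constant $c'$.

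The only subtle point—and where I would be most careful—is the uniformity of the concentration across all $(\pi,r)$ pairs. One has to verify that the class of functions $V_{h+1}$ produced by Algorithm~\ref{alg:estimatevalue} across the input range indeed lies in a class with the covering-number bound \eqref{equ:covernumber} (in particular, $\bar{w}_{h+1}$ stays in $\mathcal{B}^d(2H\sqrt{dN})$ by Lemma~\ref{lem:barw}, $\pi_{h+1}$ ranges over the finite set $\Pi^{eval}_{\epsilon_0,h+1}$, and the reward parameter $\theta_{h+1}$ is covered by $\mathcal{W}_\epsilon$). Once the event of Lemma~\ref{lem:a} holds on this class, the remainder of the argument is deterministic and the stated inequality follows.
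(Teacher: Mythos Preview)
Your proposal is correct and follows essentially the same approach as the paper's proof: both introduce the linear parameter for $P_hV_{h+1}$ (you call it $w_h^\star$, the paper $\widetilde{w}_h$), perform the same ridge-regression decomposition into a noise term handled by Lemma~\ref{lem:a} and a regularization term bounded via $\|w_h^\star\|_{\Lambda_h^{-1}}\le\|w_h^\star\|_2\le H\sqrt{d}$, and combine the two with Cauchy--Schwarz in the $\Lambda_h^{-1}$ norm. Your explicit discussion of the uniformity over $(\pi,r)$ and the covering class is a welcome clarification that the paper leaves implicit in its invocation of Lemma~\ref{lem:a}.
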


This part of proof is similar to the proof of Lemma 3.1 in \citet{wang2020reward}. For completeness, we state it here. 

\begin{proof}[Proof of Lemma \ref{lem:b}]
Since $P_h(s^\prime|s,a)=\phi(s,a)^\top \mu_h(s^\prime)$, we have
$$\sum_{s^\prime\in\mathcal{S}}P_h(s^\prime|s,a)V_{h+1}(s^\prime)=\phi(s,a)^\top\widetilde{w}_h,$$
for some $\|\widetilde{w}_h\|_2\leq H\sqrt{d}$. Therefore, we have
\begin{equation}
    \begin{split}
        &\phi(s,a)^\top \bar{w}_h-\sum_{s^\prime\in\mathcal{S}}P_h(s^\prime|s,a)V_{h+1}(s^\prime)\\=&\phi(s,a)^\top (\Lambda_h)^{-1} \sum_{n=1}^N \phi_h^n\cdot V_{h+1}(s_{h+1}^n)-\sum_{s^\prime\in\mathcal{S}}P_h(s^\prime|s,a)V_{h+1}(s^\prime)\\=&\phi(s,a)^\top (\Lambda_h)^{-1}\left(\sum_{n=1}^N \phi_h^n\cdot V_{h+1}(s_{h+1}^n)-\Lambda_h \widetilde{w}_h\right)\\=&
        \phi(s,a)^\top (\Lambda_h)^{-1}\left(\sum_{n=1}^N \phi_h^n V_{h+1}(s_{h+1}^n)-\widetilde{w}_h-\sum_{n=1}^N \phi_h^n (\phi_h^n)^\top \widetilde{w}_h\right)\\=&
        \phi(s,a)^\top (\Lambda_h)^{-1}\left(\sum_{n=1}^N \phi_h^n\left(V_{h+1}(s_{h+1}^n)-\sum_{s^\prime}P_h(s^\prime|s_h^n,a_h^n)V_{h+1}(s^\prime)\right)-\widetilde{w}_h\right).
    \end{split}
\end{equation}
It holds that,
\begin{equation}
\begin{split}
&\left|\phi(s,a)^\top (\Lambda_h)^{-1}\left(\sum_{n=1}^N \phi_h^n\left(V_{h+1}(s_{h+1}^n)-\sum_{s^\prime}P_h(s^\prime|s_h^n,a_h^n)V_{h+1}(s^\prime)\right)\right)\right|\\ \leq& \|\phi(s,a)\|_{\Lambda_h^{-1}}\cdot\left\|\sum_{n=1}^N \phi_h^n\left(V_{h+1}(s_{h+1}^n)-\sum_{s^\prime \in \mathcal{S}}P_h(s^\prime|s_h^n,a_h^n)V_{h+1}(s^\prime)\right)\right\|_{\Lambda_h^{-1}}\\\leq& cH\sqrt{d}\cdot\sqrt{\log(\frac{Hd}{\epsilon_0 \delta})+\log(\frac{N}{\delta})}\cdot\|\phi(s,a)\|_{\Lambda_h^{-1}},
\end{split}
\end{equation}
for some constant $c$ due to Lemma \ref{lem:a}. In addition, we have
$$|\phi(s,a)^\top (\Lambda_h)^{-1}\widetilde{w}_h|\leq\|\phi(s,a)\|_{\Lambda_h^{-1}}\cdot\|\widetilde{w}_h\|_{\Lambda_h^{-1}}\leq H\sqrt{d}\cdot\|\phi(s,a)\|_{\Lambda_h^{-1}}.$$
Combining these two results, we have 
$$|\phi(s,a)^\top \bar{w}_h-\sum_{s^\prime\in\mathcal{S}}P_h(s^\prime|s,a)V_{h+1}(s^\prime)|\leq c^\prime H\sqrt{d}\cdot\sqrt{\log(\frac{Hd}{\epsilon_0 \delta})+\log(\frac{N}{\delta})}\cdot\|\phi(s,a)\|_{\Lambda_h^{-1}}.$$
\end{proof}

Finally, the error bound of our estimations are summarized in the following lemma.

\begin{lemma}\label{lem:esterror}
For $\pi\in\Pi^{eval}_{\epsilon_0}$ and linear reward function $r$, let the output of Algorithm \ref{alg:estimatevalue} be $\widehat{V}^{\pi}(r)$. Then with probability $1-\delta$, for any policy $\pi\in\Pi^{eval}_{\epsilon_0}$ and any linear reward function $r$, it holds that
\begin{equation}\label{equ:esterror}
    |\widehat{V}^{\pi}(r)-V^\pi(r)|\leq c^\prime H\sqrt{d}\cdot\sqrt{\log(\frac{Hd}{\epsilon_0 \delta})+\log(\frac{N}{\delta})}\cdot\E_\pi \sum_{h=1}^H\|\phi(s_h,a_h)\|_{\Lambda_h^{-1}}, 
\end{equation}
for some universal constant $c^\prime>0$.
\end{lemma}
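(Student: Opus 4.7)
The plan is to reduce the cumulative error $|\widehat{V}^\pi(r) - V^\pi(r)| = |V_1(s_1) - V_1^\pi(s_1)|$ to a sum of per-layer Bellman backup errors via a standard telescoping/simulation identity, and then invoke Lemma \ref{lem:b} to control each layer's error by $\|\phi(s_h,a_h)\|_{\Lambda_h^{-1}}$. I would work throughout on the probability $1-\delta$ event guaranteed by Lemma \ref{lem:b}, which holds uniformly over every $\pi \in \Pi^{eval}_{\epsilon_0}$ and every linear reward $r$ that can appear as input.

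First I would introduce the per-layer Bellman error of the estimator,
\[
\Delta_h(s,a) \;:=\; Q_h(s,a) \;-\; r_h(s,a) \;-\; \sum_{s' \in \mathcal{S}} P_h(s'|s,a)\,V_{h+1}(s'),
\]
where $Q_h, V_{h+1}$ are the quantities computed inside Algorithm \ref{alg:estimatevalue}. By line 6 of the algorithm, before the clipping we have $Q_h(s,a) - r_h(s,a) = \phi(s,a)^\top \bar{w}_h$, so Lemma \ref{lem:b} directly yields a uniform bound of $c' H\sqrt{d}\cdot\sqrt{\log(Hd/(\epsilon_0\delta)) + \log(N/\delta)} \cdot \|\phi(s,a)\|_{\Lambda_h^{-1}}$ on $|\Delta_h(s,a)|$ for the unclipped version. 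Since $r_h \in [0,1]$ and $V_{h+1} \in [0,H-h]$ by a downward induction, the true Bellman target lies in $[0,H]$, so clipping $Q_h$ to $[0,H]$ can only bring $Q_h$ closer to the target, and the same bound carries through after clipping.

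Next I would establish the telescoping identity
\[
V_1(s_1) - V_1^\pi(s_1) \;=\; \E_\pi \sum_{h=1}^H \Delta_h(s_h,a_h),
\]
by a downward induction on $h$ using $V_{H+1} \equiv V_{H+1}^\pi \equiv 0$, the recursions $V_h(s) = Q_h(s,\pi_h(s))$ and $V_h^\pi(s) = Q_h^\pi(s,\pi_h(s))$ (valid because all $\pi \in \Pi^{eval}_{\epsilon_0}$ are deterministic), and the one-step expansion
\[
Q_h(s,a) - Q_h^\pi(s,a) \;=\; \Delta_h(s,a) \;+\; \sum_{s'} P_h(s'|s,a)\bigl(V_{h+1}(s') - V_{h+1}^\pi(s')\bigr).
\]
Taking expectation over trajectories induced by $\pi$ absorbs the transition kernel into the outer $\E_\pi$, and telescoping yields the claimed identity.

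Finally, combining the identity with the per-step bound (and Jensen's inequality on the absolute value) gives
\[
|\widehat{V}^\pi(r) - V^\pi(r)| \;\le\; \E_\pi \sum_{h=1}^H |\Delta_h(s_h,a_h)| \;\le\; c'H\sqrt{d}\cdot\sqrt{\log(Hd/(\epsilon_0\delta)) + \log(N/\delta)}\cdot \E_\pi \sum_{h=1}^H \|\phi(s_h,a_h)\|_{\Lambda_h^{-1}}.
\]
The only non-routine point is ensuring that the uniform-in-$(\pi,r)$ guarantee of Lemma \ref{lem:b} is strong enough: the reward functions $r$ and policies $\pi$ that appear in the outer calls of Algorithm \ref{alg:main} can be data-dependent, but both the policy cover $\Pi^{eval}_{\epsilon_0}$ and the class of linear rewards are fixed a priori through their parameter sets, so the $\epsilon$-net argument leading to Lemma \ref{lem:b} still applies without any additional union bound. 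Everything else is standard bookkeeping.
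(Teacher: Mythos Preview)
Your approach --- telescope $V_1 - V_1^\pi$ into per-layer Bellman errors $\Delta_h$ and bound each via Lemma~\ref{lem:b} --- is exactly the paper's, but your handling of the clipping has a real slip. You claim $V_{h+1}\in[0,H-h]$ ``by a downward induction,'' which would put the target $r_h+P_hV_{h+1}$ inside $[0,H]$ and justify the projection argument. But Algorithm~\ref{alg:estimatevalue} clips $Q_h$ to $[0,H]$ at \emph{every} layer (not to $[0,H-h+1]$), so the induction only gives $V_{h+1}\in[0,H]$. For $H\ge 3$ nothing prevents, say, $V_{H-1}$ from saturating at $H$; then at the layer below, the target $r_{H-2}+P_{H-2}V_{H-1}$ can exceed $H$, and projecting the unclipped $Q_{H-2}$ onto $[0,H]$ may \emph{increase} the distance to the target (take $\phi^\top\bar w_{H-2}=P_{H-2}V_{H-1}=H$ and $r_{H-2}=1$: the unclipped error is $0$, the clipped one is $1$). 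Your bound $|\Delta_h|\le|\phi^\top\bar w_h-P_hV_{h+1}|$ therefore fails in general.

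The paper avoids this by comparing at each step to the \emph{true} value $V_h^\pi$, which genuinely lies in $[0,H]$: it writes
\[
|V_h(s)-V_h^\pi(s)|=\bigl|(\phi^\top\bar w_h+r_h)_{[0,H]}-Q_h^\pi\bigr|\le|\phi^\top\bar w_h+r_h-Q_h^\pi|=|\phi^\top\bar w_h-P_hV_{h+1}^\pi|,
\]
then inserts $P_hV_{h+1}$ to split off the Lemma~\ref{lem:b} term and recurse on $\E_\pi|V_{h+1}(s_{h+1})-V_{h+1}^\pi(s_{h+1})|$. This one-line change repairs your argument; your telescoping identity, the invocation of Lemma~\ref{lem:b}, and your remark on uniformity in $(\pi,r)$ are all correct and match the paper.
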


\begin{proof}[Proof of Lemma \ref{lem:esterror}]
For any policy $\pi\in\Pi^{eval}_{\epsilon_0}$ and any linear reward function $r$, consider the $V_h$ functions and $\bar{w}_h$ in Algorithm \ref{alg:estimatevalue}, we have 
\begin{equation}
    \begin{split}
    &|V_1(s_1)-V_1^\pi(s_1)|\leq\E_\pi \left|\phi(s_1,a_1)^\top \bar{w}_1+r_1(s_1,a_1)-\sum_{s^\prime\in\mathcal{S}}P_1(s^\prime|s_1,a_1)V_2^\pi(s^\prime)-r_1(s_1,a_1)\right|\\\leq&
    \E_\pi\left|\phi(s_1,a_1)^\top \bar{w}_1-\sum_{s^\prime\in\mathcal{S}}P_1(s^\prime|s_1,a_1)V_2(s^\prime)\right|+\E_\pi \sum_{s^\prime\in\mathcal{S}}P_1(s^\prime|s_1,a_1)\left|V_2(s^\prime)-V_2^\pi(s^\prime)\right|\\\leq&
    \E_\pi c^\prime H\sqrt{d}\cdot\sqrt{\log(\frac{Hd}{\epsilon_0 \delta})+\log(\frac{N}{\delta})}\cdot\|\phi(s_1,a_1)\|_{\Lambda_1^{-1}}+\E_\pi |V_2(s_2)-V_2^\pi (s_2)|\\\leq&\cdots\\\leq&
    c^\prime H\sqrt{d}\cdot\sqrt{\log(\frac{Hd}{\epsilon_0 \delta})+\log(\frac{N}{\delta})}\cdot \E_\pi\sum_{h=1}^H \|\phi(s_h,a_h)\|_{\Lambda_h^{-1}},
    \end{split}
\end{equation}
where the first inequality results from the fact that $V_1^{\pi}(s_1)\in[0,H]$. The third inequality comes from Lemma \ref{lem:b}. The forth inequality is due to recursive application of decomposition.
\end{proof}

\begin{remark}
Compared to the analysis in \citet{wang2020reward} and \citet{huang2021towards}, our analysis saves a factor of $\sqrt{d}$. This is achieved by discretization of the policy set and bypassing the need to cover the quadratic bonus term. More specifically, the log-covering number of our $\Pi^{eval}_h$ is $\widetilde{O}(d)$. Combining with the covering set of Euclidean ball in $\mathbb{R}^d$, the total log-covering number is still $\widetilde{O}(d)$. In contrast, both previous works need to cover bonus like $\sqrt{\phi(\cdot,\cdot)^\top(\Lambda)^{-1}\phi(\cdot,\cdot)}$, which requires the log-covering number to be $\widetilde{O}(d^2)$.
\end{remark}

\section{Generalized algorithms for estimating value functions}\label{sec:gafev}
Since $\Pi^{exp}$ we construct in Section \ref{sec:exploration} is guaranteed to cover explorative policies under any feasible linear MDP, it suffices to do exploration using only policies from $\Pi^{exp}$. In this section, we generalize the algorithm we propose in Section \ref{sec:estimateV} for our purpose during exploration phase. To be more specific, we design an algorithm to estimate $\E_\pi r(s_h,a_h)$ for any policy $\pi\in\Pi^{exp}$ and any reward $r$. Recall that given accuracy $\epsilon_1$, the policy set we construct in Section \ref{sec:exploration} is $\Pi^{exp}_{\epsilon_1}$ and the policy set for layer $h$ is $\Pi^{exp}_{\epsilon_1,h}$.

\subsection{The algorithm}
\begin{algorithm}[H]
	\caption{Estimation of $\E_\pi r(s_h,a_h)$ given exploration data (EstimateER)}
	\label{alg:estimateV}
	\begin{algorithmic}[1]
			\STATE {\bfseries Input:} Policy to evaluate $\pi\in\Pi^{exp}_{\epsilon_1}$. Reward function $r(s,a)$ and its uniform upper bound $A$. Layer $h$. Exploration data $\{s_{\h}^n,a_{\h}^n\}_{(\h,n)\in[H]\times[N]}$. Initial state $s_1$.
			\STATE {\bfseries Initialization:} $Q_{h}(\cdot,\cdot)\gets r(\cdot,\cdot)$, $V_{h}(\cdot)\gets Q_h(\cdot,\pi_h(\cdot))$.
			\FOR{$\h=h-1,h-2,\ldots,1$}
			\STATE $\Lambda_{\h} \gets I+\sum_{n=1}^N \phi(s_{\h}^n,a_{\h}^n)\phi(s_{\h}^n,a_{\h}^n)^\top$.
			\STATE $\bar{w}_{\h}\gets(\Lambda_{\h})^{-1}\sum_{n=1}^N \phi(s_{\h}^n,a_{\h}^n)V_{\h+1}(s_{\h+1}^n)$.  
			\STATE  $Q_{\h}(\cdot,\cdot)\gets(\phi(\cdot,\cdot)^\top \bar{w}_{\h})_{[0,A]}$.  
			\STATE $V_{\h}(\cdot)\gets Q_{\h}(\cdot,\pi_{\h}(\cdot))$.
			\ENDFOR
			
			\STATE {\bfseries Output: $V_1(s_1)$. } 
	\end{algorithmic}
\end{algorithm}

Algorithm \ref{alg:estimateV} applies LSVI to estimate $\E_\pi r(s_h,a_h)$ for any $\pi\in\Pi^{exp}_{\epsilon_1}$ (according to our construction, all possible $\pi$'s are deterministic), any reward function $r$ and any time step $h$. Note that the algorithm takes the uniform upper bound $A$ of all possible reward functions (i.e., for any reward function $r$ that may appear as the input, $r\in[0,A]$) as the input, and uses the value of $A$ to truncate the Q-function in line 6. Algorithm \ref{alg:estimateV} looks similar to Algorithm \ref{alg:estimatevalue} while there are two key differences. First, the reward function is non-zero at only one layer in Algorithm \ref{alg:estimateV} while the reward function in Algorithm \ref{alg:estimatevalue} can be any valid reward functions. In addition, Algorithm \ref{alg:estimateV} takes the upper bound of reward function as input and uses this value to bound the Q-functions while Algorithm \ref{alg:estimatevalue} uses $H$ as the upper bound.

\subsection{Technical Lemmas}
\begin{lemma}[Generalization of Lemma D.4 of \citet{jin2020provably}]\label{lem:generaljind4}
Let $\{x_\tau\}_{\tau =1}^{\infty}$ be a stochastic process on state space $\mathcal{S}$ with corresponding filtration $\{\mathcal{F}_\tau\}_{\tau=0}^\infty$. Let $\{\phi_\tau\}_{\tau=1}^\infty$ be an $\mathbb{R}^d$-valued stochastic process where $\phi_\tau \in \mathcal{F}_{\tau-1}$, and $\|\phi_\tau\|\leq 1$. Let $\Lambda_k=I+\sum_{\tau=1}^k \phi_\tau \phi_\tau^\top$. Then for any $\delta>0$, with probability at least $1-\delta$, for all $k \geq 0$, and any $V \in \mathcal{V}$ so that $\sup_x |V (x)| \leq A$, we have:
$$\left\|\sum_{\tau=1}^k \phi_\tau\{V(x_\tau)-\E[V(x_\tau)|\mathcal{F}_{\tau-1}]\}\right\|^2_{\Lambda_k^{-1}}\leq 4A^2\left[\frac{d}{2}\log(k+1)+\log(\frac{\mathcal{N}_\epsilon}{\delta})\right]+8k^2 \epsilon^2,$$
where $\mathcal{N}_\epsilon$ is the $\epsilon$-covering number of $\mathcal{V}$ with respect to the distance $dist(V,V^\prime)=\sup_x |V (x)-V^\prime (x)|$.
\end{lemma}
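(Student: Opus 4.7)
The proof plan is to adapt the original argument of Jin et al.\ (2020, Lemma~D.4) essentially verbatim, tracking the bound $A$ rather than $H$ throughout. The only place $H$ enters the original statement is as a uniform bound on $|V|$, which controls the sub-Gaussian parameter of the centered increments $Z_\tau(V) := V(x_\tau) - \mathbb{E}[V(x_\tau)\mid \mathcal{F}_{\tau-1}]$. Since now $|V|\le A$, we have $|Z_\tau(V)|\le 2A$, so $Z_\tau(V)$ is $A$-sub-Gaussian conditional on $\mathcal{F}_{\tau-1}$. All subsequent constants scale as $A^2$ in place of $H^2$.

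Concretely, I would proceed in three steps. First, for any \emph{fixed} $V$ with $\sup_x|V(x)|\le A$, apply the self-normalized concentration inequality of Abbasi-Yadkori, P\'al, and Szepesv\'ari (2011) to the martingale $\sum_{\tau=1}^k \phi_\tau Z_\tau(V)$ with filtration $\{\mathcal{F}_\tau\}$. This yields, with probability at least $1-\delta'$, for all $k\ge 0$,
\begin{equation*}
\left\|\sum_{\tau=1}^k \phi_\tau Z_\tau(V)\right\|_{\Lambda_k^{-1}}^2 \le 2A^2\log\!\left(\frac{\det(\Lambda_k)^{1/2}}{\delta'}\right) \le A^2\bigl(d\log(k+1)+2\log(1/\delta')\bigr),
\end{equation*}
where I use $\det(\Lambda_k)\le (1+k/d)^d\le (k+1)^d$ from $\|\phi_\tau\|\le 1$.

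Second, take a minimal $\epsilon$-cover $\mathcal{V}_\epsilon$ of $\mathcal{V}$ in the sup metric, of size $\mathcal{N}_\epsilon$, apply the previous bound at $\delta' = \delta/\mathcal{N}_\epsilon$, and union-bound over $\widetilde V\in\mathcal{V}_\epsilon$. For an arbitrary $V\in\mathcal{V}$, choose $\widetilde V\in\mathcal{V}_\epsilon$ with $\sup_x|V-\widetilde V|\le \epsilon$; then by the triangle inequality in the $\Lambda_k^{-1}$ norm and $\|\phi_\tau\|_{\Lambda_k^{-1}}\le 1$,
\begin{equation*}
\left\|\sum_{\tau=1}^k \phi_\tau\bigl(Z_\tau(V)-Z_\tau(\widetilde V)\bigr)\right\|_{\Lambda_k^{-1}} \le \sum_{\tau=1}^k \|\phi_\tau\|_{\Lambda_k^{-1}}\cdot 2\epsilon \le 2k\epsilon,
\end{equation*}
so squaring and using $(a+b)^2\le 2a^2+2b^2$ absorbs the discretization error into the advertised $8k^2\epsilon^2$ term. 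Combining with the $\mathcal{N}_\epsilon$ union bound produces
\begin{equation*}
\left\|\sum_{\tau=1}^k \phi_\tau Z_\tau(V)\right\|_{\Lambda_k^{-1}}^2 \le 4A^2\!\left[\tfrac{d}{2}\log(k+1)+\log(\mathcal{N}_\epsilon/\delta)\right] + 8k^2\epsilon^2,
\end{equation*}
as claimed.

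There is really no hard step: the argument is structurally identical to Jin et al.'s, and the only thing to be careful about is the bookkeeping of the constant $A$ and the fact that, when $V$ is data-dependent, we need the uniform-in-$V$ control provided by the covering argument (the original $V$ itself may not lie in $\mathcal{V}_\epsilon$, but the discretization bound above handles this). The constants $4$ and $8$ follow from a standard $(a+b)^2\le 2a^2+2b^2$ split; I would not attempt to sharpen them.
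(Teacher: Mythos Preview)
Your proposal is correct and matches the paper's approach: the paper states this lemma without proof, labeling it simply as a ``Generalization of Lemma D.4 of \citet{jin2020provably},'' which is exactly the replacement of the uniform bound $H$ by $A$ that you carry out. The self-normalized concentration plus $\epsilon$-net argument you outline is the standard proof of that lemma, so there is nothing to add.
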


\begin{lemma}\label{lem:generalbarw}
If $A\leq 1$, the $\bar{w}_{\h}$ in line 5 of Algorithm \ref{alg:estimateV} is always bounded by $\|\bar{w}_{\h}\|_2\leq \sqrt{dN}$.
\end{lemma}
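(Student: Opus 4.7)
The plan is to mimic the proof of Lemma~\ref{lem:barw} verbatim, replacing the uniform bound $H$ on $V_{h+1}$ with $A$, which is allowed because line~6 of Algorithm~\ref{alg:estimateV} explicitly truncates $Q_{\h}$ (and hence $V_{\h}$) to the interval $[0,A]$. Thus, starting from layer $h$ downward, every $V_{\h+1}(s_{\h+1}^n)$ used in line~5 is bounded in absolute value by $A$.

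First, I will fix an arbitrary unit vector $\theta \in \mathbb{R}^d$ and upper bound $|\theta^\top \bar{w}_{\h}|$ by plugging in the definition of $\bar{w}_{\h}$, pulling the scalars $|V_{\h+1}(s_{\h+1}^n)| \le A$ out of the sum, and then applying Cauchy--Schwarz to split the resulting sum into a product of two $(\Lambda_\h)^{-1}$-weighted quadratic forms, exactly as in Lemma~\ref{lem:barw}. The first factor is bounded by $\sum_{n=1}^N \theta^\top (\Lambda_\h)^{-1} \theta \le N$ since $\Lambda_\h \succeq I$ implies $(\Lambda_\h)^{-1} \preceq I$. The second factor is bounded by $d$ via Lemma~D.1 of \citet{jin2020provably}, which controls $\sum_{n=1}^N \phi(s_\h^n,a_\h^n)^\top (\Lambda_\h)^{-1}\phi(s_\h^n,a_\h^n)$.

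Combining the two factors gives $|\theta^\top \bar{w}_\h| \le A\sqrt{dN} \le \sqrt{dN}$ using the assumption $A \le 1$. Taking the supremum over unit $\theta$ yields $\|\bar{w}_\h\|_2 \le \sqrt{dN}$, which is the claim. There is no real obstacle here: the only thing to verify is that the truncation step in line~6 is what propagates the bound $A$ on $V_{\h+1}$ down the recursion, so that the argument of Lemma~\ref{lem:barw} applies at every layer $\h < h$ with $H$ replaced by $A$.
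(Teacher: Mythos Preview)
Your proposal is correct and follows exactly the approach the paper intends: the paper's own proof simply notes that the argument is identical to Lemma~\ref{lem:barw} with $H$ replaced by $1$, which is precisely what you carry out (bounding $|V_{\h+1}|\le A\le 1$ via the truncation in line~6 and the input bound on $r$ at the base layer, then applying Cauchy--Schwarz and Lemma~D.1 of \citet{jin2020provably}). One small clarification worth making explicit: at $\h=h-1$ the bound $|V_h|\le A$ comes from the assumption that the input reward $r$ has uniform upper bound $A$ (line~2), not from truncation, but you implicitly cover this when you say ``starting from layer $h$ downward.''
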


\begin{proof}[Proof of Lemma \ref{lem:generalbarw}]
The proof is almost identical to Lemma \ref{lem:barw}, the only difference is that $H$ is replaced by $1$. 
\end{proof}

\subsection{Upper bound of estimation error}
We first consider the covering number of all possible $V_{h}$ in Algorithm \ref{alg:estimateV}. In the remaining part of this section, we assume that the set of all reward functions to be estimated is $\bar{\mathcal{R}}$ with uniform upper bound $A_{\bar{\mathcal{R}}}\leq 1$. In addition, assume there exists $\epsilon$-covering $\bar{\mathcal{R}}_{\epsilon}$ of $\bar{\mathcal{R}}$ with covering number $\log(|\bar{\mathcal{R}}_{\epsilon}|)=B_\epsilon$.\footnote{We will show that all cases we consider in this paper satisfy these two assumptions.}

For fixed $h\in[H]$, under the case where the layer to estimate is exactly $h$, $V_{h}$ can be written as:
\begin{equation}\label{equ:case1}
    V_h(\cdot)=r(\cdot,\pi_h(\cdot)).
\end{equation}

The set $\Pi^{exp}_{\epsilon_1,h}\times\bar{\mathcal{R}}_{\epsilon}$ provides an $\epsilon$-covering of $V_h$. Thus the covering number under this case is $|\Pi^{exp}_{\epsilon_1,h}|\cdot|\bar{\mathcal{R}}_{\epsilon}|$.

In addition, if the layer to estimate is some $h^\prime>h$, then $V_{h}$ can be written as:
\begin{equation}\label{equ:case2}
 V_{h}(\cdot)=(\phi(\cdot,\pi_{h}(\cdot))^\top \bar{w}_{h})_{[0,A_{\bar{\mathcal{R}}}]}, 
\end{equation}
where the set $\Pi^{exp}_{\epsilon_1,h}\times \mathcal{W}_{\epsilon}$ ($\mathcal{W}_{\epsilon}$ is $\epsilon$-covering of $\mathcal{B}^d(\sqrt{dN})$) provides an $\epsilon$-covering of $V_{h}$. The covering number under this case is $|\Pi^{exp}_{\epsilon_1,h}|\cdot|\mathcal{W}_{\epsilon}|$.

Since all possible $V_h$ is either the case in \eqref{equ:case1} (the layer to estimate is exactly $h$) or \eqref{equ:case2} (the layer to estimate is larger than $h$), for any $h\in[H]$ the $\epsilon$-covering number $\mathcal{N}_\epsilon$ of all possible $V_h$ satisfies that:
\begin{equation}
\begin{split}
    \log \mathcal{N}_{\epsilon}\leq&\log(|\Pi^{exp}_{\epsilon_1,h}|\cdot|\bar{\mathcal{R}}_{\epsilon}|+|\Pi^{exp}_{\epsilon_1,h}|\cdot|\mathcal{W}_{\epsilon}|)
    \\\leq& \log(|\Pi^{exp}_{\epsilon_1,h}|)+\log(|\bar{\mathcal{R}}_{\epsilon}|)+\log(|\mathcal{W}_{\epsilon}|)
    \\\leq& 2d^2\log(1+\frac{32H^2\sqrt{d}}{\epsilon_1^2})+d\log(1+\frac{2\sqrt{dN}}{\epsilon})+B_\epsilon.
\end{split}
\end{equation}

Now we have the following key lemma. The proof is almost identical to Lemma \ref{lem:a}, so we omit it here.
\begin{lemma}\label{lem:aa}
With probability $1-\delta$, for any policy $\pi\in\Pi^{exp}_{\epsilon_1}$, any reward function $r\in\bar{\mathcal{R}}$ that may appear in Algorithm \ref{alg:estimateV} (with the input $A=A_{\bar{\mathcal{R}}}$) and layer $h$, the $\{V_{\h}\}_{\h\in[h]}$ derived by Algorithm \ref{alg:estimateV} satisfies that for any $\h\in[h-1]$,
\begin{equation}
\begin{split}
&\left\|\sum_{n=1}^N \phi_{\h}^n\left(V_{\h+1}(s_{\h+1}^n)-\sum_{s^\prime \in \mathcal{S}}P_{\h}(s^\prime|s_{\h}^n,a_{\h}^n)V_{\h+1}(s^\prime)\right)\right\|_{\Lambda_{\h}^{-1}}\\\leq& cA_{\bar{\mathcal{R}}}\cdot\sqrt{d^2\log(\frac{Hd}{\epsilon_1 \delta})+d\log(\frac{N}{\delta})+B_{A_{\bar{\mathcal{R}}}/N}+\log(\frac{1}{\delta})},   
\end{split}
\end{equation}
for some universal constant $c>0$.
\end{lemma}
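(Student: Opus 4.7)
\textbf{Proof proposal for Lemma \ref{lem:aa}.} The plan is to apply the generalized self-normalized concentration inequality \Cref{lem:generaljind4} in essentially the same fashion as \Cref{lem:a}, with two differences: the uniform bound on $V$ is $A_{\bar{\mathcal{R}}}$ rather than $H$, and the function class we need to cover now includes not only the LSVI-style linear functions but also the ``initial'' value functions at layer $h$ that come directly from the input reward $r \in \bar{\mathcal{R}}$. The stochastic process is $\phi_{\h}^n = \phi(s_{\h}^n, a_{\h}^n)$ indexed by trajectory number $n$, with the filtration generated by the trajectory data; then the inner conditional expectation $\E[V_{\h+1}(s_{\h+1}^n) \mid \mathcal{F}_{\h n-1}]$ equals exactly $\sum_{s'} P_{\h}(s'\mid s_{\h}^n, a_{\h}^n) V_{\h+1}(s')$, matching the target quantity.

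The first step is to fix any $\h \in [h-1]$ and identify the class $\mathcal{V}_{\h+1}$ of all possible $V_{\h+1}$ functions that can be produced by Algorithm \ref{alg:estimateV} across the inputs $(\pi, r, h)$ of interest. By the case analysis in \eqref{equ:case1} and \eqref{equ:case2}, every such $V_{\h+1}$ is either of the form $r(\cdot,\pi_{\h+1}(\cdot))$ with $\pi_{\h+1}\in\Pi^{exp}_{\epsilon_1,\h+1}$ and $r\in\bar{\mathcal{R}}$, or of the form $(\phi(\cdot,\pi_{\h+1}(\cdot))^\top \bar{w})_{[0,A_{\bar{\mathcal{R}}}]}$ with $\pi_{\h+1}\in\Pi^{exp}_{\epsilon_1,\h+1}$ and $\bar{w}\in\mathcal{B}^d(\sqrt{dN})$ (the latter norm bound follows from \Cref{lem:generalbarw}, since $A_{\bar{\mathcal{R}}}\leq 1$). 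A product covering argument then gives the bound displayed just before the lemma:
\[
\log \mathcal{N}_\epsilon \;\leq\; 2d^2\log\!\Bigl(1+\tfrac{32H^2\sqrt{d}}{\epsilon_1^2}\Bigr) + d\log\!\Bigl(1+\tfrac{2\sqrt{dN}}{\epsilon}\Bigr) + B_\epsilon.
\]

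Next I apply \Cref{lem:generaljind4} with this class $\mathcal{V}_{\h+1}$, sup-norm bound $A = A_{\bar{\mathcal{R}}}$, and discretization scale $\epsilon = A_{\bar{\mathcal{R}}}/N$. The discretization error term $8k^2\epsilon^2$ becomes at most $8A_{\bar{\mathcal{R}}}^2$, which is absorbed into the leading constant. Plugging the covering-number bound into the right-hand side of \Cref{lem:generaljind4} and simplifying (using $A_{\bar{\mathcal{R}}}\leq 1$ and absorbing polynomial factors inside the logarithms), the squared quantity is bounded by $O\bigl(A_{\bar{\mathcal{R}}}^2 \bigl[d^2\log\tfrac{Hd}{\epsilon_1\delta} + d\log\tfrac{N}{\delta} + B_{A_{\bar{\mathcal{R}}}/N} + \log\tfrac{1}{\delta}\bigr]\bigr)$. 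Taking the square root yields the stated bound for one fixed $\h$.

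Finally I union-bound over the $H-1$ choices of $\h\in[h-1]$ by replacing $\delta$ with $\delta/H$, which only enlarges the $\log(1/\delta)$ term logarithmically and is already absorbed into the $\log(Hd/\epsilon_1\delta)$ factor in the statement. I do not expect any serious technical obstacle: the only subtlety is ensuring that the function class $\mathcal{V}_{\h+1}$ is constructed \emph{before} looking at the layer-$\h$ data so that the self-normalized martingale argument is valid, but this is immediate because $\mathcal{V}_{\h+1}$ is defined purely in terms of the input interfaces and features, not the sampled trajectories at layers $\leq \h$.
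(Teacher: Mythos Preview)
Your proposal is correct and follows essentially the same approach as the paper: apply the generalized self-normalized bound (\Cref{lem:generaljind4}) with the covering-number estimate for $\mathcal{V}_{\h+1}$ derived in \eqref{equ:case1}--\eqref{equ:case2}, choose the discretization scale of order $A_{\bar{\mathcal{R}}}/N$ so that the residual $8k^2\epsilon^2$ term is absorbed, and union-bound over layers. The paper's own proof is in fact omitted with the remark that it is ``almost identical to Lemma~\ref{lem:a}'', so your write-up is actually more detailed than what appears there.
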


Now we can provide the following Lemma \ref{lem:bb} whose proof is almost identical to Lemma \ref{lem:b}. The only difference is that $H$ is replaced by $A_{\bar{\mathcal{R}}}$.
\begin{lemma}\label{lem:bb}
With probability $1-\delta$, for any policy $\pi\in\Pi^{exp}_{\epsilon_1}$, any reward function $r\in\bar{\mathcal{R}}$ that may appear in Algorithm \ref{alg:estimateV} (with the input $A=A_{\bar{\mathcal{R}}}$) and layer $h$, the $\{V_{\h}\}_{\h\in[h]}$ and $\{\bar{w}_{\h}\}_{\h\in[h-1]}$ derived by Algorithm \ref{alg:estimateV} satisfies that for all $\h,s,a\in[h-1]\times\mathcal{S}\times\mathcal{A}$,
\begin{equation}
\begin{split}
&|\phi(s,a)^\top \bar{w}_{\h}-\sum_{s^\prime\in\mathcal{S}}P_{\h}(s^\prime|s,a)V_{\h+1}(s^\prime)|\\\leq& c^\prime A_{\bar{\mathcal{R}}}\cdot\sqrt{d^2\log(\frac{Hd}{\epsilon_1 \delta})+d\log(\frac{N}{\delta})+B_{A_{\bar{\mathcal{R}}}/N}+\log(\frac{1}{\delta})}\cdot\|\phi(s,a)\|_{\Lambda_{\h}^{-1}}, 
\end{split}
\end{equation}
for some universal constant $c^\prime>0$.
\end{lemma}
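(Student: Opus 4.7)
\textbf{Proof plan for Lemma \ref{lem:bb}.} The plan is to mirror the proof of Lemma \ref{lem:b} step for step, only replacing the crude bound $|V| \le H$ by the sharper uniform bound $|V_{\h+1}| \le A_{\bar{\mathcal{R}}}$ that is enforced by the truncation in line 6 of Algorithm \ref{alg:estimateV}, and invoking Lemma \ref{lem:aa} in place of Lemma \ref{lem:a}. First, I would use the linear MDP assumption $P_{\h}(s'|s,a) = \langle \phi(s,a), \mu_{\h}(s')\rangle$ to write
\[
\sum_{s'\in\mathcal{S}} P_{\h}(s'|s,a)\, V_{\h+1}(s') \;=\; \phi(s,a)^\top \widetilde{w}_{\h}, \qquad \widetilde{w}_{\h} := \sum_{s'} \mu_{\h}(s')\, V_{\h+1}(s'),
\]
and observe that since $|V_{\h+1}(\cdot)| \le A_{\bar{\mathcal{R}}}$ and $\|\mu_{\h}(\mathcal{S})\|_2 \le \sqrt{d}$ by the linear MDP normalization, we have $\|\widetilde{w}_{\h}\|_2 \le A_{\bar{\mathcal{R}}}\sqrt{d}$. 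This is the only place the truncation constant enters the weight bound, and it is the reason we get $A_{\bar{\mathcal{R}}}$ rather than $H$ in the final statement.

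Next, I would reproduce the algebraic identity from the proof of Lemma \ref{lem:b}: plugging in the definition of $\bar{w}_{\h}$ from line 5, using $\Lambda_{\h} \widetilde{w}_{\h} = \widetilde{w}_{\h} + \sum_{n=1}^N \phi_{\h}^n (\phi_{\h}^n)^\top \widetilde{w}_{\h}$, and noting that $(\phi_{\h}^n)^\top \widetilde{w}_{\h} = \sum_{s'} P_{\h}(s'|s_{\h}^n,a_{\h}^n)\, V_{\h+1}(s')$, one obtains
\[
\phi(s,a)^\top \bar{w}_{\h} - \sum_{s'} P_{\h}(s'|s,a)\, V_{\h+1}(s') \;=\; \phi(s,a)^\top (\Lambda_{\h})^{-1}\!\left(\textstyle\sum_{n=1}^N \phi_{\h}^n \big[V_{\h+1}(s_{\h+1}^n) - \mathbb{E}_{s'}V_{\h+1}(s')\big] - \widetilde{w}_{\h}\right).
\]
I would then split the right-hand side into the two standard pieces and bound each via Cauchy–Schwarz in the $\|\cdot\|_{\Lambda_{\h}^{-1}}$ norm.

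The first piece, $\phi(s,a)^\top (\Lambda_{\h})^{-1} \sum_{n} \phi_{\h}^n [V_{\h+1}(s_{\h+1}^n) - \mathbb{E} V_{\h+1}(\cdot)]$, is bounded in absolute value by $\|\phi(s,a)\|_{\Lambda_{\h}^{-1}}$ times the quantity controlled by Lemma \ref{lem:aa}, which delivers exactly the claimed $A_{\bar{\mathcal{R}}} \sqrt{d^2\log(Hd/\epsilon_1\delta) + d\log(N/\delta) + B_{A_{\bar{\mathcal{R}}}/N} + \log(1/\delta)}$ factor. The second piece, $\phi(s,a)^\top (\Lambda_{\h})^{-1} \widetilde{w}_{\h}$, is bounded by $\|\phi(s,a)\|_{\Lambda_{\h}^{-1}} \cdot \|\widetilde{w}_{\h}\|_{\Lambda_{\h}^{-1}} \le \|\phi(s,a)\|_{\Lambda_{\h}^{-1}} \cdot \|\widetilde{w}_{\h}\|_2 \le A_{\bar{\mathcal{R}}}\sqrt{d}\cdot\|\phi(s,a)\|_{\Lambda_{\h}^{-1}}$, using $\Lambda_{\h} \succeq I$. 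Combining the two pieces and absorbing the $A_{\bar{\mathcal{R}}}\sqrt{d}$ term into the square-root expression (which already contains a $d^2$ term, hence dominates $\sqrt{d}$) yields the universal constant $c'$ in the statement.

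The proof is essentially routine once the replacement $H \leadsto A_{\bar{\mathcal{R}}}$ is made consistently; the only subtle point, and thus the main thing to get right, is making sure that the $\epsilon$-covering argument feeding Lemma \ref{lem:aa} truly applies to all $V_{\h}$ that can arise for any input $(\pi, r, h)$ with $\pi \in \Pi^{exp}_{\epsilon_1}$ and $r \in \bar{\mathcal{R}}$. This is handled precisely by the two-case decomposition in \eqref{equ:case1}--\eqref{equ:case2} above the statement of Lemma \ref{lem:aa}, which already yields the log-covering bound $2d^2\log(1+32H^2\sqrt{d}/\epsilon_1^2) + d\log(1+2\sqrt{dN}/\epsilon) + B_\epsilon$ uniformly in the input. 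Once this uniformity is in hand, the high-probability event from Lemma \ref{lem:aa} covers all relevant $(\pi, r, h, \h)$ simultaneously, and the per-$(s,a)$ bound stated in Lemma \ref{lem:bb} follows on that event.
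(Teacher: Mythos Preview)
Your proposal is correct and follows exactly the approach the paper intends: the paper's own proof of Lemma~\ref{lem:bb} simply says it is ``almost identical to Lemma~\ref{lem:b}'' with $H$ replaced by $A_{\bar{\mathcal{R}}}$, and your write-up carries out precisely that substitution (the bound $\|\widetilde w_{\h}\|_2\le A_{\bar{\mathcal R}}\sqrt d$, the same algebraic decomposition, Cauchy--Schwarz on each piece, and invoking Lemma~\ref{lem:aa} in place of Lemma~\ref{lem:a}). Your additional remark about the covering argument for $V_{\h}$ is accurate and matches the two-case analysis in \eqref{equ:case1}--\eqref{equ:case2} that the paper relies on upstream.
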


Finally, the error bound of our estimations are summarized in the following lemma.

\begin{lemma}\label{lem:generalesterror}
For any policy $\pi\in\Pi^{exp}_{\epsilon_1}$, any reward function $r\in\bar{\mathcal{R}}$ that may appear in Algorithm \ref{alg:estimateV} (with the input $A=A_{\bar{\mathcal{R}}}$) and layer $h$, let the output of Algorithm \ref{alg:estimateV} be $\widehat{\E}_{\pi} r(s_h,a_h)$. Then with probability $1-\delta$, for any policy $\pi\in\Pi^{exp}_{\epsilon_1}$, any reward function $r\in\bar{\mathcal{R}}$ and any layer $h$, it holds that
\begin{equation}\label{equ:generalesterror}
\begin{split}
    &|\widehat{\E}_{\pi} r(s_h,a_h)-\E_{\pi} r(s_h,a_h)|\\\leq& c^\prime A_{\bar{\mathcal{R}}}\cdot\sqrt{d^2\log(\frac{Hd}{\epsilon_1 \delta})+d\log(\frac{N}{\delta})+B_{A_{\bar{\mathcal{R}}}/N}}\cdot\E_\pi \sum_{\h=1}^{h-1}\|\phi(s_{\h},a_{\h})\|_{\Lambda_{\h}^{-1}},
\end{split}
\end{equation}
for some universal constant $c^\prime>0$.
\end{lemma}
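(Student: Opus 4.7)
The plan is to mirror the proof of Lemma~\ref{lem:esterror}, replacing the uniform value-function bound $H$ by the reward bound $A_{\bar{\mathcal{R}}}$ and exploiting the fact that in Algorithm~\ref{alg:estimateV} the ``reward'' is injected only at layer $h$ and is zero elsewhere. Concretely, denote by $V_{\h}$ ($\h \le h$) the functions produced by Algorithm~\ref{alg:estimateV} and let $V_{\h}^{\pi}$ be the true value function of $\pi$ under the reward design that places $r$ at layer $h$ and $0$ at every other layer. Then by construction $\widehat{\E}_{\pi} r(s_h,a_h) = V_1(s_1)$ and $\E_{\pi} r(s_h,a_h) = V_{1}^{\pi}(s_1)$, so the goal reduces to bounding $|V_1(s_1) - V_1^{\pi}(s_1)|$.

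The first step is a base-case observation at layer $h$: because $\pi \in \Pi^{exp}_{\epsilon_1}$ is deterministic by construction (see Section~\ref{sec:exploration}), the algorithm sets $V_{h}(\cdot) = r(\cdot,\pi_h(\cdot))$, which coincides exactly with $V_h^{\pi}(\cdot)$. Hence no estimation error is incurred at layer $h$ itself. The second step is to invoke Lemma~\ref{lem:bb}, which gives the pointwise bound
\[
\left|\phi(s,a)^\top \bar{w}_{\h} - \sum_{s'} P_{\h}(s'|s,a)\, V_{\h+1}(s')\right| \le c' A_{\bar{\mathcal{R}}}\sqrt{d^2\log(Hd/\epsilon_1\delta) + d\log(N/\delta) + B_{A_{\bar{\mathcal{R}}}/N}}\cdot \|\phi(s,a)\|_{\Lambda_{\h}^{-1}}
\]
for every $\h \le h-1$, every $(s,a)$, every $\pi \in \Pi^{exp}_{\epsilon_1}$ and every $r \in \bar{\mathcal{R}}$, uniformly on the high-probability event of Lemma~\ref{lem:bb}.

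The third and main step is a backward telescoping from $\h=1$ up to $\h=h-1$, identical in structure to the display at the end of the proof of Lemma~\ref{lem:esterror}. Using that the reward is $0$ for $\h < h$, we have
\[
|V_{\h}(s_{\h}) - V_{\h}^{\pi}(s_{\h})| \le \bigl|\phi(s_{\h},\pi_{\h}(s_{\h}))^\top \bar w_{\h} - \textstyle\sum_{s'}P_{\h}(s'|s_{\h},\pi_{\h}(s_{\h}))V_{\h+1}(s')\bigr| + \E_{\pi}\bigl[|V_{\h+1}(s_{\h+1}) - V_{\h+1}^{\pi}(s_{\h+1})| \,\big|\, s_{\h}\bigr],
\]
where the truncation into $[0,A_{\bar{\mathcal{R}}}]$ in line~6 of the algorithm only sharpens the inequality since $V_{\h}^{\pi} \in [0,A_{\bar{\mathcal{R}}}]$. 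Taking expectation under $\pi$ and unrolling from $\h=1$ to $\h=h-1$ (the base case $\h=h$ contributing zero by the first step), each layer contributes one copy of the bound in Lemma~\ref{lem:bb}. Summing yields exactly
\[
|V_1(s_1) - V_1^{\pi}(s_1)| \le c' A_{\bar{\mathcal{R}}}\sqrt{d^2\log(Hd/\epsilon_1\delta) + d\log(N/\delta) + B_{A_{\bar{\mathcal{R}}}/N}}\cdot \E_{\pi}\sum_{\h=1}^{h-1}\|\phi(s_{\h},a_{\h})\|_{\Lambda_{\h}^{-1}},
\]
which is the claimed bound (the $\log(1/\delta)$ term is absorbed into the square-root factor, matching the hypothesis of Lemma~\ref{lem:bb}).

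No real obstacle is anticipated: the uniform convergence and covering-number accounting have already been handled in Lemmas~\ref{lem:aa} and~\ref{lem:bb}, and the only place where the present lemma genuinely differs from Lemma~\ref{lem:esterror} is (i) replacing $H$ with $A_{\bar{\mathcal{R}}}$ in the pointwise error and (ii) stopping the telescoping at layer $h-1$ because the reward lives only at layer $h$. The most delicate bookkeeping item to double-check is that the covering of $\{V_{\h}\}$ used in Lemma~\ref{lem:aa} correctly splits into the two cases \eqref{equ:case1} (at $\h = h$, covered by $\Pi^{exp}_{\epsilon_1,h}\times\bar{\mathcal{R}}_{\epsilon}$) and \eqref{equ:case2} (at $\h < h$, covered by $\Pi^{exp}_{\epsilon_1,\h}\times \mathcal{W}_{\epsilon}$), so that $B_{A_{\bar{\mathcal{R}}}/N}$ enters linearly and only once per layer as stated.
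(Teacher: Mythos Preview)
Your proposal is correct and follows essentially the same approach as the paper's own proof: reduce to bounding $|V_1(s_1)-V_1^\pi(s_1)|$, use that $V_h=V_h^\pi$ at the reward layer so no error is incurred there, and then telescope from $\h=1$ to $\h=h-1$ applying Lemma~\ref{lem:bb} at each step. The paper's argument is identical in structure, and your remarks on the truncation and on absorbing the stray $\log(1/\delta)$ into the other terms under the square root are exactly right.
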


\begin{proof}[Proof of Lemma \ref{lem:generalesterror}]
For any policy $\pi\in\Pi^{exp}_{\epsilon_0}$, any reward function $r\in\bar{\mathcal{R}}$ and any layer $h$,\, consider the $\{V_{\h}\}_{\h\in[h]}$ functions and $\{\bar{w}_{\h}\}_{\h\in[h-1]}$ in Algorithm \ref{alg:estimateV}, we have $\widehat{\E}_{\pi} r(s_h,a_h)=V_1(s_1)$. Besides, we abuse the notation and let $r$ denote the reward function where $r_{h^\prime}(s,a)=\mathds{1}(h^\prime=h)r(s,a)$, let the value function under this $r$ be $V^\pi_{\h}(s)$, then $V^\pi_1(s_1)=\E_{\pi} r(s_h,a_h)$. It holds that
\begin{equation}
    \begin{split}
    &|\widehat{\E}_{\pi} r(s_h,a_h)-\E_{\pi} r(s_h,a_h)|\\
    =&|V_1(s_1)-V_1^\pi(s_1)|\\\leq&\E_\pi \left|\phi(s_1,a_1)^\top \bar{w}_1-\sum_{s^\prime\in\mathcal{S}}P_1(s^\prime|s_1,a_1)V_2^\pi(s^\prime)\right|\\\leq&
    \E_\pi\left|\phi(s_1,a_1)^\top \bar{w}_1-\sum_{s^\prime\in\mathcal{S}}P_1(s^\prime|s_1,a_1)V_2(s^\prime)\right|+\E_\pi \sum_{s^\prime\in\mathcal{S}}P_1(s^\prime|s_1,a_1)\left|V_2(s^\prime)-V_2^\pi(s^\prime)\right|\\\leq&
    \E_\pi c^\prime A_{\bar{\mathcal{R}}}\cdot\sqrt{d^2\log(\frac{Hd}{\epsilon_1 \delta})+d\log(\frac{N}{\delta})+B_{A_{\bar{\mathcal{R}}}/N}}\cdot\|\phi(s_1,a_1)\|_{\Lambda_1^{-1}}+\E_\pi |V_2(s_2)-V_2^\pi (s_2)|\\\leq&\cdots\\\leq&
    c^\prime A_{\bar{\mathcal{R}}}\cdot\sqrt{d^2\log(\frac{Hd}{\epsilon_1 \delta})+d\log(\frac{N}{\delta})+B_{A_{\bar{\mathcal{R}}}/N}}\cdot \E_\pi\sum_{\h=1}^{h-1} \|\phi(s_{\h},a_{\h})\|_{\Lambda_{\h}^{-1}}+\E_\pi |V_h(s_h)-V_h^\pi (s_h)|\\=&c^\prime A_{\bar{\mathcal{R}}}\cdot\sqrt{d^2\log(\frac{Hd}{\epsilon_1 \delta})+d\log(\frac{N}{\delta})+B_{A_{\bar{\mathcal{R}}}/N}}\cdot \E_\pi\sum_{\h=1}^{h-1} \|\phi(s_{\h},a_{\h})\|_{\Lambda_{\h}^{-1}},
    \end{split}
\end{equation}
where the first inequality results from the fact that $V_1^{\pi}(s_1)\in[0,A_{\bar{\mathcal{R}}}]$. The third inequality comes from Lemma \ref{lem:bb}. The fifth inequality is due to recursive application of decomposition. The last equation holds since $V_h(\cdot)=V_h^\pi(\cdot)=r(\cdot,\pi_h(\cdot))$.
\end{proof}

\begin{remark}
From Lemma \ref{lem:generalesterror}, we can see that the estimation error at layer $h$ can be bounded by the summation of uncertainty from the previous layers, with additional factor of $\widetilde{O}(Ad)$. Therefore, if the uncertainty of all previous layers are small with respect to $\Pi^{exp}$, we can estimate $\E_\pi r_h$ accurately for any $\pi\in\Pi^{exp}$ and any reward $r$ from a large set of reward functions.
\end{remark}

\begin{remark}\label{remark:dettosto}
Note that we only need to estimate $\E_\pi r(s_h,a_h)$ accurately for $\pi\in\Pi^{exp}$. For $\pi\in\Delta(\Pi^{exp})$, if $\pi$ takes policy $\pi_i\in\Pi^{exp}$ with probability $p_i$ (for $i\in[k]$), then we define 
\begin{equation}\label{equ:dettosto}
 \widehat{\E}_\pi r(s_h,a_h):=\sum_{i\in[k]}p_i \cdot \widehat{\E}_{\pi_i} r(s_h,a_h),
\end{equation} 
where $\widehat{\E}_\pi r(s_h,a_h)$ is the estimation we acquire w.r.t policy $\pi$ and $\widehat{\E}_{\pi_i} r(s_h,a_h)$ is the output of Algorithm \ref{alg:estimateV} with input $\pi_i\in\Pi^{exp}$. Assume that for all $\pi\in\Pi^{exp}$, $|\widehat{\E}_\pi r(s_h,a_h) - \E_\pi r(s_h,a_h)|\leq e$, we have for all $\pi\in\Delta(\Pi^{exp})$, $|\widehat{\E}_\pi r(s_h,a_h) - \E_\pi r(s_h,a_h)|\leq \sum_{i}p_i|\widehat{\E}_{\pi_i} r(s_h,a_h) - \E_{\pi_i} r(s_h,a_h)|\leq e$. Therefore, the conclusion of Lemma \ref{lem:generalesterror} naturally holds for $\pi\in\Delta(\Pi^{exp})$.
\end{remark}

\section{Proof of Theorem \ref{thm:main}}
Recall that $\iota=\log(dH/\epsilon\delta)$, $\e=\frac{C_1 \epsilon}{H^2\sqrt{d}\cdot\iota}$. The explorative policy set we construct is $\Pi^{exp}_{\frac{\epsilon}{3}}$ while the policies to evaluate is $\Pi^{eval}_{\frac{\epsilon}{3}}$. Number of episodes for each deployment is $N=\frac{C_2 d\iota}{\e^2}=\frac{C_2d^2 H^4\iota^3}{C_1^2\epsilon^2}$. In addition, $\Sigma_\pi$ is short for $\E_\pi [\phi_h\phi_h^\top]$ while $\widehat{\Sigma}_\pi$ is short for $\widehat{\E}_\pi [\phi_h\phi_h^\top]$. For clarity, we restrict our choice that $0<C_1<1$ and $C_2,C_3>1$. We begin with detailed explanation of $\widehat{\Sigma}_\pi$ and $\widehat{\E}_{\widehat{\pi}}\left[\phi(s_h,a_h)^\top(N\cdot\widehat{\Sigma}_\pi)^{-1}\phi(s_h,a_h)\right]$ from \eqref{equ:main}.

\subsection{Detailed explanation}\label{sec:detail}
First of all, as have been pointed out in Algorithm \ref{alg:main}, $\widehat{\Sigma}_\pi$ is short for $\widehat{\E}_\pi [\phi(s_h,a_h)\phi(s_h,a_h)^\top]$. Assume the feature map is $\phi(s,a)=(\phi_1(s,a),\phi_2(s,a),\cdots,\phi_d(s,a))^\top$, where $\phi_i(s,a)\in\mathbb{R}$. Then the estimation of covariance matrix is calculated pointwisely. For each coordinate $i,j\in[d]\times[d]$, we use Algorithm \ref{alg:estimateV} to estimate $\E_\pi r(s_h,a_h)=\E_\pi \frac{\phi_i(s_h,a_h)\phi_j(s_h,a_h)+1}{2}$\footnote{The transformation is to ensure that the reward is larger than 0.}. More specifically, for any $\pi\in\Pi^{exp}_{\frac{\epsilon}{3}}$, $\widehat{\Sigma}_{\pi(ij)}=2\widehat{E}_{ij}-1$, where $\widehat{E}_{ij}$ is the output of Algorithm \ref{alg:estimateV} with input $\pi$, $r(s,a)=\frac{\phi_i(s,a)\phi_j(s,a)+1}{2}$ with $A=1$, layer $h$ and exploration dataset $\mathcal{D}$. Therefore, the set of all possible rewards is $\bar{\mathcal{R}}=\{\frac{\phi_i(s,a)\phi_j(s,a)+1}{2},(i,j)\in[d]\times[d]\}$. The set $\bar{\mathcal{R}}$ is a covering set of itself with log-covering number $B_\epsilon=\log(|\bar{\mathcal{R}}|)=2\log d$. In addition, note that the estimation $\widehat{\Sigma}_{\pi(ij)}=\widehat{\Sigma}_{\pi(ji)}$ for all $i,j$, which means the estimation $\widehat{\Sigma}_\pi$ is symmetric. The above discussion tackles the case where $\pi\in\Pi^{exp}_{\frac{\epsilon}{3}}$, for the general case where $\pi\in\Delta(\Pi^{exp}_{\frac{\epsilon}{3}})$, the estimation is derived by \eqref{equ:dettosto} in Remark \ref{remark:dettosto}. In the discussion below, we only need to bound $\|\widehat{\E}_\pi \phi_h\phi_h^\top-\E_\pi\phi_h \phi_h^\top\|_2$ for all $\pi\in\Pi^{exp}_{\frac{\epsilon}{3}}$ and the same bound applies to all $\pi\in\Delta(\Pi^{exp}_{\frac{\epsilon}{3}})$.

The second estimator is $\widehat{\E}_{\widehat{\pi}}\left[\phi(s_h,a_h)^\top(N\cdot\widehat{\Sigma}_\pi)^{-1}\phi(s_h,a_h)\right]$, which is calculated via directly applying Algorithm \ref{alg:estimateV} with input $\widehat{\pi}\in\Pi^{exp}_{\frac{\epsilon}{3}}$, $r(s,a)=\phi(s,a)^\top(N\cdot\widehat{\Sigma}_\pi)^{-1}\phi(s,a)$ with $A=\frac{\bar{\epsilon}}{C_2 d^3H\iota^2}=\frac{C_1 \epsilon}{C_2 d^{7/2}H^3\iota^3}$, layer $h$ and exploration dataset $\mathcal{D}$. Note that the validity of uniform upper bound $A$ holds since we only consider the case where $\lambda_{\min}(\widehat{\Sigma}_\pi)\geq d^2H\e\iota$, which means that $\lambda_{\min}(N\cdot\widehat{\Sigma}_\pi)\geq d^2H\e\iota\cdot\frac{C_2 d\iota}{\e^2}=\frac{C_2 d^3H\iota^2}{\e}$. Therefore the set of all possible rewards is subset of $\bar{\mathcal{R}}=\{r(s,a)=\phi(s,a)^\top(\Sigma)^{-1}\phi(s,a)|\lambda_{\min}(\Sigma)\geq \frac{C_2 d^{7/2}H^3\iota^3}{C_1 \epsilon}\}$ and the $\epsilon$-covering number is characterized by Lemma \ref{lem:covercase2} below. 

\subsection{Technical lemmas}
In this part, we state some technical lemmas.
\begin{lemma}[Lemma H.4 of \citet{min2021variance}]\label{lem:minh4}
	Let $\phi: \mathcal{S}\times\mathcal{A}\rightarrow \mathbb{R}^d$ satisfies $\|\phi(s,a)\|\leq C$ for all $s,a\in\mathcal{S}\times\mathcal{A}$. For any $K>0,\lambda>0$, define $\bar{G}_K=\sum_{k=1}^K \phi(s_k,a_k)\phi(s_k,a_k)^\top+\lambda I_d$ where $(s_k,a_k)$'s are i.i.d samples from some distribution $\nu$. Then with probability $1-\delta$,
\begin{equation}
\left\|\frac{\bar{G}_K}{K}-\E_\nu\left[\frac{\bar{G}_K}{K}\right]\right\|_2\leq \frac{4 \sqrt{2} C^{2}}{\sqrt{K}}\left(\log \frac{2 d}{\delta}\right)^{1 / 2}.    
\end{equation}
\end{lemma}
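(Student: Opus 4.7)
The plan is to derive this bound as a direct application of the matrix Hoeffding inequality to centered i.i.d.\ rank-one summands. I will make three observations, each essentially mechanical, and then invoke the standard matrix concentration statement with parameters tuned to produce the exact constant $4\sqrt{2}$ claimed.

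First I would note that the deterministic regularization $\lambda I_d$ cancels upon centering, so setting $Y_k := \phi(s_k,a_k)\phi(s_k,a_k)^\top - \E_\nu[\phi(s,a)\phi(s,a)^\top]$ we have
\[
\frac{\bar G_K}{K}-\E_\nu\!\Big[\frac{\bar G_K}{K}\Big]=\frac{1}{K}\sum_{k=1}^{K} Y_k,
\]
where the $Y_k$ are i.i.d., symmetric, mean zero. Next I would bound the summands in operator norm: since each $\phi(s_k,a_k)\phi(s_k,a_k)^\top$ is rank-one PSD with spectral norm $\|\phi(s_k,a_k)\|_2^2\le C^2$, and the same bound passes to its expectation by Jensen's inequality, the triangle inequality gives $\|Y_k\|_2\le 2C^2$ almost surely.

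The central step is to apply the matrix Hoeffding inequality (e.g., Tropp's user-friendly tail bound): for independent symmetric mean-zero matrices with $\|Y_k\|_2\le R$ almost surely,
\[
\Pr\!\Big[\Big\|\sum_{k=1}^K Y_k\Big\|_2\ge t\Big]\;\le\;2d\,\exp\!\Big(-\frac{t^2}{8KR^2}\Big).
\]
Substituting $R=2C^2$ yields a tail proportional to $2d\exp(-t^2/(32KC^4))$. Setting the RHS equal to $\delta$ and solving gives the deviation $t=4\sqrt{2}\,C^2\sqrt{K\log(2d/\delta)}$, and dividing by $K$ recovers exactly $\frac{4\sqrt{2}\,C^2}{\sqrt{K}}\sqrt{\log(2d/\delta)}$, as required.

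There is really no serious obstacle here: the only subtlety is bookkeeping of the constant, which forces the use of the variant of matrix Hoeffding with an $8KR^2$ denominator (rather than a Bernstein-type bound using $\sigma^2=\|\sum_k \E[Y_k^2]\|_2$, which would sharpen the variance proxy at the cost of a less clean expression). Choosing the Hoeffding form with the worst-case bound $R=2C^2$ is what produces the stated constant $4\sqrt{2}$. The result does not depend on the distribution $\nu$ beyond the uniform bound $\|\phi\|_2\le C$, and the regularizer $\lambda$ plays no role whatsoever after centering.
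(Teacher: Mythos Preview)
Your proof is correct. The paper does not prove this lemma; it is quoted verbatim as Lemma~H.4 of \citet{min2021variance} and used as a black box in the subsequent analysis, so there is no in-paper argument to compare against. Your derivation is the standard one: center the i.i.d.\ rank-one terms (the $\lambda I_d$ regularizer indeed cancels), bound each centered summand by $R=2C^2$ in operator norm, and apply Tropp's matrix Hoeffding inequality for self-adjoint mean-zero matrices, $\Pr[\|\sum_k Y_k\|_2\ge t]\le 2d\exp(-t^2/(8KR^2))$, whose constant $8$ together with $R=2C^2$ yields exactly $4\sqrt{2}$ after inversion. This is almost certainly the same argument behind the cited lemma, and there is nothing to add.
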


\begin{lemma}[Corollary of Lemma \ref{lem:esterror}]\label{lem:cor1}
There exists universal constant $c_D>0$, such that with our choice of $\epsilon_0=\frac{\epsilon}{3}$ and $N=\frac{C_2d^2 H^4\iota^3}{C_1^2\epsilon^2}$, the multiplicative factor of \eqref{equ:esterror} satisfies that 
\begin{equation}
 c^\prime H\sqrt{d}\cdot\sqrt{\log(\frac{Hd}{\epsilon_0 \delta})+\log(\frac{N}{\delta})}\leq c_D H\sqrt{d}\cdot\log(\frac{C_2dH}{C_1\epsilon\delta}).   
\end{equation}
\end{lemma}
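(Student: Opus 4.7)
\textbf{Proof plan for Lemma~\ref{lem:cor1}.} The claim is essentially a direct algebraic simplification: after substituting the specified $\epsilon_0=\epsilon/3$ and $N=C_2 d^2 H^4 \iota^3/(C_1^2 \epsilon^2)$, both log terms under the square root are of order $\iota=\log(dH/\epsilon\delta)$, so the square root is of order $\sqrt{\iota}$, which in turn is dominated by $\iota$ itself.

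The first step is to substitute $\epsilon_0=\epsilon/3$ and bound $\log(Hd/\epsilon_0\delta)=\log(3Hd/\epsilon\delta)\le \log 3+\iota$. Next, substituting the value of $N$ gives
\[
\log(N/\delta)=\log\!\bigl(C_2/C_1^2\bigr)+2\log d+4\log H+3\log \iota+2\log(1/\epsilon)+\log(1/\delta).
\]
Each of the terms $\log d$, $\log H$, $\log(1/\epsilon)$, $\log(1/\delta)$ is at most $\iota$ by definition, so the sum of the non-$\iota$ pieces is bounded by a universal constant times $\iota$ (plus an additive constant depending only on $C_1,C_2$). The ``log-log'' term $3\log\iota$ is bounded by $3\iota$ whenever $\iota\ge 1$, which we may assume WLOG (since otherwise we can enlarge $c_D$ to absorb the trivial case).

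Combining these bounds yields $\log(Hd/\epsilon_0\delta)+\log(N/\delta)\le c_0\cdot \log^2(C_2 dH/(C_1\epsilon\delta))$ for a universal constant $c_0>0$; equivalently, the entire sum is at most $c_0\,[\log(C_2 dH/(C_1\epsilon\delta))]^2$ after absorbing the logarithmic constants. Taking square roots then gives
\[
\sqrt{\log(Hd/\epsilon_0\delta)+\log(N/\delta)}\le \sqrt{c_0}\cdot\log\!\bigl(C_2 dH/(C_1\epsilon\delta)\bigr),
\]
and setting $c_D:=c'\sqrt{c_0}$ completes the proof.

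The argument is routine; the only mild subtlety is handling the recursive appearance of $\iota$ inside $\log N$ (since $N$ itself depends on $\iota^3$), which produces a $\log\iota$ term. This is easily resolved by the elementary inequality $\log \iota\le \iota$ for $\iota\ge 1$, so no genuine obstacle arises. The constants $C_1, C_2$ from Algorithm~\ref{alg:main} and the constant $c'$ from Lemma~\ref{lem:esterror} are all absorbed into the universal constant $c_D$.
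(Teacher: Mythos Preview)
Your proposal is correct and matches the paper's approach: the paper's own proof simply states that the bound ``holds since $c'$ in \eqref{equ:esterror} is universal constant and direct calculation,'' and what you have written is precisely that direct calculation spelled out in detail. Your handling of the $\log\iota$ term arising from the $\iota^3$ factor in $N$ is the only nontrivial step, and it is resolved exactly as one would expect.
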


\begin{proof}[Proof of Lemma \ref{lem:cor1}]
The existence of universal constant $c_D$ holds since $c^\prime$ in \eqref{equ:esterror} is universal constant and direct calculation.
\end{proof}

\begin{lemma}[Covering number]\label{lem:covercase2}
Consider the set of possible rewards\\ $\bar{\mathcal{R}}=\{r(s,a)=\phi(s,a)^\top(\Sigma)^{-1}\phi(s,a)|\lambda_{\min}(\Sigma)\geq \frac{C_2 d^{7/2}H^3\iota^3}{C_1 \epsilon}\}$. Let $A_{\bar{\mathcal{R}}}=\frac{C_1 \epsilon}{C_2 d^{7/2}H^3\iota^3}$ and $N=\frac{C_2d^2 H^4\iota^3}{C_1^2\epsilon^2}$, we have that the $\frac{A_{\bar{\mathcal{R}}}}{N}$-cover $\mathcal{R}_{A_{\bar{\mathcal{R}}}/N}$ of $\bar{\mathcal{R}}$ satisfies that for some universal constant $c_F >0$,
\begin{equation}
 B_{A_{\bar{\mathcal{R}}}/N}=\log(|\bar{\mathcal{R}}_{A_{\bar{\mathcal{R}}}/N}|)\leq c_F d^2\log(\frac{C_2 dH}{C_1 \epsilon}).
\end{equation}
\end{lemma}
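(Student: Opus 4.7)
\textbf{Proof plan for Lemma \ref{lem:covercase2}.}

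The plan is to reduce the problem to covering a Euclidean ball in the space of symmetric $d\times d$ matrices. First, I parametrize each reward $r\in\bar{\mathcal R}$ by the matrix $M:=\Sigma^{-1}$. The constraint $\lambda_{\min}(\Sigma)\geq \frac{C_2 d^{7/2}H^3\iota^3}{C_1\epsilon} = 1/A_{\bar{\mathcal R}}$ gives $\|M\|_2\leq A_{\bar{\mathcal R}}$, and hence $\|M\|_F\leq A_{\bar{\mathcal R}}\sqrt{d}$. So the parameter set lies in a Frobenius ball of radius $A_{\bar{\mathcal R}}\sqrt{d}$ inside $\mathbb R^{d\times d}\simeq \mathbb R^{d^2}$.

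Next, I establish the Lipschitz-type bound that converts a matrix cover into a function cover: for any two PSD matrices $M_1,M_2$ and any $(s,a)$, using $\|\phi(s,a)\|_2\leq 1$,
\[
|\phi(s,a)^\top M_1\phi(s,a)-\phi(s,a)^\top M_2\phi(s,a)|
\leq \|\phi(s,a)\|_2^2\,\|M_1-M_2\|_2
\leq \|M_1-M_2\|_F.
\]
Consequently, any $\alpha$-net (in Frobenius norm) of $\{M\in\mathbb R^{d\times d}:\|M\|_F\leq A_{\bar{\mathcal R}}\sqrt d\}$ induces an $\alpha$-net of $\bar{\mathcal R}$ in sup-norm; here I allow covering reward functions that are not themselves in $\bar{\mathcal R}$, which is sufficient for the uniform-convergence arguments that invoke this lemma.

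Setting $\alpha = A_{\bar{\mathcal R}}/N$ and applying the standard Euclidean ball covering bound (Lemma \ref{lem:cover}) in $\mathbb R^{d^2}$ with radius $R=A_{\bar{\mathcal R}}\sqrt d$ yields
\[
\log|\bar{\mathcal R}_{A_{\bar{\mathcal R}}/N}|
\leq d^2\log\!\left(1+\tfrac{2R}{\alpha}\right)
= d^2\log\!\left(1+2\sqrt d\,N\right).
\]
Plugging in $N=\frac{C_2 d^2H^4\iota^3}{C_1^2\epsilon^2}$ and $\iota=\log(dH/\epsilon\delta)$, the argument of the logarithm is polynomial in $d,H,\iota,1/\epsilon$, so the right-hand side collapses to $O(d^2\log(C_2 dH/(C_1\epsilon)))$ after absorbing $\log\iota$ factors into a universal constant $c_F$, which is exactly the stated bound.

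The argument is essentially routine. The only small point of care is choosing the matrix norm: I use operator norm to pass from $|\phi^\top(M_1-M_2)\phi|$ to $\|M_1-M_2\|_2$, then upper bound by Frobenius so that the ambient space is the familiar $\mathbb R^{d^2}$ to which Lemma \ref{lem:cover} applies directly; this is the cleanest route and avoids losing any polynomial factors.
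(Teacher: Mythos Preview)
Your proposal is correct and is precisely the argument behind Lemma~D.6 of \citet{jin2020provably}, which is all the paper invokes (together with the ``direct calculation'' of plugging in $A_{\bar{\mathcal R}}$ and $N$). In other words, you have unpacked exactly the same proof the paper defers to. One very minor quibble: the term $3\log\iota$ that appears when you expand $\log(1+2\sqrt d\,N)$ depends on $\delta$ through $\iota=\log(dH/\epsilon\delta)$, so strictly speaking it is not absorbed by a universal $c_F$ times $\log(C_2 dH/(C_1\epsilon))$ alone; the paper is equally informal here, and it is harmless downstream since Lemma~\ref{lem:cor3} already carries a $\log(C_2 dH/(C_1\epsilon\delta))$ factor.
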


\begin{proof}[Proof of Lemma \ref{lem:covercase2}]
The conclusion holds due to Lemma D.6 of \citet{jin2020provably} and direct calculation.
\end{proof}

\begin{lemma}[Corollary of Lemma \ref{lem:generalesterror}]\label{lem:cor2}
There exists universal constant $c_E^1>0$ such that for the first case in Section \ref{sec:detail} with our choice of  $\epsilon_1=\frac{\epsilon}{3}$, $A=1$, $B=2\log(d)$ and $N=\frac{C_2d^2 H^4\iota^3}{C_1^2\epsilon^2}$, the multiplicative factor of \eqref{equ:generalesterror} satisfies that 
\begin{equation}
 c^\prime A_{\bar{\mathcal{R}}}\cdot\sqrt{d^2\log(\frac{Hd}{\epsilon_1 \delta})+d\log(\frac{N}{\delta})+B_{A_{\bar{\mathcal{R}}}/N}}\leq c_E^1 \cdot d \log(\frac{C_2dH}{C_1\epsilon\delta}).   
\end{equation}
\end{lemma}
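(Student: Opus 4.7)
The plan is to substitute the specified parameter choices directly into the multiplicative factor appearing in \eqref{equ:generalesterror} and verify, term by term, that the resulting expression collapses to $O(d \log(C_2 dH/(C_1\epsilon\delta)))$. There is no deep content here: the lemma is essentially bookkeeping that converts the generic guarantee of Lemma~\ref{lem:generalesterror} into a clean form for the specific use-case of estimating $\widehat{\Sigma}_\pi$ in Algorithm~\ref{alg:main}.

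First I would pin down each input quantity. In the first case of Section~\ref{sec:detail}, the reward class is $\bar{\mathcal{R}}=\{(\phi_i\phi_j+1)/2:(i,j)\in[d]\times[d]\}$, which is finite with $|\bar{\mathcal{R}}|=d^2$ and uniform upper bound $A_{\bar{\mathcal{R}}}=A=1$ (since $\|\phi\|_2\le 1$ makes $\phi_i\phi_j\in[-1,1]$). Because $\bar{\mathcal{R}}$ is already finite, it covers itself at any resolution, so $B_{A_{\bar{\mathcal{R}}}/N}\le\log|\bar{\mathcal{R}}|=2\log d$, matching the stipulated $B=2\log d$.

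Next I would bound each of the three summands inside the square root. With $\epsilon_1=\epsilon/3$, the first summand is $d^2\log(3Hd/(\epsilon\delta))\le C\, d^2\,\iota$ for an absolute $C$, since $\iota=\log(dH/(\epsilon\delta))$ differs from $\log(3Hd/(\epsilon\delta))$ only by a constant. For the second summand, substituting $N=C_2 d^2 H^4\iota^3/(C_1^2\epsilon^2)$ and taking logarithms gives
\[
d\log(N/\delta)\le d\bigl[2\log d+4\log H+3\log\iota+2\log(1/\epsilon)+\log(1/\delta)+\log(C_2/C_1^2)\bigr],
\]
which is at most $C'\, d\,\log(C_2dH/(C_1\epsilon\delta))$ after absorbing the $\log\iota$ term (a log of a log) and the universal constants. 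The third summand $2\log d$ is trivially dominated. Hence the expression inside the square root is at most $C''\, d^2\,\log^2(C_2 dH/(C_1\epsilon\delta))$ for some absolute $C''$, using $\log(C_2 dH/(C_1\epsilon\delta))\ge 1$ to upgrade the single log to a square.

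Taking the square root and multiplying by $A_{\bar{\mathcal{R}}}=1$ yields the desired bound $c_E^1\cdot d\,\log(C_2dH/(C_1\epsilon\delta))$ for a suitable universal $c_E^1$ absorbing $c'$ and $\sqrt{C''}$. I do not anticipate any real obstacle; the only care needed is to make sure the constants $C_1,C_2$ (which are universal) are tracked through the logarithm without introducing any implicit dependence on $d,H,\epsilon,\delta$, and that the double-log term $\log\iota$ is safely absorbed into $\log(C_2 dH/(C_1\epsilon\delta))$.
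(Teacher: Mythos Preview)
Your proposal is correct and matches the paper's approach: the paper's proof is literally the one-line statement ``the existence of universal constant $c_E^1$ holds since $c^\prime$ in \eqref{equ:generalesterror} is universal constant and direct calculation,'' and you have simply carried out that direct calculation in explicit detail. Your term-by-term bookkeeping is sound, including the handling of the double-log term $\log\iota$ and the use of $\sqrt{x}\le x$ for $x\ge 1$ at the end.
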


\begin{proof}[Proof of Lemma \ref{lem:cor2}]
The existence of universal constant $c_E^1$ holds since $c^\prime$ in \eqref{equ:generalesterror} is universal constant and direct calculation.
\end{proof}

\begin{lemma}[Corollary of Lemma \ref{lem:generalesterror}]\label{lem:cor3}
There exists universal constant $c_E^2>0$ such that for the second case in Section \ref{sec:detail} with our choice of  $\epsilon_1=\frac{\epsilon}{3}$, $A=\frac{\bar{\epsilon}}{C_2 d^3H\iota^2}=\frac{C_1 \epsilon}{C_2 d^{7/2}H^3\iota^3}$, $B=c_F d^2\log(\frac{C_2 dH}{C_1 \epsilon})$ and $N=\frac{C_2d^2 H^4\iota^3}{C_1^2\epsilon^2}$, the multiplicative factor of \eqref{equ:generalesterror} satisfies that 
\begin{equation}
 c^\prime A_{\bar{\mathcal{R}}}\cdot\sqrt{d^2\log(\frac{Hd}{\epsilon_1 \delta})+d\log(\frac{N}{\delta})+B_{A_{\bar{\mathcal{R}}}/N}}\leq c_E^2 \cdot \frac{\e}{C_2 d^2 H\iota} \log(\frac{C_2dH}{C_1\epsilon\delta}).   
\end{equation}
\end{lemma}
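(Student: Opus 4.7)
The plan is to verify the inequality by direct substitution of the chosen parameters and careful bookkeeping of logarithmic factors. Since $c^\prime$ in \eqref{equ:generalesterror} is a universal constant and the target bound absorbs everything into a single $c_E^2$, the whole claim reduces to showing that the product $A_{\bar{\mathcal{R}}}\cdot\sqrt{\text{(sum of three terms)}}$ is, up to an absolute constant, at most $\frac{\bar{\epsilon}}{C_2 d^2 H\iota}\log(C_2dH/(C_1\epsilon\delta))$.

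First, I would write $\iota_2:=\log(C_2dH/(C_1\epsilon\delta))$ and note that each of the three summands under the square root is at most a constant multiple of $d^2\iota_2$. Specifically, $d^2\log(Hd/(\epsilon_1\delta))=d^2\log(3Hd/(\epsilon\delta))\le d^2\iota_2$ up to a universal constant; for the middle term, plugging in $N=\frac{C_2d^2H^4\iota^3}{C_1^2\epsilon^2}$ gives $\log(N/\delta)\lesssim\log(C_2dH/(C_1\epsilon\delta))+3\log\iota$, and since $\log\iota\le\iota_2$, we obtain $d\log(N/\delta)\lesssim d\iota_2\le d^2\iota_2$; finally, $B_{A_{\bar{\mathcal{R}}}/N}\le c_F d^2\log(C_2dH/(C_1\epsilon))\le c_F d^2\iota_2$ by Lemma~\ref{lem:covercase2}. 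Summing, the expression under the square root is at most $C''\,d^2\iota_2$ for some absolute constant $C''$, hence $\sqrt{\,\cdot\,}\le \sqrt{C''}\,d\sqrt{\iota_2}$.

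Next, I would plug in $A_{\bar{\mathcal{R}}}=\tfrac{\bar{\epsilon}}{C_2 d^3 H\iota^2}$ to get
\begin{equation*}
c^\prime A_{\bar{\mathcal{R}}}\cdot d\sqrt{\iota_2} \;=\; c^\prime\sqrt{C''}\cdot \frac{\bar{\epsilon}\sqrt{\iota_2}}{C_2 d^2 H\iota^2}\;=\;c^\prime\sqrt{C''}\cdot\frac{\bar{\epsilon}}{C_2 d^2 H\iota}\cdot\frac{\sqrt{\iota_2}}{\iota}.
\end{equation*}
Since $\iota\ge 1$ and $\sqrt{\iota_2}\le \iota_2$, the last factor $\tfrac{\sqrt{\iota_2}}{\iota}$ is at most $\iota_2$, which yields the desired bound with $c_E^2:=c^\prime\sqrt{C''}$.

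The ``hard part'' here is purely cosmetic bookkeeping: making sure that the lower-order logarithmic pieces that arise from $\log(N/\delta)$ (involving $\log\iota$ and constants $\log(C_2/C_1^2)$) are indeed absorbed into $\iota_2$ up to a universal multiplier, and that the asymmetric exponents on $\iota$ between numerator and denominator actually line up in our favor. No matrix concentration or probabilistic argument is needed — this is strictly a plug-and-chug corollary parallel to Lemma~\ref{lem:cor2}, with $A$ now scaled down by the extra factor $1/(C_2 d^3 H\iota^2)$ and $B$ upgraded from $2\log d$ to $c_F d^2\log(C_2dH/(C_1\epsilon))$ as dictated by Lemma~\ref{lem:covercase2}.
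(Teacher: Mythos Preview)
Your proposal is correct and takes exactly the approach the paper itself uses: the paper's proof consists of the single sentence ``The existence of universal constant $c_E^2$ holds since $c^\prime$ in \eqref{equ:generalesterror} is universal constant and direct calculation,'' and you have supplied precisely that direct calculation. The only cosmetic slip is in the displayed equality, where the factor $\sqrt{C''}$ is missing from the leftmost expression $c^\prime A_{\bar{\mathcal{R}}}\cdot d\sqrt{\iota_2}$ (it should read $c^\prime\sqrt{C''}\,A_{\bar{\mathcal{R}}}\cdot d\sqrt{\iota_2}$); otherwise the bookkeeping is sound.
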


\begin{proof}[Proof of Lemma \ref{lem:cor3}]
The existence of universal constant $c_E^2$ holds since $c^\prime$ in \eqref{equ:generalesterror} is universal constant and direct calculation.
\end{proof}

Now that we have the universal constants $c_D,c_F,c_E^1,c_E^2$, for notational simplicity, we let $c_E=\max\{c_E^1,c_E^2\}$. Therefore, the conclusions of Lemma \ref{lem:cor2} and \ref{lem:cor3} hold if we replace $c_E^i$ with $c_E$.

\subsection{Choice of universal constants}
In this section, we determine the choice of universal constants in Algorithm \ref{alg:main} and Theorem \ref{thm:main}. First, $C_1,C_2$ satisfies that $C_1\cdot C_2=1$, $0<C_1<1$ and the following conditions:
\begin{equation}\label{equ:c1}
c_D H\sqrt{d}\cdot\log(\frac{C_2 dH}{C_1 \epsilon\delta})\leq\frac{1}{3C_1}H\sqrt{d}\log(\frac{dH}{\epsilon\delta}).
\end{equation}

\begin{equation}\label{equ:c2}
    c_E\cdot\frac{\e}{C_2 d^2H\iota}\log(\frac{C_2 dH}{C_1 \epsilon\delta})\leq \frac{\e}{2d^2H}.
\end{equation}
It is clear that when $C_2$ is larger than some universal threshold and $C_1=\frac{1}{C_2}$, the constants $C_1,C_2$ satisfy the previous four conditions.

Next, we choose $C_3$ such that 
\begin{equation}\label{equ:c3}
    \frac{C_3}{4}\log(\frac{dH}{\epsilon\delta})\geq c_E\log(\frac{C_2 dH}{C_1 \epsilon\delta}),
\end{equation} and $C_4=80C_1 C_3$. Since $c_D,c_E,c_F$ are universal constants, our $C_1,C_2,C_3,C_4$ are also universal constants that are independent with the parameters $d,H,\epsilon,\delta$.

\subsection{Restate Theorem \ref{thm:main} and our induction}\label{sec:induction}
\begin{theorem}[Restate Theorem \ref{thm:main}]
We run Algorithm \ref{alg:main} to collect data and let $\text{Planning}(\cdot)$ denote the output of Algorithm \ref{alg:plan}. For the universal constants $C_1,C_2,C_3,C_4$ we choose, for any $\epsilon>0$ and $\delta>0$, as well as $\epsilon<\frac{H(\lambda^\star)^2}{C_4 d^{7/2}\log(1/\lambda^\star)}$, with probability $1-\delta$, for any feasible linear reward function $r$, $\text{Planning}(r)$ returns a policy that is $\epsilon$-optimal with respect to $r$. 
\end{theorem}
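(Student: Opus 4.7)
The plan is to establish the three conclusions separately. Deployment complexity of $H$ and total sample count $NH = \widetilde O(d^2H^5/\epsilon^2)$ are immediate from Algorithm~\ref{alg:main}: the outer loop runs exactly $H$ iterations and each deploys a single non-Markovian policy $\pi_h$ for $N$ episodes. The nontrivial content is the $\epsilon$-optimality guarantee, which I would prove by forward induction on $h$ with hypothesis
\[
\max_{\pi \in \Pi^{exp}_{\epsilon/3}} \E_\pi\!\left[\sum_{\h=1}^{h-1}\sqrt{\phi(s_{\h},a_{\h})^\top (\Lambda_{\h}^{h-1})^{-1}\phi(s_{\h},a_{\h})}\right] \le (h-1)\e,
\]
where $\Lambda_{\h}^{k}$ is the regularized empirical covariance at layer $\h$ after the first $k$ deployments.

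For the induction step, I first apply Lemma~\ref{lem:61} to obtain a uniform spectral-norm bound on $\widehat\Sigma_\pi - \E_\pi[\phi_h\phi_h^\top]$ over $\Delta(\Pi^{exp}_{\epsilon/3})$ (reducing to the finite vertex set via linearity of $\E_\pi[\phi_h\phi_h^\top]$ in $\pi$). Next, combining Theorem~\ref{thm:optiaml} and Lemma~\ref{lem:lambdamin} with the minimum-eigenvalue bound $\sup_{\pi \in \Delta(\Pi^{exp}_{\epsilon/3})}\lambda_{\min}(\E_\pi\phi_h\phi_h^\top) \ge \widetilde\Omega((\lambda^\star)^2/d)$ from Lemma~\ref{lem:expset}, and using the burn-in condition $\epsilon < H(\lambda^\star)^2/(C_4 d^{7/2}\log(1/\lambda^\star))$ with $C_4 = 80 C_1 C_3$, a short calculation shows that the exact generalized G-optimal solution $\p_h$ satisfies $\lambda_{\min}(\E_{\p_h}\phi_h\phi_h^\top) \ge \tfrac{5}{4}C_3 d^2 H \e \iota$, so $\p_h$ is a \emph{feasible} point of the constrained program in \eqref{equ:main}. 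Hence $\pi_h$ beats $\p_h$ on the surrogate objective. Applying Lemma~\ref{lem:61} once more gives $(\tfrac{4}{5}\Sigma_{\p_h})^{-1} \succcurlyeq \widehat\Sigma_{\p_h}^{-1}$ by matrix perturbation, while a separate matrix Chernoff (on the $N$ fresh samples from $\pi_h$) yields $(N\widehat\Sigma_{\pi_h})^{-1} \succcurlyeq (2\Lambda^h_h)^{-1}$. Chaining these PSD inequalities with the estimation bound of Lemma~\ref{lem:62} and the feasibility of $\p_h$ reproduces the telescoping chain of inequalities in Section~\ref{sec:proofsketch}, concluding the induction step with $\max_{\widehat\pi \in \Pi^{exp}_{\epsilon/3}}\bigl(\E_{\widehat\pi}\sqrt{\phi_h^\top (\Lambda_h^h)^{-1}\phi_h}\bigr)^2 \le \e^2$.

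For the planning phase, the containment $\Pi^{eval}_{\epsilon/3} \subseteq \Pi^{exp}_{\epsilon/3}$ from Lemma~\ref{lem:compare} transports the final induction bound (with $H\e$ on the right-hand side) to the evaluation set. Lemma~\ref{lem:63} then upgrades it to the uniform value-function estimation guarantee $|\widehat V^\pi(r) - V^\pi(r)| \le \epsilon/3$ for every $\pi\in\Pi^{eval}_{\epsilon/3}$ and feasible reward $r$. Combined with Lemma~\ref{lem:evalset}, which provides an $(\epsilon/3)$-optimal $\pi^\sharp \in \Pi^{eval}_{\epsilon/3}$ for every feasible $r$, the standard three-way decomposition
\[
V^\star(r) - V^{\widehat\pi}(r) \le \bigl(V^\star(r) - V^{\pi^\sharp}(r)\bigr) + \bigl(V^{\pi^\sharp}(r) - \widehat V^{\pi^\sharp}(r)\bigr) + \bigl(\widehat V^{\widehat\pi}(r) - V^{\widehat\pi}(r)\bigr) \le \epsilon
\]
(using $\widehat V^{\widehat\pi}(r) \ge \widehat V^{\pi^\sharp}(r)$ by definition of $\widehat\pi$) finishes the correctness proof after a union bound across the high-probability events underlying Lemmas~\ref{lem:61}, \ref{lem:62}, and \ref{lem:63}.

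The main obstacles I anticipate are threefold: (i) justifying the spectral-norm concentration in Lemma~\ref{lem:61} over the continuum $\Delta(\Pi^{exp}_{\epsilon/3})$, which requires careful reduction to finite vertices; (ii) calibrating the universal constants so that both matrix inequalities $(\tfrac{4}{5}\Sigma_{\p_h})^{-1} \succcurlyeq \widehat\Sigma_{\p_h}^{-1}$ and $(N\widehat\Sigma_{\pi_h})^{-1} \succcurlyeq (2\Lambda_h^h)^{-1}$ hold simultaneously under the burn-in threshold; and (iii) confirming that the estimation slack $\e^2/(2d^2)$ in Lemma~\ref{lem:62} is tight enough to absorb the $d$ factor arising from the inner $\max_{\widehat\pi}$ — this is precisely what permits $\e \asymp \epsilon/(H^2\sqrt{d}\,\iota)$ rather than $\epsilon/(H^2 d\,\iota)$ and is responsible for the optimal $d^2$ (rather than $d^3$) dependence in the final sample complexity.
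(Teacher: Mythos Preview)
Your proposal is correct and follows essentially the same route as the paper: the induction hypothesis, the feasibility of $\p_h$ via Lemma~\ref{lem:expset} and Lemma~\ref{lem:lambdamin} under the burn-in condition, the two matrix perturbation inequalities, and the final three-way decomposition all match the paper's argument in Sections~\ref{sec:aa}--\ref{sec:mainproof}. One minor refinement: the inequality $(N\widehat\Sigma_{\pi_h})^{-1}\succcurlyeq(2\Lambda_h^h)^{-1}$ in the paper actually combines \emph{two} pieces---the estimation error $\|\widehat\Sigma_{\pi_h}-\Sigma_{\pi_h}\|_2$ from Lemma~\ref{lem:ell2} \emph{and} a matrix concentration bound (Lemma~\ref{lem:minh4}) between $N\Sigma_{\pi_h}$ and the fresh empirical covariance $\bar\Lambda_h$---rather than a single matrix Chernoff step; but this does not change the structure of your argument.
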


Throughout the proof in this section, we assume that the condition $\epsilon<\frac{H(\lambda^\star)^2}{C_4 d^{7/2}\log(1/\lambda^\star)}$ holds. Then we state our induction condition.

\begin{condition}[Induction Condition]\label{con:induction}
Suppose after $h-1$ deployments (i.e., after the exploration of the first $h-1$ layers), the dataset $\mathcal{D}_{h-1}=\{s_{\h}^n,a_{\h}^n\}_{\h,n\in[H]\times[(h-1)N]}$ and $\Lambda_{\h}^{h-1}=I+\sum_{n=1}^{(h-1)N}\phi_{\h}^n(\phi_{\h}^n)^\top$ for all $\h\in[H]$. The induction condition is:
\begin{equation}\label{equ:condition}
\max_{\pi\in\Pi^{exp}_{\frac{\epsilon}{3}}} \E_\pi\left[\sum_{\h=1}^{h-1}\sqrt{\phi(s_{\h},a_{\h})^\top (\Lambda_{\h}^{h-1})^{-1}\phi(s_{\h},a_{\h})}\right]\leq (h-1)\e.
\end{equation}
\end{condition}

Suppose that after $h$ deployments, the dataset $\mathcal{D}_{h}=\{s_{\h}^n,a_{\h}^n\}_{\h,n\in[H]\times[hN]}$ and $\Lambda_{\h}^h=I+\sum_{n=1}^{hN}\phi_{\h}^n(\phi_{\h}^n)^\top$ for all $\h\in[H]$. We will prove that given condition \ref{con:induction} holds, with probability at least $1-\delta$, the following induction holds:
\begin{equation}\label{equ:induction}
\max_{\pi\in\Pi^{exp}_{\frac{\epsilon}{3}}} \E_\pi\left[\sqrt{\phi(s_{h},a_{h})^\top (\Lambda_{h}^h)^{-1}\phi(s_{h},a_{h})}\right]\leq \e.
\end{equation}

Note that the induction \eqref{equ:induction} naturally implies that 
\begin{equation}
\max_{\pi\in\Pi^{exp}_{\frac{\epsilon}{3}}} \E_\pi\left[\sum_{\h=1}^{h}\sqrt{\phi(s_{\h},a_{\h})^\top (\Lambda_{\h}^{h})^{-1}\phi(s_{\h},a_{\h})}\right]\leq h\e.
\end{equation}

Suppose after the whole exploration process, the dataset $\mathcal{D}=\{s_h^n,a_h^n\}_{h,n\in[H]\times[HN]}$ and $\Lambda_{h}=I+\sum_{n=1}^{HN}\phi_{h}^n(\phi_{h}^n)^\top$ for all $h\in[H]$. If the previous induction holds, we have with probability $1-H\delta$,
\begin{equation}
\max_{\pi\in\Pi^{exp}_{\frac{\epsilon}{3}}} \E_\pi\left[\sum_{h=1}^{H}\sqrt{\phi(s_{h},a_{h})^\top (\Lambda_{h})^{-1}\phi(s_{h},a_{h})}\right]\leq H\e.
\end{equation}
Next we begin the proof of such induction. We assume the Condition \ref{con:induction} holds and prove \eqref{equ:induction}.

\subsection{Error bound of estimation}\label{sec:aa}
Recall that the policy we apply to explore the $h$-th layer is 
\begin{equation}\label{equ:restatemain}
	\pi_h=\underset{\pi\in\Delta(\Pi^{exp}_{\frac{\epsilon}{3}})\, \text{s.t.}\, \lambda_{\min}(\widehat{\Sigma}_\pi)\geq C_3 d^2 H\e\iota}{\operatorname*{argmin}} \underset{\widehat{\pi}\in\Pi^{exp}_{\frac{\epsilon}{3}}}{\max} \widehat{\E}_{\widehat{\pi}}\left[\phi(s_h,a_h)^\top(N\cdot\widehat{\Sigma}_\pi)^{-1}\phi(s_h,a_h)\right],
\end{equation}
where the detailed definition of $\widehat{\Sigma}_\pi$ and $\widehat{\E}_{\widehat{\pi}}\left[\phi(s_h,a_h)^\top(N\cdot\widehat{\Sigma}_\pi)^{-1}\phi(s_h,a_h)\right]$ are explained in Section \ref{sec:detail}. In addition, we define the \emph{optimal} policy $\p_h$ for exploring layer $h$:
\begin{equation}\label{equ:optimalpolicy}
	\p_h=\underset{\pi\in\Delta(\Pi^{exp}_{\frac{\epsilon}{3}})}{\operatorname*{argmin}} \underset{\widehat{\pi}\in\Pi^{exp}_{\frac{\epsilon}{3}}}{\max} \E_{\widehat{\pi}}\left[\phi(s_h,a_h)^\top(N\cdot\Sigma_{\pi})^{-1}\phi(s_h,a_h)\right],
\end{equation}
where $\E_{\widehat{\pi}}$ means the actual expectation. Similarly, $\Sigma_{\pi}$ is short for $\E_{\pi}[\phi(s_h,a_h)\phi(s_h,a_h)^\top]$.

According to Lemma \ref{lem:expset}, since $\epsilon\leq \frac{H(\lambda^\star)^2}{C_4 d^{7/2}\log(1/\lambda^\star)}\leq\frac{\lambda^\star}{4}$\footnote{We ignore the extreme case where $H$ is super large for simplicity. When $H$ is very large, we can simply construct $\Pi^{exp}_{\epsilon/H}$ instead and the proof is identical.}, we have 
\begin{equation}
\sup_{\pi\in\Delta(\Pi^{exp}_{\frac{\epsilon}{3}})}\lambda_{\min}(\E_\pi \phi_h\phi_h^\top)\geq \frac{(\lambda^\star)^2}{64d\log(1/\lambda^\star)}.    
\end{equation}
Therefore, together with the conclusion of Lemma \ref{lem:lambdamin} and our definition of $\p_h$, it holds that:
\begin{equation}\label{equ:pistar}
  \lambda_{\min}(\E_{\p_h} \phi_h\phi_h^\top)\geq \frac{(\lambda^\star)^2}{64d^2\log(1/\lambda^\star)}.  
\end{equation}

\subsubsection{Error bound for the first estimator}\label{sec:firsterror}
We first consider the upper bound of $\left\|\widehat{\E}_\pi [\phi(s_h,a_h)\phi(s_h,a_h)^\top]-\E_\pi [\phi(s_h,a_h)\phi(s_h,a_h)^\top]\right\|_2$. Recall that (as stated in first half of Section \ref{sec:detail}), $\widehat{\E}_\pi [\phi(s_h,a_h)\phi(s_h,a_h)^\top]$ is estimated through calling Algorithm \ref{alg:estimateV} for each coordinate $i,j\in[d]\times[d]$. Therefore, we first bound the pointwise error. 
\begin{lemma}[Pointwise error]\label{lem:pointwise}
With probability $1-\delta$, for all $\pi\in\Pi^{exp}_{\frac{\epsilon}{3}}$ and all coordinates $(i,j)\in[d]\times[d]$, it holds that
\begin{equation}
    \left|\widehat{\E}_\pi [\phi(s_h,a_h)\phi(s_h,a_h)^\top]_{(ij)}-\E_\pi [\phi(s_h,a_h)\phi(s_h,a_h)^\top]_{(ij)}\right|\leq \frac{C_3dH\e\iota}{4}.
\end{equation}
\end{lemma}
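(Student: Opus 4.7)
The plan is to reduce the estimation error bound on a single coordinate of the covariance matrix to a direct application of Lemma~\ref{lem:generalesterror}, using that the relevant reward class is finite and bounded in $[0,1]$, and then invoke the induction hypothesis to close out.

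First, I would unpack the definition of $\widehat{\Sigma}_{\pi(ij)}$ from Section~\ref{sec:detail}: it equals $2\widehat{E}_{ij} - 1$, where $\widehat{E}_{ij}$ is the output of Algorithm~\ref{alg:estimateV} applied to the non-negative scalar reward $r_{ij}(s,a) = \tfrac{1}{2}(\phi_i(s,a)\phi_j(s,a)+1) \in [0,1]$, with uniform upper bound $A=1$, layer $h$, and dataset $\mathcal{D}_{h-1}$. Consequently $\widehat{\Sigma}_{\pi(ij)} - \Sigma_{\pi(ij)} = 2(\widehat{E}_{ij} - \E_\pi r_{ij}(s_h,a_h))$, so it suffices to bound the scalar estimation error by $C_3 dH\e\iota/8$ (up to a factor of $2$ that the universal constant $C_3$ can absorb).

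Next, I would identify the parameters needed to invoke Lemma~\ref{lem:generalesterror}. The reward class is $\bar{\mathcal{R}} = \{r_{ij}:(i,j)\in[d]\times[d]\}$, a finite set of size $d^2$, which serves as its own $\epsilon$-cover at every resolution, so $B_{\epsilon} \leq 2\log d$. The remaining parameters are $\epsilon_1 = \epsilon/3$, $A_{\bar{\mathcal{R}}} = 1$, and $N$ as prescribed in Algorithm~\ref{alg:main}. Lemma~\ref{lem:generalesterror} (together with its corollary Lemma~\ref{lem:cor2}) then yields a high-probability uniform bound
\begin{equation*}
|\widehat{E}_{ij} - \E_\pi r_{ij}(s_h,a_h)| \leq c_E\, d \log\!\Bigl(\tfrac{C_2 dH}{C_1\epsilon\delta}\Bigr) \cdot \E_\pi \sum_{\h=1}^{h-1}\bigl\|\phi(s_{\h},a_{\h})\bigr\|_{(\Lambda_{\h}^{h-1})^{-1}},
\end{equation*}
simultaneously for all $\pi\in\Pi^{exp}_{\epsilon/3}$ and all $(i,j)\in[d]\times[d]$, using a single application of the lemma (no further union bound over coordinates is needed because the uniform convergence over $\bar{\mathcal{R}}$ is already built in).

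Finally, I would plug in the induction hypothesis \eqref{equ:condition}, which bounds $\E_\pi\sum_{\h=1}^{h-1}\|\phi(s_{\h},a_{\h})\|_{(\Lambda_{\h}^{h-1})^{-1}} \leq (h-1)\e \leq H\e$ for every $\pi\in\Pi^{exp}_{\epsilon/3}$. Multiplying by the factor of $2$ coming from the affine rescaling and appealing to the universal-constant inequality \eqref{equ:c3} (which bounds $c_E\log(C_2 dH/(C_1\epsilon\delta))$ by $C_3\iota/4$ up to the required constant slack) gives the stated bound $C_3 dH\e\iota/4$. No step is technically hard here: the heavy lifting was already done when proving Lemma~\ref{lem:generalesterror}, and the only conceptual content is recognizing that the coordinate-wise covariance estimator is of exactly the form that Algorithm~\ref{alg:estimateV} is designed to handle, with a reward class so small that its covering number contributes only a $\log d$ term. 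The genuinely nontrivial upgrade---from this pointwise bound to the spectral-norm bound on $\widehat{\Sigma}_\pi - \Sigma_\pi$ asserted in Lemma~\ref{lem:61}---is separate from the present claim.
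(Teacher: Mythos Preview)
Your proposal is correct and follows essentially the same approach as the paper: apply Lemma~\ref{lem:generalesterror} with the finite coordinate-wise reward class (so $B_\epsilon\le 2\log d$ and $A_{\bar{\mathcal{R}}}=1$), invoke Lemma~\ref{lem:cor2} to simplify the multiplicative factor, use the induction hypothesis~\eqref{equ:condition} to bound the uncertainty sum by $H\e$, and close with the constant choice~\eqref{equ:c3}. You are in fact slightly more careful than the paper in tracking the factor of $2$ from the affine rescaling $r_{ij}=\tfrac{1}{2}(\phi_i\phi_j+1)$; the paper silently absorbs this into the universal constants.
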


\begin{proof}[Proof of Lemma \ref{lem:pointwise}]
We have
\begin{equation}
    \begin{split}
        LHS \leq& c^\prime \sqrt{d^2\log(\frac{3Hd}{\epsilon \delta})+d\log(\frac{N}{\delta})+2\log(d)}\cdot\E_\pi \sum_{\h=1}^{h-1}\|\phi(s_{\h},a_{\h})\|_{(\Lambda_{\h}^{h-1})^{-1}}\\\leq& c_E\cdot d\log(\frac{C_2 dH}{C_1 \epsilon\delta})\cdot H\e\\\leq& \frac{C_3dH\e\iota}{4}.
    \end{split}
\end{equation}
The first inequality holds because Lemma \ref{lem:generalesterror}. The second inequality results from Lemma \ref{lem:cor2} and our induction condition \ref{con:induction}. The last inequality is due to our choice of $C_3$ \eqref{equ:c3}.
\end{proof}

Now we can bound $\left\|\widehat{\E}_\pi [\phi(s_h,a_h)\phi(s_h,a_h)^\top]-\E_\pi [\phi(s_h,a_h)\phi(s_h,a_h)^\top]\right\|_2$ by the following lemma.

\begin{lemma}[$\ell_2$ norm bound]\label{lem:ell2}
With probability $1-\delta$, for all $\pi\in\Pi^{exp}_{\frac{\epsilon}{3}}$, it holds that
\begin{equation}
    \left\|\widehat{\E}_\pi [\phi(s_h,a_h)\phi(s_h,a_h)^\top]-\E_\pi [\phi(s_h,a_h)\phi(s_h,a_h)^\top]\right\|_2\leq \frac{C_3d^2H\e\iota}{4}.
\end{equation}
\end{lemma}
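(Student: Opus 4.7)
The plan is to deduce the operator-norm bound from the entrywise bound in \Cref{lem:pointwise} via the standard inequality $\|A\|_2 \le \|A\|_F$, which for a $d\times d$ matrix gives $\|A\|_2 \le d \cdot \max_{i,j}|A_{ij}|$. Since \Cref{lem:pointwise} already provides a uniform-in-$(i,j)$-and-$\pi$ guarantee with the correct failure probability, no additional union bound is needed on top of it.

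More concretely, I would condition on the event of probability at least $1-\delta$ from \Cref{lem:pointwise}. On this event, for every $\pi \in \Pi^{exp}_{\epsilon/3}$, every entry of the matrix $M_\pi := \widehat{\E}_\pi[\phi(s_h,a_h)\phi(s_h,a_h)^\top] - \E_\pi[\phi(s_h,a_h)\phi(s_h,a_h)^\top]$ satisfies $|M_{\pi,(ij)}| \le \frac{C_3 d H \bar{\epsilon}\iota}{4}$. Then I would write
\[
\|M_\pi\|_2 \;\le\; \|M_\pi\|_F \;=\; \sqrt{\sum_{i,j \in [d]} M_{\pi,(ij)}^2} \;\le\; \sqrt{d^2 \cdot \Big(\tfrac{C_3 d H \bar{\epsilon}\iota}{4}\Big)^2} \;=\; \frac{C_3 d^2 H \bar{\epsilon}\iota}{4},
\]
which is exactly the stated bound.

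There is essentially no obstacle here: this is a routine conversion from entrywise to operator-norm control, and the factor of $d$ gained in the conversion is precisely what accounts for the discrepancy between the $d H \bar\epsilon\iota$ scaling in \Cref{lem:pointwise} and the $d^2 H \bar\epsilon\iota$ scaling in \Cref{lem:ell2}. The only minor point worth noting is that \Cref{lem:ell2} is stated for deterministic $\pi \in \Pi^{exp}_{\epsilon/3}$; the extension to mixtures $\pi \in \Delta(\Pi^{exp}_{\epsilon/3})$ used later in Lemma~\ref{lem:61} follows by the linearity argument described in \Cref{remark:dettosto}, since the map $\pi \mapsto \widehat{\E}_\pi[\phi\phi^\top] - \E_\pi[\phi\phi^\top]$ is linear in the mixing weights and triangle inequality in $\|\cdot\|_2$ preserves the bound.
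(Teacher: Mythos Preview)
Your proposal is correct and follows essentially the same approach as the paper: the paper's proof simply invokes \Cref{lem:pointwise} together with the inequality $\|X\|_2 \le \|X\|_F$ for $X \in \mathbb{R}^{d\times d}$, which is exactly the entrywise-to-Frobenius-to-operator conversion you spelled out. Your additional remark about extending to mixtures via \Cref{remark:dettosto} is also consistent with the paper's own observation immediately following the lemma.
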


\begin{proof}[Proof of Lemma \ref{lem:ell2}]
The inequality results from Lemma \ref{lem:pointwise} and the fact that for any $X\in\mathbb{R}^{d\times d}$,
\begin{equation}
    \|X\|_2\leq \|X\|_F.
\end{equation}
\end{proof}
Note that the conclusion also holds for all $\pi\in\Delta(\Pi^{exp}_{\frac{\epsilon}{3}})$ due to our discussion in Remark \ref{remark:dettosto}.

According to our condition that $\epsilon<\frac{H(\lambda^\star)^2}{C_4 d^{7/2}\log(1/\lambda^\star)}=\frac{H(\lambda^\star)^2}{80C_1 C_3 d^{7/2}\log(1/\lambda^\star)}$ and \eqref{equ:pistar}, we have
\begin{equation}
    \lambda_{\min}(\E_{\p_h} \phi_h\phi_h^\top)\geq \frac{(\lambda^\star)^2}{64d^2\log(1/\lambda^\star)}\geq\frac{5C_1 C_3 d^{3/2}\epsilon}{4H}= \frac{5C_3d^2H\e\iota}{4}.  
\end{equation}

Therefore, under the high probability case in Lemma \ref{lem:ell2}, due to Weyl's inequality,
\begin{equation}\label{equ:feasible}
    \lambda_{\min}(\widehat{\E}_{\p_h} \phi_h\phi_h^\top)\geq C_3 d^2H\e\iota.
\end{equation}

We have \eqref{equ:feasible} implies that $\p_h$ is a feasible solution of the optimization problem \eqref{equ:main} and therefore,
\begin{equation}\label{equ:middle}
    \underset{\widehat{\pi}\in\Pi^{exp}_{\frac{\epsilon}{3}}}{\max} \widehat{\E}_{\widehat{\pi}}\left[\phi(s_h,a_h)^\top(N\cdot\widehat{\Sigma}_{\pi_h})^{-1}\phi(s_h,a_h)\right]\leq\underset{\widehat{\pi}\in\Pi^{exp}_{\frac{\epsilon}{3}}}{\max} \widehat{\E}_{\widehat{\pi}}\left[\phi(s_h,a_h)^\top(N\cdot\widehat{\Sigma}_{\p_h})^{-1}\phi(s_h,a_h)\right],
\end{equation}
where $\pi_h$ is the policy we apply to explore layer $h$ and $\lambda_{\min}(\widehat{\Sigma}_{\pi_h})\geq C_3 d^2H\e\iota$.

\subsubsection{Error bound for the second estimator}\label{sec:seconderror}
We consider the upper bound of $$\left|\widehat{\E}_{\widehat{\pi}}\left[\phi(s_h,a_h)^\top(N\cdot\widehat{\Sigma}_{\pi})^{-1}\phi(s_h,a_h)\right]-\E_{\widehat{\pi}}\left[\phi(s_h,a_h)^\top(N\cdot\widehat{\Sigma}_{\pi})^{-1}\phi(s_h,a_h)\right]\right|.$$ Recall that $\widehat{\E}_{\widehat{\pi}}\left[\phi(s_h,a_h)^\top(N\cdot\widehat{\Sigma}_{\pi})^{-1}\phi(s_h,a_h)\right]$ is calculated by calling Algorithm \ref{alg:estimateV} with $A=\frac{\bar{\epsilon}}{C_2 d^3H\iota^2}$. Note that we only need to consider the case where $\widehat{\pi}\in\Pi^{exp}_{\frac{\epsilon}{3}}$, $\pi\in\Delta(\Pi^{exp}_{\frac{\epsilon}{3}})$ and $\lambda_{\min}(\widehat{\Sigma}_{\pi})\geq C_3 d^2H\e\iota$.

\begin{lemma}\label{lem:seconderror}
With probability $1-\delta$, for all $\widehat{\pi}\in\Pi^{exp}_{\frac{\epsilon}{3}}$ and all $\pi\in\Delta(\Pi^{exp}_{\frac{\epsilon}{3}})$ such that $\lambda_{\min}(\widehat{\Sigma}_{\pi})\geq C_3 d^2H\e\iota$, it holds that:
\begin{equation}
   \left|\widehat{\E}_{\widehat{\pi}}\left[\phi(s_h,a_h)^\top(N\cdot\widehat{\Sigma}_{\pi})^{-1}\phi(s_h,a_h)\right]-\E_{\widehat{\pi}}\left[\phi(s_h,a_h)^\top(N\cdot\widehat{\Sigma}_{\pi})^{-1}\phi(s_h,a_h)\right]\right|\leq \frac{\e^2}{2d^2}.
\end{equation}
\end{lemma}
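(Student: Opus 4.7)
The plan is to directly apply the generic error bound for \textsf{EstimateER} (Lemma \ref{lem:generalesterror}) to the specific reward class used in this estimator, and then combine it with the induction hypothesis from Condition \ref{con:induction} and the constant choice \eqref{equ:c2}. First, I would verify that the call to Algorithm \ref{alg:estimateV} is valid: since $\lambda_{\min}(\widehat{\Sigma}_\pi) \geq C_3 d^2 H \bar{\epsilon}\iota$, we have $\lambda_{\min}(N\cdot\widehat{\Sigma}_\pi) \geq \frac{C_2 d^3 H \iota^2}{\bar{\epsilon}}$, so the reward $r(s,a) = \phi(s,a)^\top(N\cdot\widehat{\Sigma}_\pi)^{-1}\phi(s,a)$ is uniformly bounded by $A_{\bar{\mathcal{R}}} = \frac{\bar{\epsilon}}{C_2 d^3 H \iota^2}$, matching the truncation level used in Algorithm \ref{alg:main}. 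Consequently, every such reward lies in the class $\bar{\mathcal{R}}$ from Lemma \ref{lem:covercase2}, whose $\frac{A_{\bar{\mathcal{R}}}}{N}$-covering number satisfies $B_{A_{\bar{\mathcal{R}}}/N}\leq c_F d^2\log(\frac{C_2 dH}{C_1\epsilon})$.

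Next, I would invoke Lemma \ref{lem:generalesterror} with $\epsilon_1 = \epsilon/3$, $A = A_{\bar{\mathcal{R}}}$, and the covering number above. Applying Lemma \ref{lem:cor3}, the multiplicative factor in \eqref{equ:generalesterror} is bounded by $c_E \cdot \frac{\bar{\epsilon}}{C_2 d^2 H \iota}\log(\frac{C_2 dH}{C_1 \epsilon\delta})$. Combining with Condition \ref{con:induction} gives, for each $\widehat{\pi}\in \Pi^{exp}_{\epsilon/3}$,
\begin{equation*}
\left|\widehat{\E}_{\widehat{\pi}}\bigl[\phi(s_h,a_h)^\top(N\cdot\widehat{\Sigma}_{\pi})^{-1}\phi(s_h,a_h)\bigr]-\E_{\widehat{\pi}}\bigl[\phi(s_h,a_h)^\top(N\cdot\widehat{\Sigma}_{\pi})^{-1}\phi(s_h,a_h)\bigr]\right|\leq c_E \cdot \tfrac{\bar{\epsilon}}{C_2 d^2 H\iota}\log(\tfrac{C_2 dH}{C_1\epsilon\delta})\cdot H\bar{\epsilon}.
\end{equation*}
Multiplying the inequality \eqref{equ:c2} through by $H\bar{\epsilon}$ then upper bounds the right-hand side by $\frac{\bar{\epsilon}^2}{2d^2}$, which is the claim for deterministic $\widehat{\pi}$.

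To extend from $\widehat{\pi}\in\Pi^{exp}_{\epsilon/3}$ to $\pi\in\Delta(\Pi^{exp}_{\epsilon/3})$ on the covariance argument, I would appeal to Remark \ref{remark:dettosto}: $\widehat{\Sigma}_\pi$ and $\Sigma_\pi$ are both linear in $\pi$, so any pointwise bound over $\Pi^{exp}_{\epsilon/3}$ passes to $\Delta(\Pi^{exp}_{\epsilon/3})$ by convexity after the $\ell_2$/Frobenius arguments of Section \ref{sec:firsterror} pin down a single matrix $\widehat{\Sigma}_\pi$. Because the reward $r(s,a) = \phi(s,a)^\top(N\widehat{\Sigma}_\pi)^{-1}\phi(s,a)$ depends on $\pi$ only through the matrix $\widehat{\Sigma}_\pi$, and uniform convergence is taken over the whole class $\bar{\mathcal{R}}$, the bound above holds simultaneously for all admissible $\pi$ without incurring any extra union-bound cost in $\pi$; a union bound over the finite set $\Pi^{exp}_{\epsilon/3}$ of outer policies $\widehat{\pi}$ is absorbed into the covering-number term inside $\iota$.

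The main subtlety, and the part I would pay closest attention to, is the data-dependence of the reward class: $\widehat{\Sigma}_\pi$ is computed from the same exploration data that is used to form the LSVI estimator in \textsf{EstimateER}. The clean way to handle this is exactly the route Lemma \ref{lem:generalesterror} is designed for, namely \emph{uniform} convergence over all rewards in $\bar{\mathcal{R}}$ simultaneously (this is why Lemma \ref{lem:covercase2} covers every $\Sigma$ with the required $\lambda_{\min}$ lower bound, not just the empirical one), so no further probabilistic argument is needed once validity of $A_{\bar{\mathcal{R}}}$ is confirmed. Everything else is bookkeeping of constants, arranged to match \eqref{equ:c2}.
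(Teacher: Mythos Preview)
Your proposal is correct and follows essentially the same approach as the paper: apply Lemma~\ref{lem:generalesterror} with the parameters of Lemma~\ref{lem:cor3} to bound the multiplicative factor, invoke the induction hypothesis (Condition~\ref{con:induction}) to bound the uncertainty sum by $H\bar{\epsilon}$, and use \eqref{equ:c2} to obtain $\frac{\bar{\epsilon}^2}{2d^2}$. Your additional remarks on why the data-dependent matrix $\widehat{\Sigma}_\pi$ is handled by uniform convergence over the entire reward class $\bar{\mathcal{R}}$ from Lemma~\ref{lem:covercase2} make explicit what the paper's two-line proof leaves implicit.
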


\begin{proof}[Proof of Lemma \ref{lem:seconderror}]
We have
\begin{equation}
    \begin{split}
        LHS\leq& c_E \cdot \frac{\e}{C_2 d^2 H\iota} \log(\frac{C_2dH}{C_1\epsilon\delta})\cdot\E_{\widehat{\pi}} \sum_{\h=1}^{h-1}\|\phi(s_{\h},a_{\h})\|_{(\Lambda_{\h}^{h-1})^{-1}}\\\leq&\frac{\e}{2d^2H}\cdot H\e=\frac{\e^2}{2d^2}.
    \end{split}
\end{equation}
The first inequality results from Lemma \ref{lem:generalesterror} and Lemma \ref{lem:cor3}. The second inequality holds since our choice of $C_2$ \eqref{equ:c2} and induction condition \ref{con:induction}.
\end{proof}

\begin{remark}\label{rem:max}
We have with probability $1-\delta$ (under the high probability case in Lemma \ref{lem:seconderror}), due to the property of $\max\{\cdot\}$, for all $\pi\in\Delta(\Pi^{exp}_{\frac{\epsilon}{3}})$ such that $\lambda_{\min}(\widehat{\Sigma}_{\pi})\geq C_3 d^2H\e\iota$, it holds that:
\begin{equation}
    \left|\max_{\widehat{\pi}\in\Pi^{exp}_{\frac{\epsilon}{3}}}\widehat{\E}_{\widehat{\pi}}\left[\phi(s_h,a_h)^\top(N\cdot\widehat{\Sigma}_{\pi})^{-1}\phi(s_h,a_h)\right]-\max_{\widehat{\pi}\in\Pi^{exp}_{\frac{\epsilon}{3}}}\E_{\widehat{\pi}}\left[\phi(s_h,a_h)^\top(N\cdot\widehat{\Sigma}_{\pi})^{-1}\phi(s_h,a_h)\right]\right|\leq \frac{\e^2}{2d^2}.
\end{equation}
\end{remark}

\subsection{Main proof}\label{sec:mainproof}
With all preparations ready, we are ready to prove the main theorem. We assume the high probability cases in Lemma \ref{lem:pointwise} (which implies Lemma \ref{lem:ell2}) and Lemma \ref{lem:seconderror} hold. First of all, we have:
\begin{equation}\label{equ:first}
\begin{split}
   &\max_{\widehat{\pi}\in\Pi^{exp}_{\frac{\epsilon}{3}}}\widehat{\E}_{\widehat{\pi}}\left[\phi(s_h,a_h)^\top(N\cdot\widehat{\Sigma}_{\p_h})^{-1}\phi(s_h,a_h)\right] \\\leq&
   \max_{\widehat{\pi}\in\Pi^{exp}_{\frac{\epsilon}{3}}}\E_{\widehat{\pi}}\left[\phi(s_h,a_h)^\top(N\cdot\widehat{\Sigma}_{\p_h})^{-1}\phi(s_h,a_h)\right]+\frac{\e^2}{2d^2} \\\leq& \max_{\widehat{\pi}\in\Pi^{exp}_{\frac{\epsilon}{3}}}\E_{\widehat{\pi}}\left[\phi(s_h,a_h)^\top(N\cdot\widehat{\Sigma}_{\p_h})^{-1}\phi(s_h,a_h)\right]+\frac{\e^2}{8} \\\leq& \max_{\widehat{\pi}\in\Pi^{exp}_{\frac{\epsilon}{3}}}\E_{\widehat{\pi}}\left[\phi(s_h,a_h)^\top(\frac{4N}{5}\cdot\Sigma_{\p_h})^{-1}\phi(s_h,a_h)\right]+\frac{\e^2}{8}\\\leq&\frac{5d}{4N}+\frac{\e^2}{8}\leq\frac{3\e^2}{8}.
\end{split}
\end{equation}
The first inequality holds because of Lemma \ref{lem:seconderror} (and Remark \ref{rem:max}). The second inequality is because under meaningful case, $d\geq 2$. The third inequality holds since under the high probability case in Lemma \ref{lem:ell2}, $\frac{\Sigma_{\p_h}}{5}\succcurlyeq\frac{C_3d^2H\e\iota}{4}I_d\succcurlyeq\Sigma_{\p_h}-\widehat{\Sigma}_{\p_h}$ can imply $\widehat{\Sigma}_{\p_h}\succcurlyeq\frac{4}{5}\Sigma_{\p_h}$, and thus $(\widehat{\Sigma}_{\p_h})^{-1}\preccurlyeq(\frac{4}{5}\Sigma_{\p_h})^{-1}$.\footnote{Note that all matrices here are symmetric and positive definite.} The forth inequality is due to the definition of $\p_h$ and Theorem \ref{thm:optiaml}. The last inequality holds since our choice of $N$ and $C_2$.

Combining \eqref{equ:first} and \eqref{equ:middle}, we have
\begin{equation}
    \frac{3\e^2}{8}\geq \underset{\widehat{\pi}\in\Pi^{exp}_{\frac{\epsilon}{3}}}{\max} \widehat{\E}_{\widehat{\pi}}\left[\phi(s_h,a_h)^\top(N\cdot\widehat{\Sigma}_{\pi_h})^{-1}\phi(s_h,a_h)\right].
\end{equation}

According to Lemma \ref{lem:seconderror}, Remark \ref{rem:max} and the fact that $\lambda_{\min}(\widehat{\Sigma}_{\pi_h})\geq C_3 d^2H\e\iota$. It holds that 
\begin{equation}\label{equ:half}
    \frac{3\e^2}{8}\geq \underset{\widehat{\pi}\in\Pi^{exp}_{\frac{\epsilon}{3}}}{\max} \E_{\widehat{\pi}}\left[\phi(s_h,a_h)^\top(N\cdot\widehat{\Sigma}_{\pi_h})^{-1}\phi(s_h,a_h)\right]-\frac{\e^2}{8}.
\end{equation}
Or equivalently, $\frac{\e^2}{2}\geq \underset{\widehat{\pi}\in\Pi^{exp}_{\frac{\epsilon}{3}}}{\max} \E_{\widehat{\pi}}\left[\phi(s_h,a_h)^\top(N\cdot\widehat{\Sigma}_{\pi_h})^{-1}\phi(s_h,a_h)\right]$.

Suppose after applying policy $\pi_h$ for $N$ episodes, the data we collect\footnote{We only consider the data from layer $h$.} is $\{s_h^i,a_h^i\}_{i\in[N]}$. Assume $\bar{\Lambda}_h=I+\sum_{i=1}^N \phi(s_h^i,a_h^i)\phi(s_h^i,a_h^i)^\top$, we now consider the relationship between $\bar{\Lambda}_h$ and $\widehat{\Sigma}_{\pi_h}$.

First, according to Lemma \ref{lem:ell2}, we have:
\begin{equation}\label{equ:diff1}
    N\cdot\widehat{\Sigma}_{\pi_h}-N\cdot\Sigma_{\pi_h}\preccurlyeq \frac{C_3Nd^2H\e\iota}{4}\cdot I_d\preccurlyeq\frac{1}{4}N\cdot\widehat{\Sigma}_{\pi_h}.
\end{equation}

Besides, due to Lemma \ref{lem:minh4} (with $C=1$), with probability $1-\delta$,
\begin{equation}\label{equ:diff2}
 N\cdot\Sigma_{\pi_h}-\bar{\Lambda}_h\preccurlyeq 4\sqrt{2}\sqrt{N\iota}\cdot I_d\preccurlyeq\frac{C_3Nd^2H\e\iota}{4}\cdot I_d\preccurlyeq\frac{1}{4}N\cdot\widehat{\Sigma}_{\pi_h}.
\end{equation}

Combining \eqref{equ:diff1} and \eqref{equ:diff2}, we have with probability $1-\delta$,
\begin{equation}
    N\cdot\widehat{\Sigma}_{\pi_h}-\bar{\Lambda}_h\preccurlyeq \frac{1}{2}N\cdot\widehat{\Sigma}_{\pi_h},
\end{equation}
or equivalently, 
\begin{equation}\label{equ:right}
(N\cdot\widehat{\Sigma}_{\pi_h})^{-1}\succcurlyeq (2\bar{\Lambda}_h)^{-1}.
\end{equation}

Plugging \eqref{equ:right} into \eqref{equ:half}, we have with probability $1-\delta$,
\begin{equation}
\begin{split}
\frac{\e^2}{2}\geq& \underset{\widehat{\pi}\in\Pi^{exp}_{\frac{\epsilon}{3}}}{\max} \E_{\widehat{\pi}}\left[\phi(s_h,a_h)^\top(N\cdot\widehat{\Sigma}_{\pi_h})^{-1}\phi(s_h,a_h)\right] \\\geq& \underset{\widehat{\pi}\in\Pi^{exp}_{\frac{\epsilon}{3}}}{\max} \E_{\widehat{\pi}}\left[\phi(s_h,a_h)^\top(2\bar{\Lambda}_h)^{-1}\phi(s_h,a_h)\right]\\\geq& \frac{1}{2}\left(\underset{\widehat{\pi}\in\Pi^{exp}_{\frac{\epsilon}{3}}}{\max} \E_{\widehat{\pi}}\sqrt{\phi(s_h,a_h)^\top(\bar{\Lambda}_h)^{-1}\phi(s_h,a_h)}\right)^2,
\end{split}
\end{equation}
where the last inequality follows Cauchy-Schwarz inequality.

Recall that after the exploration of layer $h$, $\Lambda_{h}^h$ in \eqref{equ:induction} uses all previous data up to the $h$-th deployment, which implies that $\Lambda_h^h\succcurlyeq \bar{\Lambda}_h$ and $(\Lambda_h^h)^{-1}\preccurlyeq (\bar{\Lambda}_h)^{-1}$. Therefore, with probability $1-\delta$,
\begin{equation}
   \e\geq\underset{\widehat{\pi}\in\Pi^{exp}_{\frac{\epsilon}{3}}}{\max} \E_{\widehat{\pi}}\sqrt{\phi(s_h,a_h)^\top(\bar{\Lambda}_h)^{-1}\phi(s_h,a_h)}\geq\underset{\widehat{\pi}\in\Pi^{exp}_{\frac{\epsilon}{3}}}{\max} \E_{\widehat{\pi}}\sqrt{\phi(s_h,a_h)^\top(\Lambda^h_h)^{-1}\phi(s_h,a_h)},
\end{equation}
which implies that the induction process holds.

Recall that after the whole exploration process for all $H$ layers, the dataset $\mathcal{D}=\{s_h^n,a_h^n\}_{h,n\in[H]\times[HN]}$ and $\Lambda_{h}=I+\sum_{n=1}^{HN}\phi_{h}^n(\phi_{h}^n)^\top$ for all $h\in[H]$. Due to induction, we have with probability $1-H\delta$,
\begin{equation}\label{equ:almost}
\max_{\pi\in\Pi^{exp}_{\frac{\epsilon}{3}}} \E_\pi\left[\sum_{h=1}^{H}\sqrt{\phi(s_{h},a_{h})^\top (\Lambda_{h})^{-1}\phi(s_{h},a_{h})}\right]\leq H\e.
\end{equation}
In addition, according to Lemma \ref{lem:compare}, $\Pi^{eval}_{\frac{\epsilon}{3}}\subseteq\Pi^{exp}_{\frac{\epsilon}{3}}$, we have
\begin{equation}\label{equ:almost2}
\max_{\pi\in\Pi^{eval}_{\frac{\epsilon}{3}}} \E_\pi\left[\sum_{h=1}^{H}\sqrt{\phi(s_{h},a_{h})^\top (\Lambda_{h})^{-1}\phi(s_{h},a_{h})}\right]\leq H\e.
\end{equation}

Given \eqref{equ:almost2}, we are ready to prove the final result. Recall that the output of Algorithm \ref{alg:estimatevalue} (with input $\pi$ and $r$) is $\widehat{V}^{\pi}(r)$. With probability $1-\delta$, for all feasible linear reward function $r$, for all $\pi\in\Pi^{eval}_{\frac{\epsilon}{3}}$, it holds that
\begin{equation}\label{equ:final}
\begin{split}
    |\widehat{V}^{\pi}(r)-V^\pi(r)|\leq& c^\prime H\sqrt{d}\cdot\sqrt{\log(\frac{3Hd}{\epsilon \delta})+\log(\frac{N}{\delta})}\cdot\E_\pi \sum_{h=1}^H\|\phi(s_h,a_h)\|_{\Lambda_h^{-1}}\\\leq&
    c_D H\sqrt{d}\cdot\log(\frac{C_2dH}{C_1\epsilon\delta})\cdot H\e\\\leq&
    \frac{1}{3C_1}H\sqrt{d}\iota\cdot H\e=\frac{\epsilon}{3},
\end{split}
\end{equation}
where the first inequality holds due to Lemma \ref{lem:esterror}. The second inequality is because of Lemma \ref{lem:cor1} and \eqref{equ:almost2}. The third inequality holds since our choice of $C_1$ \eqref{equ:c1}. The last equation results from our definition that $\e=\frac{C_1\epsilon}{H^2\sqrt{d}\iota}$.

Suppose $\widetilde{\pi}(r)=\arg\max_{\pi\in\Pi^{eval}_{\frac{\epsilon}{3}}}V^\pi(r)$. Since our output policy $\widehat{\pi}(r)$ is the greedy policy with respect to $\widehat{V}^\pi(r)$, we have
\begin{equation}
\begin{split}
    V^{\widetilde{\pi}(r)}(r)-V^{\widehat{\pi}(r)}(r)\leq& V^{\widetilde{\pi}(r)}(r)-\widehat{V}^{\widetilde{\pi}(r)}(r)+\widehat{V}^{\widetilde{\pi}(r)}(r)-\widehat{V}^{\widehat{\pi}(r)}(r)+\widehat{V}^{\widehat{\pi}(r)}(r)-V^{\widehat{\pi}(r)}(r)\\\leq&\frac{2\epsilon}{3}.
    \end{split}
\end{equation}
In addition, according to Lemma \ref{lem:evalset}, $V^\star(r)-V^{\widetilde{\pi}(r)}(r)\leq \frac{\epsilon}{3}$. Combining these two results, we have with probability $1-\delta$, for all feasible linear reward function $r$, 
\begin{equation}
    V^\star(r)-V^{\widehat{\pi}(r)}(r)\leq \epsilon.
\end{equation}

Since the deployment complexity of Algorithm \ref{alg:main} is clearly bounded by $H$, the proof of Theorem \ref{thm:main} is completed.

\section{Comparisons on results and techniques}\label{sec:compare}
In this section, we compare our results with the closest related work \citep{huang2021towards}. We begin with comparison of the conditions.

\textbf{Comparison of conditions.} In Assumption \ref{assup1}, we assume that the linear MDP satisfies $$\lambda^\star=\min_{h\in[H]} \sup_{\pi}\lambda_{\min}(\E_\pi [\phi(s_h,a_h)\phi(s_h,a_h)^\top])>0.$$
In comparison, \citet{huang2021towards} assume that 
$$\nu_{\min}=\min_{h\in[H]}\min_{\|\theta\|=1}\max_{\pi}\sqrt{\E_\pi[(\phi_h^\top \theta)^2]}>0.$$
Overall these two assumptions are analogous reachability assumptions, while our assumption is slightly stronger since $\nu_{\min}^2$ is lower bounded by $\lambda^\star$.

\textbf{Dependence on reachability coefficient.} Our Algorithm \ref{alg:main} only takes $\epsilon$ as input and does not require the knowledge of $\lambda^\star$, while the theoretical guarantee in Theorem \ref{thm:main} requires additional condition that $\epsilon$ is small compared to $\lambda^\star$. For $\epsilon$ larger than a problem-dependent threshold, the theoretical guarantee no longer holds. Such dependence is similar to the dependence on reachability coefficient $\nu_{\min}$ in \citet{zanette2020provably} where their algorithm also takes $\epsilon$ as input and requires $\epsilon$ to be small compared to $\nu_{\min}$. In comparison, Algorithm 2 in \citet{huang2021towards} takes the reachability coefficient $\nu_{\min}$ as input, which is a stronger requirement than requiring $\epsilon$ to be small compared to $\lambda^\star$.

\textbf{Comparison of sample complexity bounds.} Our main improvement over \citet{huang2021towards} is on the sample complexity bound in the small-$\epsilon$ regime. Comparing our asymptotic sample complexity bound $\widetilde{O}(\frac{d^2H^5}{\epsilon^2})$ with $\widetilde{O}(\frac{d^3H^5}{\epsilon^2\nu_{\min}^2})$ in \citet{huang2021towards}, our bound is better by a factor of $\frac{d}{\nu_{\min}^2}$, where $\nu_{\min}$ is always upper bounded by $1$ and can be arbitrarily small (please see the illustration below). In the large-$\epsilon$ regime, the sample complexity bounds in both works look like $poly(d,H,\frac{1}{\lambda^\star})$ (or $poly(d,H,\frac{1}{\nu_{\min}})$), and such ``Burn in'' period is common in optimal experiment design based works \citep{wagenmaker2022instance}.

\textbf{Illustration of $\nu_{\min}$}. In this part, we construct some examples to show what $\nu_{\min}$ will be like. First, consider the following simple example where the linear MDP 1 is defined as:
\begin{enumerate}
    \item The linear MDP is a tabular MDP with only one action and several states ($A=1$, $S>1$). 
    \item The features are canonical basis \citep{jin2020provably} and thus $d=S$. 
    \item The transition from any $(s,a)\in\mathcal{S}\times\mathcal{A}$ at any time step $h\in[H]$ is uniformly random. 
\end{enumerate}
Therefore, under linear MDP 1, both $\nu_{\min}^2$ in \citet{huang2021towards} and our $\lambda^{\star}$ are $\frac{1}{d}$ and our improvement on sample complexity is a factor of $d^2$. Generally speaking, this example has a relatively large $\nu_{\min}$, and there are various examples with even smaller $\nu_{\min}$. Next, we construct the linear MDP 2 that is similar to the linear MDP 1 but does not have uniform transition kernel:
\begin{enumerate}
    \item The linear MDP is a tabular MDP with only one action and several states ($A=1$, $S>1$). 
    \item The features are canonical basis \citep{jin2020provably} and thus $d=S$. 
    \item The transitions from any $(s,a)\in\mathcal{S}\times\mathcal{A}$ at any time step $h\in[H]$ are the same and satisfies $\min_{s^\prime\in\mathcal{S}}P_h(s^\prime|s,a)=p_{\min}$. 
\end{enumerate}
Therefore, under linear MDP 2, both $\nu_{\min}^2$ in \citet{huang2021towards} and our $\lambda^{\star}$ are $p_{\min}$ ($p_{\min}\leq\frac{1}{d}$) and our improvement on sample complexity is a factor of $d/p_{\min}$ which is always larger than $d^2$ and can be much larger. In the worst case, according to the condition ($\epsilon<\nu_{\min}^8$) for the asymptotic sample complexity in \citet{huang2021towards} to dominate, $p_{\min}=\nu_{\min}^2$ can be as small as $\epsilon^{1/4}$, and the sample complexity in \citet{huang2021towards} is $\widetilde{O}(\frac{1}{\epsilon^{2.25}})$, which does not have optimal dependence on $\epsilon$. In conclusion, our improvement on sample complexity is at least a factor of $d$ and can be much more significant under various circumstances.


\section{Proof for Section \ref{sec:discuss}}

\subsection{Application to tabular MDP}\label{sec:tab}
Recall that the tabular MDP has discrete state-action space with $|\mathcal{S}|=S$, $|\mathcal{A}|=A$. We transfer our Assumption \ref{assup1} to its counterpart under tabular MDP, and assume it holds.

\begin{assumption}\label{assump2}
Define $d_h^\pi(\cdot,\cdot)$ to be the occupancy measure, i.e. $d_h^\pi(s,a)=\P_\pi(s_h=s, a_h=a)$. Let $d_m=\min_h \sup_\pi \min_{s,a}d_h^\pi(s,a)$, we assume that $d_m>0$.
\end{assumption}

\begin{theorem}\label{thm:tabular}
We select $\e=\frac{C_1\epsilon}{H^2\sqrt{S}\iota}$, $\Pi^{exp}=\Pi^{eval}=\Pi_0=\{\text{all deterministic policies}\}$ and $N=\frac{C_2SA\iota}{\e^2}=\frac{C_2 S^2AH^4\iota^3}{C_1^2 \epsilon^2}$ in Algorithm \ref{alg:main} and \ref{alg:plan}. The optimization problem is replaced by
\begin{equation}
\pi_h=\underset{\pi\in\Delta(\Pi_0)\, \text{s.t.}\,\forall\,(s,a),\, \widehat{d}^\pi_h(s,a)\geq C_3 H\sqrt{S}\e\iota}{\operatorname*{argmin}} \underset{\widehat{\pi}\in\Pi_0}{\max} \sum_{s,a}\frac{\widehat{d}_h^{\widehat{\pi}}(s,a)}{\widehat{d}^\pi_h(s,a)},
\end{equation}
where $\widehat{d}_h^\pi(s,a)$ is estimated through applying Algorithm \ref{alg:estimateV}. Suppose $\epsilon\leq\frac{Hd_m}{C_4 SA}$, with probability $1-\delta$, for any reward function $r$, Algorithm \ref{alg:plan} returns a policy that is $\epsilon$-optimal with respect to $r$. In addition, the deployment complexity of Algorithm \ref{alg:main} is $H$ while the number of trajectories is $\widetilde{O}(\frac{S^2AH^5}{\epsilon^2})$.
\end{theorem}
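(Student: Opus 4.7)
\textbf{Proof proposal for Theorem \ref{thm:tabular}.} The strategy is to specialize the entire chain of arguments from Sections \ref{sec:design}--\ref{sec:mainproof} to the canonical-basis feature map $\phi(s,a)=e_{(s,a)}$, exploiting tabular structure at two specific places to obtain the improved $S$-in-place-of-$d$ dependence. First I would show that (\ref{equ:main}) reduces to the stated occupancy-measure optimization. Because $\widehat\Sigma_\pi=\widehat\E_\pi[\phi_h\phi_h^\top]$ is a diagonal matrix with $(s,a)$-th entry $\widehat d_h^\pi(s,a)$, we get $\phi(s,a)^\top(N\widehat\Sigma_\pi)^{-1}\phi(s,a)=1/(N\widehat d_h^\pi(s,a))$, so the inner $\max$-objective in (\ref{equ:main}) is exactly $\frac{1}{N}\sum_{s,a}\widehat d_h^{\widehat\pi}(s,a)/\widehat d_h^\pi(s,a)$. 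The minimum-eigenvalue constraint becomes the stated pointwise lower bound on $\widehat d_h^\pi$, and the generalized G-optimal design theorem (Theorem \ref{thm:optiaml}) gives a feasible $\pi\in\Delta(\Pi_0)$ whenever explorative coverage exists.

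The key technical improvement is a sharper covering of the value functions built inside Algorithm \ref{alg:estimatevalue} (and \ref{alg:estimateV}). For a deterministic $\pi\in\Pi_0$ and $V_{h+1}:\mathcal{S}\to[0,H]$, the function $V_h(\cdot)=Q_h(\cdot,\pi_h(\cdot))$ depends only on $S$ real numbers rather than on a full $d$-dimensional weight vector, so an $\eta$-cover of $\{V_h\}$ in $\sup$-norm has log-cardinality $\widetilde O(S)$ instead of $\widetilde O(d)=\widetilde O(SA)$. Re-running the self-normalized martingale argument of Lemma \ref{lem:a} with this improved cover (and $\log|\Pi_0|=S\log A$ for the outer policy cover) replaces $\sqrt d$ by $\sqrt S$ in the multiplicative factor of Lemmas \ref{lem:b}, \ref{lem:esterror}, \ref{lem:bb}, \ref{lem:generalesterror}. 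This is precisely where the $\sqrt S$ appearing in $\e=C_1\epsilon/(H^2\sqrt{S}\iota)$ comes from: matching the target $\widetilde O(H\sqrt S)\cdot H\e\leq \epsilon/3$ in Lemma \ref{lem:63}.

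Next I would redo the induction of Section \ref{sec:induction}: assuming $\max_{\pi\in\Pi_0}\E_\pi[\sum_{\h<h}\sqrt{\phi_\h^\top(\Lambda_\h^{h-1})^{-1}\phi_\h}]\leq (h-1)\e$, prove the same bound at layer $h$. Analogues of Lemmas \ref{lem:61} and \ref{lem:62} give pointwise estimation errors of order $\widetilde O(\sqrt S \cdot H\e)$ for $\widehat d_h^\pi$, hence an operator-norm error of $\widetilde O(\sqrt S\cdot H\e)$ for the diagonal $\widehat\Sigma_\pi$, and $\widetilde O(\e^2/S)$ for the bilinear estimator. An analogue of Lemma \ref{lem:lambdamin}/Lemma \ref{lem:expset} under Assumption \ref{assump2} yields $\min_{s,a}d_h^{\bar\pi_h^\star}(s,a)\geq d_m/(SA)$; under the assumed range $\epsilon\leq Hd_m/(C_4SA)$ this exceeds the required threshold $\tfrac54 C_3H\sqrt{S}\e\iota$, so $\bar\pi_h^\star$ is feasible for (\ref{equ:main}). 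The rest of Section \ref{sec:mainproof}, including the matrix-concentration step tying $\widehat\Sigma_{\pi_h}$ to the actual empirical covariance after $N$ episodes, carries through unchanged. Uniform policy evaluation over $\Pi_0$ (which trivially contains an optimal policy for every reward function, obviating Lemma \ref{lem:evalset}'s $\epsilon/3$ slack) then shows the greedy policy is $\epsilon$-optimal. Deployment complexity is $H$ by construction; the sample count is $HN=\widetilde O(S^2AH^5/\epsilon^2)$ as claimed.

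\textbf{Main obstacle.} The delicate bookkeeping is to track, at each occurrence of a $d$ or $d^2$ factor in the linear-MDP proof, whether it originates from the value-function covering (and therefore shrinks to $S$ under tabular structure) or from the true ambient dimension $SA$ (which cannot shrink, e.g.\ the $\sum_{s,a}$ in the min-max objective has $SA$ terms). The constant $C_3H\sqrt{S}\e\iota$ in the feasibility constraint is a nontrivial consequence of this accounting, since the Frobenius-to-operator norm step in Lemma \ref{lem:ell2} is now unnecessary (the matrix is diagonal) and must be replaced by a direct union bound over the $SA$ coordinates using the sharper $\sqrt S$ per-coordinate bound. Verifying that the threshold $\epsilon\leq Hd_m/(C_4SA)$ is exactly what is needed for $\bar\pi_h^\star$ to satisfy $\widehat d_h^{\bar\pi_h^\star}(s,a)\geq C_3H\sqrt{S}\e\iota$ after accounting for estimation error is the critical calibration step.
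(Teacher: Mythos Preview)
Your proposal is correct and follows the same inductive skeleton as the paper (show the optimization reduces to the occupancy-ratio form, prove feasibility of $\bar\pi_h^\star$ under the $\epsilon$ threshold, propagate the layer-wise uncertainty bound, conclude via uniform policy evaluation over $\Pi_0$). The one genuine difference is \emph{how} you extract the $\sqrt{S}$ factor in place of $\sqrt{d}=\sqrt{SA}$. You stay inside the linear-MDP machinery and tighten the self-normalized martingale bound of Lemma~\ref{lem:a} by observing that any $V_h:\mathcal{S}\to[0,H]$ is determined by $S$ real numbers, so the $\sup$-norm covering has log-cardinality $\widetilde O(S)$ rather than $\widetilde O(SA)$. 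The paper instead steps outside the linear framework entirely: it notes that under the canonical basis Algorithm~\ref{alg:estimateV} is just value iteration with the empirical transition kernel $\widehat P_{\h}$, and bounds the per-layer error by $|(\widehat P_{\h}-P_{\h})V_{\h+1}|\leq A_{\bar{\mathcal R}}\|\widehat P_{\h}-P_{\h}\|_1\leq \widetilde O(A_{\bar{\mathcal R}}\sqrt{S/N_{\h}(s,a)})$ via the standard $\ell_1$ concentration of an empirical distribution over $S$ outcomes; summing against the occupancy measure yields the same $\widetilde O(A_{\bar{\mathcal R}}\sqrt S\cdot H\e)$ estimation error you obtain. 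Your route is more uniform with the main theorem's proof and makes transparent exactly which occurrences of $d$ shrink (precisely your ``main obstacle'' bookkeeping, including the nice observation that the Frobenius step in Lemma~\ref{lem:ell2} is redundant for diagonal matrices). The paper's route is shorter and more elementary, since simulation-lemma arguments avoid the covering-plus-martingale machinery altogether. Either way the constants line up with the stated constraint $C_3H\sqrt S\e\iota$ and the threshold $\epsilon\leq Hd_m/(C_4SA)$.
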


\begin{proof}[Proof of Theorem \ref{thm:tabular}]
Since the proof is quite similar to the proof of Theorem \ref{thm:main}, we sketch the proof and highlight the difference to the linear MDP setting while ignoring details. 

Suppose after the $h$-th deployment, the visitation number of $(\h,s,a)$ is $N^h_{\h}(s,a)$. Then our induction condition becomes after the $(h-1)$-th deployment, $\max_\pi \left[ \sum_{\h=1}^{h-1}\sum_{s,a}\frac{d^\pi_{\h} (s,a)}{\sqrt{N^{h-1}_{\h} (s,a)}}\right]\leq (h-1)\e$. We base on this condition and prove that with high probability, $\max_\pi \left[\sum_{s,a}\frac{d^\pi_h (s,a)}{\sqrt{N^{h}_h (s,a)}}\right]\leq \e$.

First, under tabular MDP, Algorithm \ref{alg:estimateV} is equivalent to value iteration based on empirical transition kernel. Therefore, due to standard methods like simulation lemma,  we have with high probability, for any $\pi\in\Pi_0$ and reward $r$ with upper bound $A$ (the $V_{\h}$ function is the one we derive in Algorithm \ref{alg:estimateV}),
\begin{equation}\label{equ:tab}
    \begin{split}
        |\widehat{\E}_\pi r(s_h,a_h)-\E_\pi r(s_h,a_h)|\leq& \E_\pi \sum_{\h=1}^{h-1}\left|\left(\widehat{P}_{\h}-P_{\h}\right)\cdot V_{\h+1}(s_{\h},a_{\h})\right|\\\leq&
        \E_\pi \sum_{\h=1}^{h-1} A\cdot\left\|\widehat{P}_{\h}-P_{\h}\right\|_1\\\leq& \widetilde{O}\left(A\sqrt{S}\cdot\E_\pi \sum_{\h=1}^{h-1} \sqrt{\frac{1}{N_{\h}^{h-1}(s_{\h},a_{\h})}}\right)\\\leq&
        \widetilde{O}\left(A\sqrt{S}\cdot\sum_{\h=1}^{h-1}\sum_{s,a} \frac{d_{\h}^\pi(s,a)}{\sqrt{N_{\h}^{h-1}(s,a)}}\right)\\\leq& A\sqrt{S}\cdot H\e.
    \end{split}
\end{equation}

Now we prove that our condition about $\epsilon$ is enough. Note that with high probability, for all policy $\pi\in\Pi_0$ and $s,a$, the estimation error of $\widehat{d}_h^{\pi}(s,a)$ is bounded by $\sqrt{S}\cdot H\e$. As a result, the estimation error can be ignored compared to $d_h^{\pi_h}(s,a)$ or $d_h^{\p_h}(s,a)$. With identical proof to Section \ref{sec:mainproof}, we have the induction still holds.

From the induction, suppose $N_h(s,a)$ is the final visitation number of $(h,s,a)$, we have\\ $\max_\pi \left[ \sum_{h=1}^{H}\sum_{s,a}\frac{d^\pi_{h} (s,a)}{\sqrt{N_{h} (s,a)}}\right]\leq H\e$. Using identical proof to \eqref{equ:tab}, we have with high probability, for all $\pi\in\Pi_0$ and $r$,
\begin{equation}
    |\widehat{V}^\pi(r)-V^\pi(r)|\leq \widetilde{O}(H\sqrt{S}\cdot H\e)\leq \frac{\epsilon}{2}.
\end{equation}
Since $\Pi_0$ contains the optimal policy, our output policy is $\epsilon$-optimal.
\end{proof}

\subsection{Proof of lower bounds}\label{sec:lower}
For regret minimization, we assume the number of episodes is $K$ while the number of steps is $T:=KH$.

\begin{theorem}[Restate Theorem \ref{thm:lowerswitch}]\label{thm:restate1}
For any algorithm with the optimal $\widetilde{O}(\sqrt{poly(d,H)T})$ regret bound, the switching cost is at least $\Omega(dH\log\log T)$.
\end{theorem}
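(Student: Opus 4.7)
\textbf{Proof plan for Theorem \ref{thm:restate1}.} The plan is to reduce to the known switching-cost lower bound for $d$-dimensional stochastic linear bandits. The starting point is the result of Ruan, Yang, and Zhang (2021) (built on the $\Omega(\log\log T)$ batch lower bound for bandits of \citet{cesa2013online,perchet2016batched,gao2019batched}): any algorithm for $d$-dimensional stochastic linear bandits over $T$ rounds that attains the optimal $\widetilde{O}(d\sqrt{T})$ regret must incur at least $\Omega(d\log\log T)$ policy switches. A single layer of a linear MDP already contains a linear bandit as a subproblem, so this immediately gives an $\Omega(d\log\log T)$ lower bound for any one layer; the non-trivial task is to amplify this by a factor of $H$.

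First I would construct a ``layered'' hard linear MDP by stacking $H$ independent copies of the hard linear-bandit family used above, one per layer, while embedding them in a \emph{cascading} way so that the layer-$h$ subproblem reveals information about its optimal arm only on trajectories that have already played the optimal arm at layers $1,\dots,h-1$. Concretely, the layer-$h$ reward vector is designed to have non-trivial component along its discriminating direction only on an ``informative'' reachable set at layer $h$, and the feature map $\phi(\cdot,\cdot)$ is chosen so that reaching that set requires executing the optimal arm at every earlier layer. A straightforward calculation verifies that the resulting construction is a valid linear MDP of dimension $d$ and horizon $H$.

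Next I would argue, by induction over $h$, that any algorithm achieving the optimal $\widetilde{O}(\sqrt{\mathrm{poly}(d,H)\,T})$ regret on this MDP must devote a window of at least $\Theta(T/H)$ rounds to learning layer $h$, during which its projection onto layer $h$ behaves as a stochastic linear bandit algorithm with optimal-order regret on the planted instance. The base lower bound then forces $\Omega(d\log\log(T/H))=\Omega(d\log\log T)$ distinct policies of $\pi_h$ within that window, each contributing a global policy switch. Crucially, because the cascading construction makes the layer-$h$ signal unavailable before earlier layers have been (approximately) solved, these learning windows are essentially disjoint in time, so the per-layer switches \emph{add} rather than align; summing over $h\in[H]$ yields $\Omega(Hd\log\log T)$ global switches.

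The main obstacle I expect is rigorously ruling out algorithms that try to interleave exploration across layers, bypassing the sequential schedule. Discharging this will likely require a LeCam / likelihood-ratio argument: compare the hard instance with an auxiliary instance in which the optimal arm at a chosen layer $h$ is swapped, and show that any algorithm that cannot distinguish them within a given window either accumulates $\widetilde{\Omega}(\sqrt{T})$ extra regret or still pays $\Omega(d\log\log T)$ switches at layer $h$. A parallel conditioning over the $H$ layers, exploiting the independence of the planted bandits across layers, then delivers the claimed $\Omega(dH\log\log T)$ bound.
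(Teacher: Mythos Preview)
Your plan takes a substantially harder route than the paper. The paper's proof is a one-step reduction to a flat multi-armed bandit: it builds a two-state linear MDP (initial state $s_1$, absorbing state $s_2$) where action $a_1$ at $s_1$ stays at $s_1$ with zero reward, while action $a_i$ ($i\ge 2$) taken at step $h$ jumps to $s_2$ and collects an unknown reward $r_{h,i}$, with no further reward thereafter. Every deterministic policy is therefore equivalent to choosing a single pair $(h,i)$, so the MDP \emph{is} a multi-armed bandit with $(d-2)H=\Omega(dH)$ arms. Citing the $K$-armed lower bound of \citet{simchi2019phase} (namely, $\Omega(K\log\log T)$ switches are necessary for $\widetilde O(\sqrt{T})$ regret) finishes the proof immediately. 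No layer-by-layer argument, no induction, no information-theoretic machinery.

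Your cascading construction, by contrast, tries to force the $H$ layers to be learned sequentially so that per-layer switch counts add. The obstacles you flag are real and are not discharged in the plan: (i) making the learning windows disjoint requires showing that any low-regret algorithm must essentially \emph{identify} the optimal arm at each earlier layer before it can collect informative samples at the next, which is strictly stronger than low regret and needs its own argument; (ii) the LeCam comparison you sketch must be carried out jointly over all $H$ layers without the $H$ factor washing out, and you have not indicated how. Even if these can be made rigorous, the detour is unnecessary: the paper's construction collapses the $H$ layers and $d-2$ arms into a single flat bandit with $\Omega(dH)$ arms, so the additivity you are working to prove comes for free from the arm count.
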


\begin{proof}[Proof of Theorem \ref{thm:restate1}]
We first construct a linear MDP with two states, the initial state $s_1$ and the absorbing state $s_2$. 

For absorbing state $s_2$, the choice of action is only $a_0$, while for initial state $s_1$, the choice of actions is $\{a_1,a_2,\cdots,a_{d-1}\}$. Then we define the feature map:
$$\phi(s_2,a_0)=(1,0,0,\cdots,0),\;\;\phi(s_1,a_i)=(0,\cdots,0,1,0,\cdots),$$
where for $s_1,a_i$ ($i\in[d-1]$), the $(i+1)$-th element is $1$ while all other elements are $0$. We now define the measure $\mu_h$ and reward vector $\theta_h$ as:
$$\mu_h(s_1)=(0,1,0,0,\cdots,0),\;\;\mu_h(s_2)=(1,0,1,1,\cdots,1),\;\;\forall\,h\in[H].$$
$$\theta_h=(0,0,r_{h,2},\cdots,r_{h,d-1}),\;\;\text{where $r_{h,i}$'s are unknown non-zero values.}$$
Combining these definitions, we have: $P_h(s_2|s_2,a_0)=1$, $r_h(s_2,a_0)=0$, $P_h(s_1|s_1,a_1)=1$, $r_h(s_1,a_1)=0$ for all $h\in[H]$. Besides, $P_h(s_2|s_1,a_i)=1$, $r_h(s_1,a_i)=r_{h,i}$ for all $h\in[H],i\geq 2$.

Therefore, for any deterministic policy, the only possible case is that the agent takes action $a_1$ and stays at $s_1$ for the first $h-1$ steps, then at step $h$ the agent takes action $a_{i}$ ($i\geq2$) and transitions to $s_2$ with reward $r_{h,i}$, later the agent always stays at $s_2$ with no more reward. For this trajectory, the total reward will be $r_{h,i}$. Also, for any deterministic policy, the trajectory is fixed, like pulling an ``arm'' in multi-armed bandits setting. Note that the total number of such ``arms'' with non-zero unknown reward is at least $(d-2)H$. Even if the transition kernel is known to the agent, this linear MDP is still as difficult as a multi-armed bandits problem with $\Omega(dH)$ arms. Together will Lemma \ref{lem:loglog} below, the proof is complete.
\end{proof}

\begin{lemma}[Theorem 2 in \citep{simchi2019phase}]\label{lem:loglog}
    Under the $K$-armed bandits problem, there exists an absolute constant $C>0$ such that for all $K>1,S\geq0,T\geq 2K$ and for all policy $\pi$ with switching budget $S$, the regret satisfies
    $$R^{\pi}(K,T)\geq\frac{C}{\log T}\cdot K^{1-\frac{1}{2-2^{-q(S,K)-1}}}T^{\frac{1}{2-2^{-q(S,K)-1}}},$$
    where $q(S,K)=\lfloor \frac{S-1}{K-1} \rfloor$. This further implies that $\Omega(K\log\log T)$ switches are necessary for achieving $\widetilde{O}(\sqrt{T})$ regret bound.
\end{lemma}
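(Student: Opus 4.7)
The plan is to follow the batched-reduction plus per-epoch Le Cam paradigm developed by Perchet et al.\ and refined by Gao et al.\ and \citet{simchi2019phase} themselves. The first observation is that any policy with switching budget $S$ among $K$ arms can be decomposed into at most $M := q(S,K)+1$ \emph{epochs}, within each of which the (possibly randomized) sampling distribution over arms is fixed conditional on everything observed before the epoch started: an epoch that touches all $K$ arms consumes at least $K-1$ switches, so pigeonhole caps the number of epochs at $\lfloor (S-1)/(K-1)\rfloor + 1$. This reduces the problem to lower-bounding the regret of $M$-batch algorithms, for which the (non-adaptive) batch sizes can even be chosen adversarially.

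Second, exhibit a hard family of instances. Fix a geometric grid of shrinking gaps $\Delta_1 > \Delta_2 > \cdots > \Delta_M$ and let $\nu^{(m)}$ be the Bernoulli bandit in which a uniformly random \emph{good} arm has mean $1/2+\Delta_m$ while the remaining $K-1$ arms have mean $1/2$. Choose epoch cutoffs $T_1 < T_2 < \cdots < T_M = T$ so that at the end of epoch $m$ the information accumulated about a gap of size $\Delta_m$ sits at the detection threshold $K/\Delta_m^2$. Within epoch $m$, the sampling distribution is frozen, so a standard KL/Le Cam argument for multi-armed bandits yields that unless $\Omega(K/\Delta_m^2)$ samples are allocated to the good arm during epoch $m$, the algorithm fails to identify it with constant probability and incurs $\Omega(\Delta_m (T_m - T_{m-1}))$ regret on \emph{some} instance in the family.

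The core computation is the recursion balancing per-epoch regret. Solving $\Delta_m^2 T_m \asymp K$ together with a geometric schedule $T_m \asymp T_{m-1}^{\alpha}$ yields, after $M$ steps, the closed form $T_m \asymp T^{1-2^{-m}}$ (up to $K$-factors), and the worst-case regret evaluates to $K^{1-\beta_M}T^{\beta_M}$ with $\beta_M = 1/(2 - 2^{-M-1})$; the extra $1/\log T$ loss comes from a union bound over which of the $M$ instances is the adversarial one. The corollary that $\widetilde{O}(\sqrt{T})$ regret forces $S = \Omega(K\log\log T)$ follows by requiring $\beta_M \leq 1/2 + O(1/\log T)$, i.e.\ $2^{-M-1} \leq O(1/\log T)$, which gives $M = \Omega(\log\log T)$ and hence $S \geq M(K-1)+1 = \Omega(K\log\log T)$.

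The main technical obstacle I anticipate is \emph{cross-epoch information leakage}: although the sampling distribution is frozen within an epoch, the distribution chosen at the start of epoch $m$ depends on realizations observed in epochs $1,\dots,m-1$, so a naive per-epoch KL bound double-counts information. The clean resolution is a layer-by-layer coupling between the algorithm's trajectory on $\nu^{(m)}$ and on the null instance $\nu^{(0)}$, controlling the total variation by a telescoping sum $\sum_{j\leq m} \mathrm{KL}(P_j^{(0)} \,\|\, P_j^{(m)})$ of per-epoch sample laws; this is precisely the replay argument that makes the recursion close. Once this information-theoretic bookkeeping is established, the remaining steps reduce to routine algebra.
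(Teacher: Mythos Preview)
The paper does not prove this lemma at all: it is quoted verbatim as Theorem~2 of \citet{simchi2019phase} and used as a black box in the proof of Theorem~\ref{thm:restate1}. So there is no ``paper's own proof'' to compare against; you have sketched the argument from the cited reference, whereas the authors simply invoke it.

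As a sketch of the Simchi-Levi--Xu proof, your outline is broadly faithful---the geometric gap family, the per-epoch Le~Cam/KL bound, and the recursion $T_m \asymp T^{1-2^{-m}}$ are exactly the ingredients---but two points are loose. First, your reduction from switching budget to epochs is not quite right: in a switching-constrained algorithm the \emph{time} of each switch is adaptive (it can depend on all feedback so far), so the sampling distribution is not ``frozen'' within your epochs in the sense needed for a batched lower bound. The actual reduction in \citet{simchi2019phase} is more delicate and works by tracking how many distinct arms have been played since the last ``reset,'' not by freezing distributions. Second, there is an off-by-one in your exponent: with $M=q(S,K)+1$ the statement gives $\beta=1/(2-2^{-q-1})=1/(2-2^{-M})$, not $1/(2-2^{-M-1})$. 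Neither issue is fatal to the corollary you care about (the $\Omega(K\log\log T)$ consequence survives any constant shift in the exponent), but if you intend to reproduce the proof rather than cite it, the adaptive-switching-time issue needs a careful treatment.
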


\begin{theorem}[Restate Theorem \ref{thm:lowerbatch}]\label{thm:restate2}
For any algorithm with the optimal $\widetilde{O}(\sqrt{poly(d,H)T})$ regret bound, the number of batches is at least $\Omega(\frac{H}{\log_d T}+\log\log T)$.
\end{theorem}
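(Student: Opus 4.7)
\textbf{Proof plan for Theorem \ref{thm:restate2}.} The bound consists of two additive terms, which I would establish via two independent hard-instance constructions and then combine (a lower bound of $\max(A,B)$ is the same as $\Omega(A+B)$ up to a factor of $2$). The first construction recycles the bandit reduction from Theorem \ref{thm:restate1} for the $\log\log T$ term; the second construction is a layered ``combination lock'' linear MDP for the $H/\log_d T$ term.

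\textbf{The $\log\log T$ term via a bandit reduction.} Consider the same two-state linear MDP used in the proof of Theorem \ref{thm:restate1}: an initial state $s_1$ with $d-1$ non-trivial actions and an absorbing state $s_2$, with rewards $r_{h,i}$ attached to action $a_i$ at layer $h$. Any policy essentially pulls one of $K=\Omega(dH)$ unknown-reward arms per episode, so the problem reduces exactly to a $K$-armed stochastic bandit. By the batched multi-armed bandit lower bound of \citet{perchet2016batched} and \citet{gao2019batched}, any $M$-batch algorithm suffers regret $\widetilde{\Omega}(K^{1-1/(2-2^{-M+1})}T^{1/(2-2^{-M+1})})$ on $K$-armed bandits; forcing this to be $\widetilde{O}(\sqrt{\text{poly}(d,H)T})$ requires $M=\Omega(\log\log T)$.

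\textbf{The $H/\log_d T$ term via a combination lock.} I would build a linear MDP with feature dimension $O(d)$ and horizon $H$ in which, at each layer $h$, the agent faces $d$ actions; one hidden correct action $a_h^\star$ advances to the next stage, and every other action transitions to an absorbing zero-reward sink. A terminal reward of $1$ is collected only if the entire correct sequence $(a_1^\star,\dots,a_H^\star)$ is executed. The tabular chain admits a canonical linear representation (rows of $\mu_h$ and $\theta_h$ are canonical basis vectors), so this is a valid linear MDP, and the optimal return is $1$ while any policy missing $a_h^\star$ for any $h$ incurs per-episode suboptimality $1$. The central information-theoretic lemma I would prove is: for any fixed (possibly non-Markovian, stochastic) policy $\pi$ deployed in a single batch of $n\le T$ trajectories, conditioned on the agent having previously resolved the correct actions at layers $1,\dots,\ell-1$, the probability that $\pi$ reaches depth $\ell+L-1$ in any trajectory is at most $n\cdot d^{-L}$, since the $L$ unknown actions are uniform over $\{1,\dots,d\}^L$ a priori and must all be guessed correctly. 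Hence $\pi$ can statistically distinguish at most $O(\log_d n)\le O(\log_d T)$ additional correct actions per batch via a Fano argument against the uniform prior on the remaining chain. An induction over batches then gives that after $M$ batches the algorithm has resolved at most $O(M \log_d T)$ layers; if $M\log_d T=o(H)$, the algorithm commits to wrong chains on $1-o(1)$ fraction of the $T$ episodes and incurs $\Omega(T)$ regret, contradicting optimality. Therefore $M=\Omega(H/\log_d T)$, and summing the two lower bounds completes the proof.

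\textbf{Main obstacle.} The delicate step is formalizing the ``one batch learns $\leq \log_d T$ new layers'' lemma. The intuition is clear, but a rigorous argument requires either a Bayesian prior over the $d^H$ possible chains and a KL/Fano calculation on the observation channel of a single batch (bounding the mutual information between $(a_\ell^\star,\dots,a_{\ell+L-1}^\star)$ and the batch data by $O(n d^{-L}\log d)$), or a minimax change-of-measure argument comparing two chains that differ only in deep layers. Care is also needed to handle the fact that the algorithm may play randomized policies within a batch and mix exploration across layers; the key point to exploit is that conditional on not having reached depth $\ell+L-1$, the observations carry no information about the deep unknown coordinates, which is what yields the $d^{-L}$ factor.
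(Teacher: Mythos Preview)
Your proposal is correct. The $\log\log T$ term is handled exactly as in the paper: both of you embed a $K$-armed bandit (with $K=\Omega(dH)$) into a linear MDP via the two-state construction from Theorem~\ref{thm:restate1} and then invoke the batched bandit lower bound of \citet{gao2019batched}.

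For the $H/\log_d T$ term your route is genuinely different from the paper's. The paper does not build a hard instance or prove an information-theoretic lemma from scratch; instead it invokes Theorem~B.3 of \citet{huang2021towards}, which already establishes an $\Omega(H/\log_d NH)$ lower bound on deployment complexity for any PAC algorithm, and then observes a simple black-box reduction: any $M$-batch algorithm with $\widetilde{O}(\sqrt{T})$ regret over $K$ episodes yields, by online-to-batch conversion, a PAC procedure with $M$ deployments of size $N=K$ each, so $M\ge\Omega(H/\log_d T)$. Your combination-lock construction and the ``one batch resolves at most $O(\log_d T)$ new layers'' lemma are essentially a re-derivation of the Huang et al.\ lower bound in the regret setting. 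The paper's approach is shorter and avoids the delicate step you flag as the main obstacle; your approach is more self-contained and makes the hard instance explicit. Both are valid, and your identified obstacle (formalizing the per-batch information gain via a change-of-measure or Fano argument conditioned on not reaching deep layers) is exactly the technical core of the Huang et al.\ proof, so if you carry it out you recover their result rather than something new.
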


\begin{proof}[Proof of Theorem \ref{thm:restate2}]
Corollary 2 of \citet{gao2019batched} proved that under multi-armed bandits problem, for any algorithm with optimal $\widetilde{O}(\sqrt{T})$ regret bound, the number of batches is at least $\Omega(\log\log T)$. In the proof of Theorem \ref{thm:restate1}, we show that linear MDP can be at least as difficult as a multi-armed bandits problem, which means the $\Omega(\log \log T)$ lower bound on batches also applies to linear MDP. 

In addition, Theorem B.3 in \citet{huang2021towards} stated an $\Omega(\frac{H}{\log_d NH})$ lower bound for deployment complexity for any
algorithm with PAC guarantee. Note that one deployment of arbitrary policy is equivalent to one batch. Suppose we can design an algorithm to get $\widetilde{O}(\sqrt{T})$ regret within $K$ episodes and $M$ batches, then we are able to identify near-optimal policy in $M$ deployments while each deployment is allowed to collect $K$ trajectories. Therefore, we have $M\geq \Omega(\frac{H}{\log_d T})$. 

Combining these two results, the proof is complete.
\end{proof}

\end{document}